\def\1{\bm{1}}
\DeclareMathAlphabet{\mathsfit}{\encodingdefault}{\sfdefault}{m}{sl}
\SetMathAlphabet{\mathsfit}{bold}{\encodingdefault}{\sfdefault}{bx}{n}
\def\gC{{\mathcal{C}}}
\def\gE{{\mathcal{E}}}
\def\gF{{\mathcal{F}}}
\def\gG{{\mathcal{G}}}
\def\gH{{\mathcal{H}}}
\def\gL{{\mathcal{L}}}
\def\gM{{\mathcal{M}}}
\def\gN{{\mathcal{N}}}
\def\gP{{\mathcal{P}}}
\def\gQ{{\mathcal{Q}}}
\def\gR{{\mathcal{R}}}
\def\gS{{\mathcal{S}}}
\def\gT{{\mathcal{T}}}
\def\gU{{\mathcal{U}}}
\def\gV{{\mathcal{V}}}
\def\gZ{{\mathcal{Z}}}
\newcommand{\E}{\mathbb{E}}
\newcommand{\R}{\mathbb{R}}
\newcommand{\Var}{\mathrm{Var}}
\DeclareMathOperator*{\argmin}{arg\,min}
\theoremstyle{plain}
\newtheorem{theorem}{Theorem}[section]
\newtheorem{prop}[theorem]{Proposition}
\newtheorem{lemma}[theorem]{Lemma}
\newtheorem{corollary}[theorem]{Corollary}
\theoremstyle{definition}
\newtheorem{definition}[theorem]{Definition}
\newtheorem{assumption}[theorem]{Assumption}
\theoremstyle{remark}
\newtheorem{remark}[theorem]{Remark}
\definecolor{darkblue}{RGB}{3,3,110}
\definecolor{darkblue2}{RGB}{10,10,110}
\newcommand{\N}{\mathbb{N}}
\newcommand{\Z}{\gZ}
\newcommand{\dd}{\mathrm{d}}
\newcommand{\Lip}{\mathrm{Lip}}
\newcommand{\polylog}{\operatorname{polylog}}
\newcommand{\norm}[1]{\left\|#1\right\|}
\newcommand{\Id}{\mathbf I}
\icmltitlerunning{In-Context Learning Is Provably Bayesian Inference}
\begin{document}

\twocolumn[
  \icmltitle{In-Context Learning Is Provably Bayesian Inference: \\ A Generalization Theory for Meta-Learning}



\begin{icmlauthorlist}
  \icmlauthor{Tomoya Wakayama}{aip}
  \icmlauthor{Taiji Suzuki}{aip,utokyo}
\end{icmlauthorlist}

\icmlaffiliation{aip}{RIKEN Center for Advanced Intelligence Project (AIP), Tokyo, Japan}
\icmlaffiliation{utokyo}{Department of Mathematical Informatics, Graduate School of Information Science and Technology, The University of Tokyo, Tokyo, Japan}

\icmlcorrespondingauthor{Tomoya Wakayama}{tomoya.wakayama@riken.jp}
  \icmlkeywords{in-context learning, meta-learning, Bayesian inference, online learning theory, optimal transport}

  \vskip 0.3in
]

\printAffiliationsAndNotice{}

\begin{abstract}
This paper develops a finite-sample statistical theory for in-context learning (ICL), analyzed within a meta-learning framework that accommodates mixtures of diverse task types. We leverage a Bayes risk identity that separates the total ICL risk into two orthogonal components: Bayes Gap and Posterior Variance. The Bayes Gap quantifies how well the trained model approximates the Bayes-optimal in-context predictor. For a uniform-attention Transformer, we derive a non-asymptotic upper bound on this gap, which explicitly clarifies the dependence on the number of pretraining prompts and their context length. The Posterior Variance is a model-independent risk representing the intrinsic task uncertainty. Our key finding is that this term is determined solely by the difficulty of the true underlying task, while the uncertainty arising from the task mixture vanishes exponentially fast with only a few in-context examples. Together, these results provide a unified view of ICL: the uniform-attention Transformer selects the optimal meta-algorithm during pretraining and rapidly converges to the optimal algorithm for the true task at test time.
\end{abstract}

\section{Introduction}\label{sec:introduction}
Large language models (LLMs) have moved far beyond classic NLP benchmarks into complex, real-world workflows \citep{naveed2024comprehensiveoverviewlargelanguage,zhao2025surveylargelanguagemodels} such as code assistants and generators~\citep{github_copilot} in software engineering, Med-PaLM~2~\citep{Singhal2025MedPaLM2} in healthcare, text-to-SQL systems~\citep{gao2024text2sql,shi2024survey_tosql} in business intelligence, and vision-language-action models~\citep{kim24openvla,pmlr-v229-zitkovich23a} in robotics. In particular, since GPT-3, modern LLMs have demonstrated a striking ability to adapt to new tasks from only a handful of input-output exemplars, without parameter updates~\citep{brown2020Language}. This phenomenon, known as in-context learning (ICL), appears across diverse datasets and task formats and is at the heart of these workflows~\citep{min2022rethinking,dong-etal-2024-survey}. These deployments share common constraints: inference-time (test-time) prompts are short, and upstream pretraining covers heterogeneous task types. A concrete, finite-sample account of predictive error under these constraints is therefore of key importance to practitioners.

Numerous studies investigate the behavior of ICL. \citet{wang2023large,akyurek2023what,vonOswald2023transformer,Li2023algorithm,bai2023transformers,garg2022case,mahankali2024one} have empirically or theoretically shown that (simplified) Transformers can implement canonical estimators and learning procedures in context (e.g., least squares, ridge and gradient-descent steps), sometimes achieving near-Bayes-optimal performance on linear tasks. Concurrently, \citet{jeon2024} provide an information-theoretic analysis, and \citet{kim2024transformers} present nonparametric rates for a linear-attention Transformer, with subsequent progress~\citep{wang2024incontext,oko2024pretrained,nishikawa2025nonlinear}. 
A compelling perspective frames ICL as a form of implicit Bayesian inference~\citep{xie2022an,wang2023large,panwar2024incontext,arora2025bayesian,reuter2025incontext,pmlr-v258-zhang25d}. Although this viewpoint provides an explanatory framework for ICL's capabilities, the aforementioned theories have not fully leveraged the theoretical relationship between ICL and Bayes. Hence, they lack a statistical theory that can (i) jointly couple pretraining size $N$ and prompt length $p$ and (ii) accommodate heterogeneous mixtures of task types, a regime common in LLM pretraining.

We adopt a Bayes-centric framework that offers a concrete account of the sources of error and clarifies how they shrink with $p$ and $N$. Specifically, viewing ICL risk as the Bayes risk~\citep[e.g., \S 5.3.1.2 of][]{Murphy2022intro}, we treat the Bayes-optimal predictor as the optimal in-context predictor and leverage the following classical identity under squared loss (Proposition~\ref{thm:main}):
\[
    \text{ICL risk} = \text{Bayes Gap} + \text{Posterior Variance},
\]
where the \emph{Bayes Gap} measures the discrepancy between a pretrained model and the optimal in-context (Bayes) predictor, and the \emph{Posterior Variance} is independent of the model and shrinks as the observed context length grows. Conceptually, performance limits at inference time are governed by Bayesian uncertainty about the test task (i.e., the task at inference time), not by pretraining alone. Building on this view, we make the following contributions:
\begin{enumerate}[labelindent=2pt,leftmargin=*,itemsep=2pt, parsep=1pt, topsep=0pt, partopsep=0pt]
    \item \textbf{Provide non-asymptotic upper bounds that couple the number of pretraining prompts $N$ and 
    their context length $p$ (Theorem~\ref{thm:upper-bayes}).} For uniform-attention Transformers, we leverage sequential learning theory~\citep{Rakhlin2010online}, develop optimal transport-based approximation theory, and then obtain (ignoring logarithmic factors)
    \[
      \E R_{\mathrm{BG}}(M_{\hat\theta})\,\, \lesssim \,\, \underbrace{m^{-2\alpha/d_{\mathrm{eff}}}}_{\text{approximation}}
       + \underbrace{m(pN)^{-1} + N^{-1}}_{\text{pretraining generalization}}.  
    \]
    Here $m$ is the number of learned features in the Transformer, $d_{\mathrm{eff}}$ is the effective dimension, and $\alpha$ is a H\"older exponent. The rate $\propto m/(pN)$ clarifies the dependence on both $p$ and $N$, which earlier theories on ICL~\citep{kim2024transformers,wu2024how,zhang2024trained} have not fully captured. Importantly, the result suggests that \textbf{uniform-attention Transformers select a Bayes-optimal meta-algorithm during pretraining}.

    \item \textbf{Explain in-context error via the test-task difficulty (Theorem~\ref{thm:mixture-to-true}).} 
    In a mixture of task types, the posterior over the task index concentrates exponentially fast with respect to the observed context length, and the irreducible term $R_{\mathrm{PV}}$ is upper bounded by the minimax risk of the test (true) task family. Without assuming specific algorithms~\citep{akyurek2023what,bai2023transformers,zhang2024trained}, our result implies that even in mixed-task settings \textbf{the Bayes-optimal meta-algorithm rapidly converges to the optimal algorithm for the true task at inference time}. This finding is consistent with empirical reports~\citep{panwar2024incontext,arora2025bayesian}, which show that ICL often behaves like Bayesian inference, particularly in task-mixture settings.

    \item \textbf{Characterize stability under input-distribution shift (Theorem~\ref{thm:ood-wass}).}
    We demonstrate that under input-distribution shift from pretraining data to inference-time prompt, the Bayes Gap incurs an out-of-distribution (OOD) penalty proportional to the Wasserstein distance between the distributions, while the Posterior Variance is intrinsic to the target domain. \citet{zhang2024trained} have noted that ICL is vulnerable to input-distribution shift in some settings, whereas our results specifically show that only the Bayes Gap increases in proportion to the magnitude of the shift.
\end{enumerate}

The paper is organized as follows. Section~\ref{sec:setup} formalizes the meta-learning prompt model, introduces a uniform-attention Transformer, and states assumptions, followed by a primer on the Bayes-optimal in-context predictor. Section~\ref{sec:main} presents the risk decomposition and then analyzes (i) the Bayes Gap (Section~\ref{sec:bayes-gap}), (ii) the Posterior Variance (Section~\ref{sec:posterior-variance}), and (iii) OOD stability under input-distribution shift (Section~\ref{sec:ood-stability}). Section~\ref{sec:experiments} provides the numerical experiments, supporting our theories.
Section~\ref{sec:conclusion} concludes with limitations and future work. The Appendix contains a list of notation, experimental details, all technical proofs, auxiliary lemmas, and extended discussions.

\textbf{Related Work}

\textit{(A) ICL as Bayesian inference.}
ICL has been framed as (implicit) Bayesian inference under structured pretraining. \citet{xie2022an} show that mixtures of hidden Markov model-style documents enable Transformers to perform posterior prediction; \citet{panwar2024incontext} show that Transformers mimic Bayes across task mixtures. \citet{Lin2024} reconcile task retrieval versus task learning with a probabilistic pretraining model. \citet{wang2023large} view LLMs as latent-variable predictors enabling principled exemplar selection. \citet{reuter2025incontext} empirically show full Bayesian posterior inference in-context, and \citet{arora2025bayesian} demonstrate Bayesian scaling laws predicting many-shot reemergence of suppressed behaviors. Our results explicitly use Bayesian properties for ICL theory and provide a concrete non-asymptotic validation both in pretraining and at inference time. Note that \citet{ma2025provable} independently and concurrently analyze Bayesian adaptivity for softmax-attention Transformers, while our theory focuses on the uniform-attention class and derives explicit $p–N$ scaling, minimax-risk reduction, and prompt-level Wasserstein OOD stability.

\textit{(B) ICL as Meta-Learning.} 
ICL is widely understood as meta-learning~\citep{brown2020Language}. Transformers implement gradient-descent-style updates within their forward pass, acting as meta-optimizers that perform implicit fine-tuning \citep{vonOswald2023transformer,dai2022}. Models can be meta-trained to execute general-purpose in-context algorithms across tasks \citep{kirsch2022}. From a learning-to-learn perspective, ICL’s expressivity explains few-shot strength while exposing generalization limits \citep{wu2025}. Beyond single tasks, meta-in-context learning shows recursive adaptation of ICL strategies without parameter updates \citep{codaforno2023}. From this perspective, we theoretically clarify how ICL identifies the task at inference time and solves the true task.

\section{Problem Setup}\label{sec:setup}

\subsection{Meta-Learning: Mixture of Regression Types}
We consider a meta-learning framework that accommodates a finite number of distinct task types (task families), which reflects heterogeneous pretraining prompts.

\begin{definition}[Prompt-Generating Process]\label{def:PGP}
The data-generating process for prompts proceeds as follows:
\begin{enumerate}[labelindent=2pt,leftmargin=*,itemsep=2pt, parsep=1pt, topsep=0pt, partopsep=0pt]
  \item Sample a task type: $I \sim \gP_I =\mathrm{Categorical}(\bm{\alpha})$, i.e., $\Pr (I=i) =\alpha_i>0$ for $i=1,\ldots,T$. 
  \item Given $I=i$, sample a task function: $f \sim \gP_{F_i}$, where $\gP_{F_i}$ is a distribution on the $i$-th function space $F_i \subset  \{f:\R^{d_{\mathrm{feat}}}\to\R \}$.
  \item For $k = 1, \ldots, p+1$:
    \begin{itemize}[leftmargin=10pt,topsep=0.01cm,itemsep=0.05cm]
      \item Sample an $\R^{d_{\mathrm{feat}}}$-dimensional input: $\bm{x}_k \stackrel{\mathrm{i.i.d.}}{\sim} \gP_X.$
      \item Generate output: $y_k = f(\bm{x}_k) + \varepsilon_k$, where $\varepsilon_k \stackrel{\mathrm{i.i.d.}}{\sim} \gP_\varepsilon$ is sub-Gaussian random noise with $\E[\varepsilon_k] = 0$, $\mathrm{Var}(\varepsilon_k) = \sigma_\varepsilon^2$, and $\varepsilon_k \perp (f, \bm{x}_k)$. 
    \end{itemize}
  \item Form the length-$p$ (complete) prompt: $P = (\underbrace{\bm{x}_1, y_1, \dots, \bm{x}_p, y_p}_{\text{context}~D^p}, \underbrace{\bm{x}_{p+1}}_{\text{query}})$.
\end{enumerate}
\end{definition}

This setting allows for a mixture of $T\,(<\infty)$ different task types (task families), such as linear regression type $F=\{\bm{x}\mapsto \bm{w}^\top\bm{x}+b\}$ and basis-function regression type $F=\{\bm{x}\mapsto \sum_{j=0}^R a_jg_j(\bm{x})\}$, where $g_j$ are, for example, Hermite polynomials. Note that Step~1 of Definition~\ref{def:PGP} selects the task family $F_i$ (via $I$), and Step~2 samples a particular function $f$ from that family; in the linear-regression case, this corresponds to choosing coefficients such as $\bm{w}$ and $b$. While many existing ICL theories~\citep[e.g.,][]{zhang2024trained,kim2024transformers,oko2024pretrained} focus on i.i.d. prompts from a single task family, our setting is more general.

A length-$k$ partial prompt\footnote{In standard statistical terminology, a complete prompt corresponds to a dataset together with a new query input, a partial prompt corresponds to a prefix of the dataset with the next query, and a task family corresponds to a model/function class equipped with a prior.} is denoted by $P^k = (\bm{x}_1, y_1, \dots, \bm{x}_k, y_k, \bm{x}_{k+1})$ and its context dataset (examples) by $D^k = \{(\bm{x}_j, y_j)\}_{j=1}^k \in \R^{kd_{\mathrm{eff}}}$, where $d_{\mathrm{eff}}:=d_\text{feat}+1$. We fix a maximum context length $p$. At inference time, after observing $k\le p$ examples, we sequentially evaluate the risk of predicting $y_{k+1}$ from $P^k$.

\subsection{Transformer Architecture}

We begin by briefly reviewing the standard Transformer architecture. The standard Transformer \citep{vaswani2017attention} processes sequences through self-attention mechanisms: $\text{Attention}(Q,K,V) = \text{softmax}\big(\tfrac{QK^\top}{\sqrt{d_k}}\big)V$,
where queries $Q$, keys $K$, and values $V$ are linear projections of the input embeddings. Each Transformer layer consists of self-attention and a position-wise feed-forward network.

In this work, we adopt a specialized uniform-attention ($Q=K=0$) Transformer architecture. Conditional on the task function (Definition~\ref{def:PGP}), the examples in each prompt are i.i.d. and hence exchangeable. Thus, a permutation-invariant mechanism like uniform attention is sufficient, which motivates our choice of the following architecture. Further justification is provided in Appendix~\ref{sec:bayes-properties}.
\begin{definition}[Uniform-attention Transformer Architecture] \label{def:transformer}
We study a uniform-attention (mean-pooling) Transformer of the form:
\[
    M_\theta(P^k) := \rho_\theta \Big(\ \frac{1}{k}\sum_{i=1}^{k}\phi_\theta(\bm{x}_i, y_i), \bm{x}_{k+1}\ \Big).
\]
Here, the feature encoder $\phi_\theta: \gU \to \Delta^{m-1}$ and the decoder $\rho_\theta: \Delta^{m-1} \times \gC \to \mathbb{R}$, where $\Delta^{m-1}$ denotes the $(m-1)$-dimensional probability simplex, $\gU$ denotes the example domain (the space of $(\bm{x}_i, y_i)\in \mathbb{R}^{d_{\mathrm{eff}}}$) and $\gC$ denotes the query domain (the space of $\bm{x}_{k+1}$), have the following structures:

\emph{Feature Encoder Network $\phi_\theta$}: The feature encoder consists of a depth-$D_\phi$ feedforward ReLU network followed by a renormalization layer:
\begin{align}
\phi_\theta(\bm{x}, y) &:= \text{Renorm}_\tau \circ g_\theta(\bm{x}, y), \\
g_\theta(\bm{u}) &:= W^{(D_\phi)}\sigma\big(\cdots \sigma\big(W^{(1)}\bm{u} + \bm b^{(1)}\big)\cdots\big) + \bm b^{(D_\phi)},
\end{align}
where $\bm{u} = [\bm{x}^\top, y]^\top \in \mathbb{R}^{d_{\mathrm{eff}}}$, $\sigma(\cdot) = \max\{0, \cdot\}$ is the ReLU activation applied element-wise, $W^{(\ell)} \in \mathbb{R}^{n_{\ell} \times n_{\ell-1}}$ are weight matrices with $n_0 = d_\text{eff}$ and $n_{D_\phi} = m$, and the renormalization layer is defined as $\text{Renorm}_\tau(\bm{s}) = \frac{\sigma(\bm{s}) + \frac{\tau}{m}\mathbf{1}}{\mathbf{1}^\top \sigma(\bm{s}) + \tau}~(\tau \in (0,1])$. This ensures $\phi_\theta(\bm{x}, y) \in \Delta^{m-1}$.

\emph{Decoder Network $\rho_\theta$}: The decoder is a depth-$D_\rho$ feedforward ReLU network that jointly processes the aggregated features and query:
\begin{align}
\rho_\theta(\bm{z}, \bm{c}) &:= \text{clip}_{[-B_M, B_M]}\big(h_\theta(\bm{z}, \bm{c})\big), \\
h_\theta(\bm{v}) &:= W^{(D_\rho)}\sigma\big(\cdots \sigma\big(W^{(1)}\bm{v} + \bm b^{(1)}\big)\cdots\big) + b^{(D_\rho)},
\end{align}
where $\bm{v} = [\bm{z}^\top, \bm{c}^\top]^\top \in \mathbb{R}^{m + d_\text{feat}}$, $W^{(\ell)} \in \mathbb{R}^{n_{\ell} \times n_{\ell-1}}$ with $n_0 = m + d_\text{feat}$ and $n_{D_\rho} = 1$, and the clipping operation ensures $|M_\theta(P^k)| \leq B_M$.

\emph{Size of the Networks:}
Throughout, $\|\cdot\|_2$ denotes the Euclidean norm for vectors and the spectral norm for matrices. For depth-$D$ ReLU network $\gT_\theta$, define the spectral product $S(\gT_\theta):=\prod_{d=1}^{D}\big\|W^{(d)}\big\|_2$. There exist fixed constants $C_\phi,\ C_\rho>0$ (independent of $p,N$) such that $S(\phi_\theta) \le C_\phi m^{1/d_{\mathrm{eff}}}$ and $S(\rho_\theta) \le C_\rho m^{1/2}$. For the feature encoder $\phi_\theta$, we assume that a depth of $D_\phi = O(\log m)$ and the number of trainable parameters is $O(m \log m)$. Finally, let $\Theta$ denote the parameter space that satisfies these conditions.
\end{definition}

\subsection{Risk, Training, and Assumptions}

Throughout, we use the squared loss $\ell(u,v)=(u-v)^2$. The \emph{ICL risk} of a predictor $M$ is defined as the expected loss averaged over sequence lengths $k=1,\ldots,p$ under the generative process in Definition~\ref{def:PGP}:
\[
    R(M) = \frac{1}{p} \sum_{k=1}^p \E_{I, f, D^k, \bm{x}_{k+1}} \left[ \ell\left(f(\bm{x}_{k+1}), {M}(P^k) \right) \right],
\]
where the expectation is taken over the joint distribution of the task $I \sim \mathcal{P}_I$, the function $f \sim \mathcal{P}_{F_I}$, the context $D^k \sim \mathcal{P}_{X,Y\mid f}^{\otimes k}$ (where $\mathcal{P}_{X,Y \mid f}$ denotes the joint distribution conditional on $f$), and the query $\bm{x}_{k+1} \sim \mathcal{P}_{X}$. 
In practice, pretraining is performed using a dataset of $N$ prompts of length $p$. The empirical risk minimizer (ERM) is given by
\begin{equation}\label{eq:erm}
    \hat\theta=\arg\min_{\theta\in\Theta}\ \frac1{pN}\sum_{j=1}^N\sum_{k=1}^p \ell\bigl(y_{j,k+1}, M_\theta(P^k_j)\bigr).
\end{equation}

\begin{remark}[Meta-train/test protocol]
The pretraining dataset consists of $N$ i.i.d. prompts $\{P_j\}_{j=1}^N$, each generated by first sampling $I_j$, next drawing $f_j$, and then sampling context examples and a query from the same $\gP_X$. At inference time, $I^\text{test}$ and $f^\text{test}$ are drawn from the same mixture, and the risk $R(M)$ is averaged over new prompts from the same meta-distribution.
\end{remark}

For the subsequent analysis, we make the following assumptions about the task function and the inputs.
\begin{assumption}[Bounded task functions]\label{ass:bounded_f}
There exists $B_f > 0$ such that for any $i$ and $f\in F_i$, $|f(\bm{x})|\le B_f$ for all $\bm{x}$ in the support of $\gP_X$.
\end{assumption}

\begin{assumption}[Bounded inputs and conditional independence]\label{ass:bounded_x}
There exists $B_X<\infty$ such that $\|\bm{x}\|_2\le B_X$, $\gP_X$-almost surely. $\{\bm{x}_k\}_{k}$ are i.i.d. samples from $\gP_X$, and, conditional on a sampled task function $f$, the pairs $\{(\bm{x}_k,y_k)\}_k$ are conditionally independent across $k$.
\end{assumption}

\begin{figure*}[ht]
    \centering
    \includegraphics[width=0.7\linewidth]{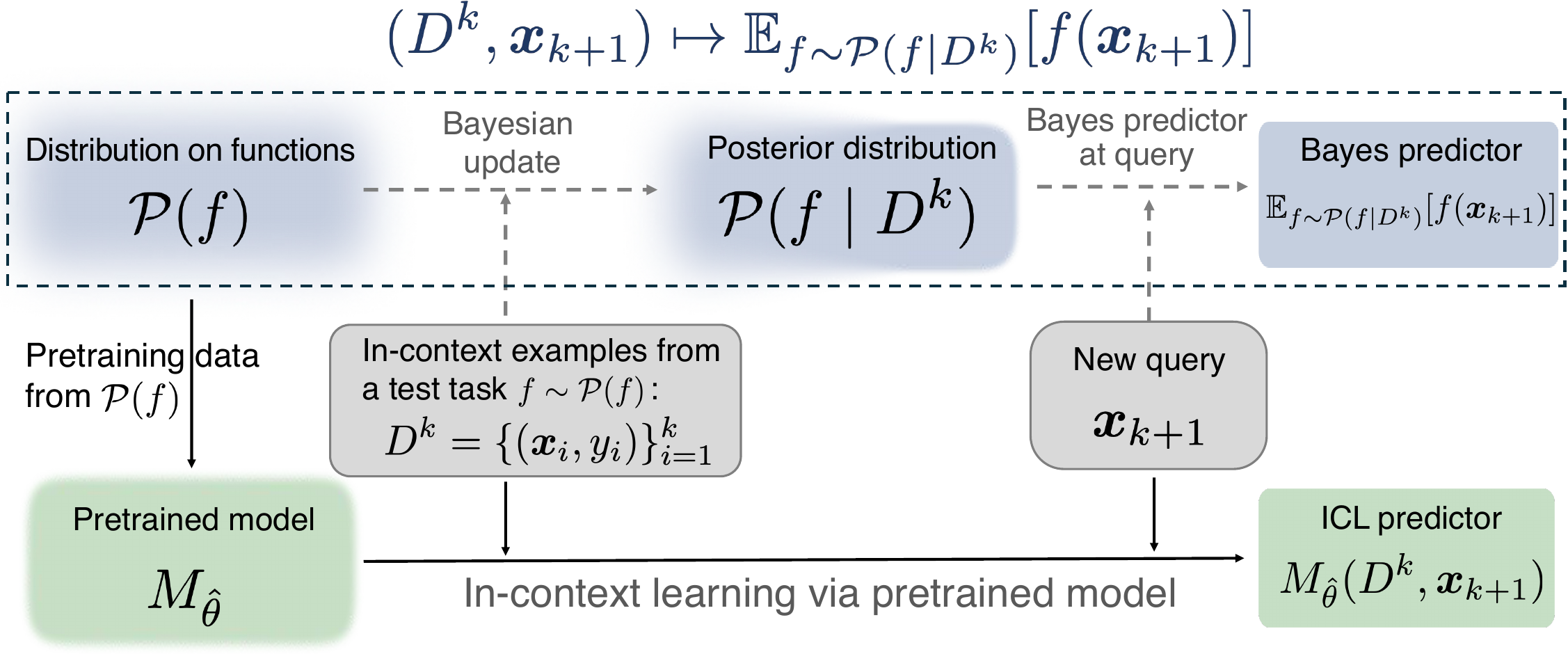}
    \caption{Bayesian view of ICL. The upper path: the process of computing the optimal prediction is $(D^{k},\bm{x}_{k+1})\mapsto \E_{f\sim\gP(f\mid D^k)}[f(\bm{x}_{k+1})]$ given $\gP(f)$. The lower path: since $\gP(f)$ is unknown, the model $M_{\hat\theta}$, pretrained on data from $\gP(f)$, aims to emulate this process via $(D^{k},\bm{x}_{k+1})\mapsto M_{\hat\theta}(D^k,\bm{x}_{k+1})$.}
    \label{fig:icl-bayes}
\end{figure*}

\subsection{Primer on the Bayes-Optimal In-Context Predictor}\label{sec:primer}

In this section, we characterize the optimal predictor that minimizes the ICL risk. Since the ICL risk is equivalent to the Bayes risk~\citep[e.g., \S 5.3.1.2 of][]{Murphy2022intro}, the theoretically optimal in-context predictor is the Bayes predictor, i.e., the posterior mean of the function value given the context in this setting. We explain this point below.

The ICL risk minimization problem is to find a predictor $M$ that solves $\min_{{M}} R({M})$. Using the law of total expectation, we can rewrite $R({M})$ as an expectation over the context $D^k$. For each context, we aim to minimize the conditional expectation of the loss:
\[
    \min_{M}\; \E_{D^k} \Big[ \E_{I,f,\bm{x}_{k+1}\mid D^k} \big[\ell\big(f(\bm{x}_{k+1}),M(P^k)\big)\big] \Big],
\]
where the inner expectation considers $I\sim \gP_{I\mid D^k}$, $f\sim \gP_{F_I\mid D^k}$, and $\bm{x}_{k+1}\sim \gP_X$.
To minimize the overall expectation, it suffices to minimize the inner conditional expectation for each fixed context $D^k$. The minimizer is exactly the definition of the {\it Bayes estimator} \citep{BernardoSmith1994,robert2007bayesian} because the inner conditional expectation is the Bayes risk, which is the expected predictive loss $\E_{\bm{x}_{k+1}\sim \gP_{X}}\left[ \ell\left(f(\bm{x}_{k+1}), {M}(P^k)  \right)\right]$ with respect to the {\it Bayes posterior distribution} $\E_{ I \sim\gP_{I \mid D^k}}[\gP_{F_I \mid D^k}]$. Specifically, for the squared error loss, the value $M(P^k)$ that minimizes the conditional mean squared error, $\E_{ I \sim\gP_{I \mid D^k}}\E_{ f\sim\gP_{F_I \mid D^k}}\E_{\bm{x}_{k+1}\sim \gP_{X}}\left[ \ell\left(f(\bm{x}_{k+1}), {M}(P^k) \right)\right]$, is the Bayes posterior mean \citep[e.g.,][]{Murphy2022intro,LehmannCasella1998}. Thus, the optimal predictor $M_{\mathrm{Bayes}}$ that minimizes the ICL risk is the posterior mean:
\begin{align*}
    M_{\mathrm{Bayes}}(P^k) 
    &:= \E_{ I \sim\gP_{I \mid D^k}}\E_{ f\sim\gP_{F_I \mid D^k}} [f(\bm{x}_{k+1})]  \\
    & \equiv \argmin_{{M}} R({M}).
\end{align*}
This Bayes predictor serves as the theoretical target during pretraining (Figure~\ref{fig:icl-bayes}), and the \textbf{Bayes Gap}, which we introduce next, measures how well the pretrained model $M_{\hat{\theta}}$ emulates this predictor.
Also, the Bayes predictor can be viewed as performing implicit prompt learning~\citep{li2021prefixtuning,lester-etal-2021-power}: given a context $D^k$, it infers a task-specific representation ($\E_{ I \sim\gP_{I \mid D^k}}[\gP_{F_I \mid D^k}]$) which is analogous to a learned prompt in prompt learning.

\textbf{Posterior notation.}\\
Let $\pi_i(D^k):=\Pr(I=i\mid D^k)$ and $\gP(f\mid D^k)=\sum_{i=1}^T \pi_i(D^k) \gP_{F_i}(f\mid D^k,I=i)$. We write the Bayes predictor as $M_{\mathrm{Bayes}}(P^k)=\E_{f\sim \gP(f\mid D^k)}[f(\bm{x}_{k+1})]$. Throughout, we work on standard Borel spaces so that regular conditional distributions exist, suggesting $\Pr(f \in \cdot\mid D^k)$, $\E[f(\bm{x}_{k+1})\mid D^k]$ and $\mathrm{Var}(f(\bm{x}_{k+1})\mid D^k)$ are well-defined. Note that as the query $\bm{x}_{k+1}$ is drawn independently of $f$ and $D^k$, $\gP(f \mid P^k) = \gP(f \mid D^k)$.

\textbf{Permutation invariance of the Bayes predictor.}\\
For each $k$, we write $\bm{u}_k=(\bm{x}_k,y_k)\in\gU$ and $\bm{c}=\bm{x}_{k+1}\in\gC$ and view the Bayes predictor $M_{\mathrm{Bayes}}(P^k)$ as $M_{\mathrm{Bayes}}(\bm{u}_{1:k},\bm{c})$ here. Since the posterior $\gP(f\mid D^k)$ depends on $D^k$ only through the multiset $\{(\bm{x}_i,y_i)\}_{i=1}^k$, for any permutation $\pi$ of $\{1,\dots,k\}$, $M_{\mathrm{Bayes}}(\bm{u}_{1:k},\bm{c}) = M_{\mathrm{Bayes}}\left(\bm{u}_{\pi(1)},\ldots,\bm{u}_{\pi(k)},\bm{c}\right)$.
Thus, the Bayes predictor is a symmetric set functional, which justifies using the uniform-attention Transformer to emulate it. Specifically, averaging simplex-valued features produces a summary statistic that is permutation-invariant and has a fixed total mass of $1$ irrespective of $k$. See Appendix~\ref{sec:bayes-properties} for more details.

When prompts exhibit cross-example dependencies, the Bayes predictor can be order-dependent, and capturing it may require non-uniform attention. Still, the exchangeable prompt settings cover a standard regime where demonstrations are randomly sampled and their order is not semantically important~\citep{brown2020Language,mukherjee2021fewshot}.

\section{Risk Analysis of In-Context Learning}\label{sec:main}
In this section, we first present a risk identity (Proposition~\ref{thm:main}), then control each term separately: Section~\ref{sec:bayes-gap} bounds the Bayes Gap (pretraining approximation and generalization), while Section~\ref{sec:posterior-variance} analyzes the Posterior Variance (inference-time uncertainty in mixtures). 

The following identity decomposes the ICL risk into a model-dependent term and a model-independent term.
\begin{prop}[Risk identity for in-context learning]\label{thm:main}
Consider the prompt-generating process from Definition~\ref{def:PGP} and assume that Assumption~\ref{ass:bounded_f} holds. For a measurable, bounded map $M$, the ICL risk decomposes as
\[ R(M) = \underbrace{R_{\mathrm{BG}}(M)}_{\text{Bayes Gap}} + \underbrace{R_{\mathrm{PV}}}_{\text{Posterior Variance}}  \vspace{-0.3cm}
\]
where:\vspace{-0.1cm}
\begin{enumerate}[leftmargin=*,itemsep=2pt, parsep=1pt, topsep=0pt, partopsep=0pt]
  \item[-] \textbf{Bayes Gap}: $R_{\mathrm{BG}}(M) := \frac{1}{p} \sum_{k=1}^p \E_{P^k} \big[\left( M(P^k) - M_{\mathrm{Bayes}}(P^k)\right)^2\big]$. This measures how closely the model $M$ approximates the optimal Bayes predictor. In other words, this is the excess risk relative to the Bayes predictor.
  \item[-] \textbf{Posterior Variance}: $R_{\mathrm{PV}} := \frac{1}{p} \sum_{k=1}^p \E_{P^k} \left[ \Var_{f \sim \gP(f\mid D^k)} (f(\bm{x}_{k+1})) \right]$, which is independent of $M$ and irreducible. This represents the behavior of the Bayes estimator given the context.
\end{enumerate}
\end{prop}

This decomposition reveals how each term can be reduced. The Bayes Gap is the model-dependent excess risk over the Bayes predictor, controlled through architecture design and pretraining scale ($N, p$). In contrast, the Posterior Variance stems from the inference-time uncertainty of the test task and can be reduced only by increasing the context length $k$ at inference time because $\E[\mathrm{Var}_{f\sim \gP(f\mid D^{k+1}) }(f)] \leq \E[\mathrm{Var}_{f\sim \gP(f\mid D^k) }(f)]$ follows from the law of total variance. Therefore, under sufficiently large pretraining, the final error bottleneck is the latter. 
Importantly, the identity itself is model-agnostic (Proposition~\ref{thm:main}); architectural assumptions are required only to derive an upper bound for the Bayes Gap. Also, it can be extended to a broad class of losses that admit an analogous decomposition with the Bayes estimator, such as Bregman-type losses~\citep{adlam2022biasvariancebregman,pfau2025generalizedbvd}.

\subsection{Bayes Gap: Pretraining Generalization Error and Approximation Error}\label{sec:bayes-gap}
This section answers ``\textit{Can $M_{\theta}$ emulate the hypothetical map $P^k\mapsto \E_{f\sim \gP(f\mid D^k) }[f(\bm{x}_{k+1})]$\,?}''

For the uniform-attention Transformers, the following theorem decomposes the Bayes Gap into an approximation term and a pretraining generalization term, and provides a non-asymptotic upper bound. This is architecture-specific and applies to the uniform-attention class in Definition~\ref{def:transformer}. Its proof relies on the mean-pooling approximation of the Bayes predictor developed in Appendix~\ref{sec:bayes-properties}.

\begin{theorem}[Bayes Gap upper bound]\label{thm:upper-bayes}
Consider the prompt-generating process defined in Definition~\ref{def:PGP} and the model class in
Definition~\ref{def:transformer} under Assumptions \ref{ass:bounded_f} and \ref{ass:bounded_x}. For $k=1,\ldots,p$, assume the Bayes predictor $M_{\mathrm{Bayes}}:(\R^{d_{\mathrm{eff}}})^k \times \R^{d_{\mathrm{feat}}}\to\R$ satisfies the H\"older condition:
$\big|M_{\mathrm{Bayes}}(\bm{u}_{1:k},\bm c)-M_{\mathrm{Bayes}}(\bm{u}'_{1:k},\bm c')\big| \le L\,\frac1k\sum_{i=1}^k \big\|(\bm{u}_i,\bm c)-(\bm{u}'_i,\bm c')\big\|_2^{\alpha}$\, for all bounded $\bm{u}_i,\bm{u}_i'\in \gU,\ \bm c,\bm c'\in\gC,$ and $ \alpha\in(0,1]$. Then, for any $p\ge 2$, $\hat\theta$ in \eqref{eq:erm} satisfies
\begin{align*}
    \E [R_{\mathrm{BG}}(M_{\hat\theta}) ]
    &\lesssim
    \underbrace{m^{-\frac{2\alpha}{d_{\mathrm{eff}} }}}_{\textrm{Approximation error}} \\
    &\, + \underbrace{\frac{m}{pN}\polylog(pN) + \frac{1}{N}\polylog(pN)}_{\textrm{Pretraining generalization error} },
\end{align*}
where the expectation is taken over $\{\{(P_j^k,y_{j,k+1})\}_{k=1}^{p}\}_{j=1}^{N}$ and $\polylog(\cdot) \asymp \log^r(\cdot)$ with some $r\in\N$. Choosing $m^\star \asymp \left(pN \right)^{\frac{d_{\mathrm{eff}}}{d_{\mathrm{eff}}+2\alpha}}$ and ignoring $\polylog(pN)$ yields $\E [R_{\mathrm{BG}}(M_{\hat\theta})]\lesssim \big((pN)^{-\frac{2\alpha}{d_{\mathrm{eff}}+2\alpha}} + N^{-1}\big)$.
\end{theorem}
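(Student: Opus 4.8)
The plan is the usual approximation--estimation split for the ERM: fix any $\theta^\star\in\Theta$ and write
\[
\E R_{\mathrm{BG}}(M_{\hat\theta}) \;\le\; \underbrace{R_{\mathrm{BG}}(M_{\theta^\star})}_{\text{approximation}} \;+\; \underbrace{\E\big[R_{\mathrm{BG}}(M_{\hat\theta}) - R_{\mathrm{BG}}(M_{\theta^\star})\big]}_{\text{estimation}},
\]
then bound each piece and finally optimize over the width $m$. Since $R_{\mathrm{BG}}(M)=R(M)-R_{\mathrm{PV}}$ with $R_{\mathrm{PV}}$ a constant, excess ICL risk and excess Bayes gap coincide, so the squared-loss ERM also empirically minimizes the Bayes gap, and the second term above is exactly an empirical-process quantity that can be localized.

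\emph{Approximation term.} I would construct $\theta^\star$ with an optimal-transport flavor: partition the bounded example domain $\gU\subset\R^{d_{\mathrm{eff}}}$ into $\asymp m$ cells of diameter $\asymp m^{-1/d_{\mathrm{eff}}}$, let $\phi_{\theta^\star}(\bm x,y)$ be a depth-$O(\log m)$ ReLU approximation of the one-hot indicator of the cell containing $(\bm x,y)$ followed by $\mathrm{Renorm}_\tau$ with small $\tau$, so that $\tfrac1k\sum_i\phi_{\theta^\star}(\bm x_i,y_i)$ is (up to $O(\tau)$) the normalized context histogram; this obeys the required $O(m\log m)$ parameter count and spectral bound $S(\phi_{\theta^\star})\lesssim m^{1/d_{\mathrm{eff}}}$. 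The quantized empirical measure $\sum_j z_j\delta_{c_j}$ (with $c_j$ the cell centers, $z_j$ the histogram weights) lies within Wasserstein-$1$ distance $\lesssim m^{-1/d_{\mathrm{eff}}}$ of the true empirical measure, and here I use permutation invariance to turn the matched Hölder hypothesis on $M_{\mathrm{Bayes}}$ into a Hölder bound in this transport distance. Taking $\rho_{\theta^\star}$ to be a ReLU approximation, with spectral norm controlled at $\lesssim m^{1/2}$, of the map $(\bm z,\bm c)\mapsto M_{\mathrm{Bayes}}(\sum_j z_j\delta_{c_j},\bm c)$ on the histogram-reachable subset of $\Delta^{m-1}\times\gC$ (a set-functional that inherits the Hölder regularity of $M_{\mathrm{Bayes}}$), the sup-norm error is $\lesssim m^{-\alpha/d_{\mathrm{eff}}}$, hence $R_{\mathrm{BG}}(M_{\theta^\star})\lesssim m^{-2\alpha/d_{\mathrm{eff}}}$.

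\emph{Estimation (pretraining generalization) term.} By the ERM property the empirical excess risk of $\hat\theta$ over $\theta^\star$ is nonpositive, so the population excess risk is controlled by $\sup_{\theta\in\Theta}\{(R-R_N)(M_\theta)-(R-R_N)(M_{\theta^\star})\}$, where $R_N$ is the empirical risk. The crucial structural feature is that $R_N$ sums, over $k=1,\dots,p$, terms $(y_{j,k+1}-M_\theta(P_j^k))^2$ whose query labels $y_{j,k+1}$ are fresh while the contexts are nested; conditionally on the task $f_j$ this is a martingale-difference structure in $k$, which is why sequential (martingale-type) Rademacher complexity $\SRCmart$ and sequential covering numbers are the right machinery. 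I would then (i) bound the sequential metric entropy of the induced loss class by $\widetilde O(m)$, using that the aggregated feature lives in the bounded simplex $\Delta^{m-1}$, the Lipschitzness of $\rho_\theta$, the depth/parameter counts, and the spectral-norm constraints; (ii) use the Bernstein property of the bounded squared loss (conditional loss variance $\lesssim$ excess risk) to run a localization/fixed-point argument that upgrades the worst-case $\sqrt{\cdot}$ rate to a fast rate; and (iii) split the fluctuation of the within-prompt average — which enjoys the effective sample size $pN$ and produces the $m\,\polylog(pN)/(pN)$ term — from the residual task-level fluctuation across the $N$ i.i.d.\ prompts, which does not average over $k$ and produces the irreducible $\polylog(pN)/N$ term. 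This gives $\E[R_{\mathrm{BG}}(M_{\hat\theta})-R_{\mathrm{BG}}(M_{\theta^\star})]\lesssim m\,\polylog(pN)/(pN)+\polylog(pN)/N$.

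\emph{Conclusion and main obstacle.} Summing the two bounds yields the displayed inequality; choosing $m^\star\asymp(pN)^{d_{\mathrm{eff}}/(d_{\mathrm{eff}}+2\alpha)}$ to balance $m^{-2\alpha/d_{\mathrm{eff}}}$ against $m/(pN)$ gives the stated corollary $\E R_{\mathrm{BG}}(M_{\hat\theta})\lesssim(pN)^{-2\alpha/(d_{\mathrm{eff}}+2\alpha)}+N^{-1}$. I expect the estimation term to be the main obstacle: getting the fast $m/(pN)$ rate rather than $\sqrt{m/N}$ requires simultaneously (a) handling the within-prompt dependence via a sequential/martingale complexity, (b) keeping the entropy bound linear in $m$ despite unbounded intermediate widths — this is exactly where the bounded-simplex bottleneck and the spectral normalization of $\phi_\theta,\rho_\theta$ are essential — and (c) isolating the unavoidable $1/N$ task-sampling floor inside the localization argument. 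The approximation term requires a careful network construction but is conceptually the cleaner half, resting on the histogram encoder plus the transport estimate.
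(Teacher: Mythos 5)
Your proposal follows essentially the same route as the paper: the same approximation--estimation split with a centered (Bayes-offset) objective, the same histogram encoder plus Wasserstein/H\"older transport argument with a Lipschitz decoder target built from $M_{\mathrm{Bayes}}$ evaluated on quantized empirical measures (the paper's Lemma on approximation, via a mollified partition of unity and a McShane extension), and the same estimation machinery — block symmetrization over the $N$ i.i.d.\ prompts, sequential Rademacher/covering bounds of order $m$ (plus a depth-$p$ term) for the within-prompt martingale structure, a Bernstein condition, and localization yielding the fast $m/(pN)+1/N$ rate, then balancing $m$. The details you gloss over — truncating the sub-Gaussian labels to a high-probability bounded event so the H\"older/approximation step applies, and extending the lattice-defined decoder target to all of $\Delta^{m-1}$ — are exactly the technical steps the paper supplies, so the plan is sound.
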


\emph{Proof Idea}: 
Regarding the pretraining generalization error, we handle the $N$ meta-training prompts via conventional learning theory across $j$~\citep{vaart_wellner_weak_2023,Shalev-Shwartz_Ben-David_2014}, and the $p$ context examples per prompt via a sequential learning theory across $k$~\citep{Rakhlin2015,BlockDaganRakhlin2021}. 
Concerning the approximation error, we build a mollified partition-of-unity (``soft histogram'') over the example domain $\gU$ and mean-pool it to encode prompts. Then the Bayes predictor on empirical measures is approximated by a decoder defined via a McShane extension over a discrete 1-Wasserstein metric between histograms~\citep{peyré2020computational}, yielding a Lipschitz, piecewise-linear target. Both encoder and decoder are then realized by moderate-size ReLU networks. As these proof ideas do not depend on a specific Bayesian formulation, the result holds under milder data assumptions compared to prior Bayesian analyses~\citep{xie2022an,pmlr-v258-zhang25d}.

Theorem~\ref{thm:upper-bayes} shows that the Bayes Gap decomposes into two orthogonal sources of error:
(i) an approximation term $m^{-2\alpha/d_{\mathrm{eff}}}$ and (ii) a pretraining generalization error term
$\tilde O\!\big(m/(pN)+1/N\big)$. The first term captures the expressiveness of the Transformer class:
larger feature dimension $m$ yields a finer approximation of the Bayes predictor (while architectural depth and width enter only implicitly through $m$ and the Lipschitz constants of the encoder and decoder; see
Lemmas~\ref{lem:seq-cover}-\ref{lem:additive}).
The second term quantifies estimation error from finite pretraining data. Here, $p$ represents the amount of information within one task, while $N$ represents the coverage of the meta-distribution.
The rate $\propto m/(pN)$ makes explicit the joint effect of $pN$, which earlier non-asymptotic theories on ICL~\citep{kim2024transformers,wu2024how,zhang2024trained} have not fully captured, as they typically considered the effect of $p$ and $N$ separately or focused on only one of them.

While existing results in linear settings suggest that Transformers emulate specific procedures such as
ridge regression or gradient descent~\citep[e.g.,][]{akyurek2023what,bai2023transformers,zhang2024trained},
our bound implies a broader conclusion. In nonparametric, nonlinear, meta-learning regimes, pretraining can drive the model toward the Bayes-optimal in-context predictor, i.e., it selects the optimal meta-algorithm non-asymptotically.

We also highlight its ability to avoid the curse of dimensionality with respect to context length $p$. Since the Bayes predictor is unchanged no matter the order in which the context arrives, we can compress a long input sequence into a single mean vector without losing information, and the network only needs to handle that fixed-length vector of dimension $d_{\mathrm{eff}}$ rather than $pd_{\mathrm{eff}}+d_{\mathrm{feat}}$.

The H\"older condition holds, intuitively, if (i) each task function is smooth (e.g., H\"older) with respect to the input, (ii) inputs and responses are effectively bounded (e.g., sub-Gaussian noise), and (iii) Bayesian updates are stable (e.g., distributions of parameters are light-tailed or log-concave), so perturbing any single context point by $O(\delta)$ changes the posterior mean by at most $O(\delta^{\alpha}/k)$ under the prompt metric. These conditions are typically met for mixtures of common task families (e.g., linear regression, basis-function regression, finite convex-dictionary regression). Further discussion is deferred to Appendix~\ref{sec:holder-examples}. Moreover, the rate $(pN)^{-\frac{2\alpha}{d_{\mathrm{eff}}+2\alpha}}$ matches the minimax lower bound for estimating, for example, the density of the joint distribution of $(\bm{x}_i, y_i)\in\gU$ under the standard H\"older smoothness assumption~\citep{Tsybakov2009}. 

In practice, as the token budget used for pretraining LLMs is enormous (say, infinite), the only risk that essentially remains is the other component of the ICL risk, $R_{\mathrm{PV}}$ analyzed in the next section.

\subsection{Posterior Variance: Inference-time Error}\label{sec:posterior-variance}

We now focus on the Posterior Variance, $R_{\mathrm{PV}}$. This term represents the irreducible error of the Bayes predictor itself. A key question is: \textit{How does this Posterior Variance, arising from a mixture of $T$ task types, relate to the intrinsic difficulty of the true task at inference time?}

The following theorem suggests that, under some assumptions on the data (discussed later), the Bayes predictor identifies the true task type from a few-shot context.

\begin{theorem}[Gap between Posterior Variance and minimax risk of the true task type]\label{thm:mixture-to-true}
Suppose Assumption~\ref{ass:bounded_f} holds. Let $i^\star$ be the true task index and recall $\alpha_{i^\star} = \Pr (I=i^\star)$ from Definition~\ref{def:PGP}.
For each wrong task $j\neq i^\star$ and each $t\ge1$, define the predictive log-likelihood ratio increment
$Z_{j,t} := \log\tfrac{p_j(y_t\mid \bm{x}_t,D^{t-1})}{p_{i^\star}(y_t\mid \bm{x}_t,D^{t-1})}$.
Under the true task, there exist a task-type divergence $D_j>0$ and constants $(\nu_j,b_j)$ such that, for all $t \ge 1$ and the filtration $\gG_{t-1}$, $\E \big[ Z_{j,t}\mid \gG_{t-1}, I=i^\star\big] \le -D_j$ and $\E \left[\exp\{\lambda (Z_{j,t}+D_j)\}\mid \gG_{t-1}, I=i^\star\right]\le \exp \left(\lambda^2\nu_j^2/2\right)$ hold for all $|\lambda|\le 1/b_j$. Let $D_{\min}:=\min_{j\ne i^\star}D_j>0$ and $C:=\min_{j\ne i^\star}\tfrac{D_j^2}{8(\nu_j^2+b_jD_j/2)}>0$. Then, for all $k\ge 1$, 
\begin{align*}
    &\E_{D^k,\bm{x}\mid I=i^\star} \Big[\underbrace{\Var_{f\mid D^k}\{f(\bm{x})\}}_{\text{mixture Posterior Variance}} \Big] \\
    &\le
    \underbrace{\inf_{M} \sup_{f\in F_{i^\star}} \E_{P^k} \left[\left(f(\bm{x}_{k+1})-M(P^k)\right)^2 \middle| f\right]}_{\text{the true task type's minimax risk}} \\
    &\qquad + \underbrace{5B_f^2 \left(
    \frac{1-\alpha_{i^\star}}{\alpha_{i^\star}} e^{-D_{\min}k/2}+ (T-1)e^{-Ck}\right) }_{\text{task-type identification error}}.   
\end{align*}
\end{theorem}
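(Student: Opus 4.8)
\emph{Proof plan.} I would split the argument into three steps: (A) a pointwise (in the context $D^k$) upper bound of the \emph{mixture} posterior variance by the \emph{within-$i^\star$} posterior variance plus a penalty proportional to the posterior mass on wrong task types; (B) the observation that, in expectation under the true task, the within-$i^\star$ posterior variance is exactly the Bayes risk of the prior $\gP_{F_{i^\star}}$ for predicting $f(\bm x_{k+1})$ from $P^k$, hence at most the minimax risk of $F_{i^\star}$; and (C) an exponential-in-$k$ bound on the expected wrong-task posterior mass $\E[1-\pi_{i^\star}(D^k)\mid I=i^\star]$ extracted from the log-likelihood-ratio hypotheses.

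\emph{Step (A).} Abbreviate $g:=f(\bm x)$ (so $|g|\le B_f$ by Assumption~\ref{ass:bounded_f}), and under the mixture posterior $\gP(f\mid D^k)=\sum_i\pi_i(D^k)\gP_{F_i}(f\mid D^k,I=i)$ set $\mu_i:=\E[g\mid D^k,I=i]$, $v_i:=\Var(g\mid D^k,I=i)$, and $\bar\mu:=\sum_i\pi_i(D^k)\mu_i$. The law of total variance gives
\[
\Var_{f\mid D^k}\{f(\bm x)\}=\sum_i\pi_i(D^k)\,v_i+\sum_i\pi_i(D^k)\,(\mu_i-\bar\mu)^2 .
\]
Bound the first sum by $v_{i^\star}+(1-\pi_{i^\star})B_f^2$ (using $\pi_{i^\star}\le1$ and $v_j\le B_f^2$); for the second, since a variance is the minimal mean-square deviation, re-centering at $\mu_{i^\star}$ rather than $\bar\mu$ gives $\sum_i\pi_i(\mu_i-\bar\mu)^2\le\sum_i\pi_i(\mu_i-\mu_{i^\star})^2=\sum_{j\ne i^\star}\pi_j(\mu_j-\mu_{i^\star})^2\le 4(1-\pi_{i^\star})B_f^2$. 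Adding these and noting $v_{i^\star}=\Var_{f\mid D^k,I=i^\star}\{f(\bm x)\}$ yields the pointwise bound
\[
\Var_{f\mid D^k}\{f(\bm x)\}\ \le\ \Var_{f\mid D^k,I=i^\star}\{f(\bm x)\}\ +\ 5\big(1-\pi_{i^\star}(D^k)\big)B_f^2 .
\]

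\emph{Steps (B) and (C).} Take $\E_{D^k,\bm x\mid I=i^\star}$ of the last display. Since the query $\bm x_{k+1}$ is conditionally independent of $f$ given $D^k$, the expectation of the first term equals $\E_{f\sim\gP_{F_{i^\star}}}\E_{D^k,\bm x\mid f}[(f(\bm x)-\E[f(\bm x)\mid D^k,I=i^\star])^2]$, i.e.\ the Bayes risk (under prior $\gP_{F_{i^\star}}$) of the optimal within-$i^\star$ predictor; since the average of a risk against any prior over $F_{i^\star}$ is at most its supremum, this is $\le\inf_M\sup_{f\in F_{i^\star}}\E_{P^k}[(f(\bm x_{k+1})-M(P^k))^2\mid f]$. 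It remains to show $\E[1-\pi_{i^\star}(D^k)\mid I=i^\star]\le\tfrac{1-\alpha_{i^\star}}{\alpha_{i^\star}}e^{-D_{\min}k/2}+(T-1)e^{-Ck}$. By Bayes' rule and the chain rule for densities, $\pi_j(D^k)=\alpha_j m_j(D^k)\big/\sum_i\alpha_i m_i(D^k)$ with $m_i(D^k)=\prod_{t=1}^k p_i(y_t\mid\bm x_t,D^{t-1})$ (the common $\gP_X$-densities of the $\bm x_t$ cancel), hence $\pi_j(D^k)\le(\alpha_j/\alpha_{i^\star})\exp(S_{j,k})$ with $S_{j,k}:=\sum_{t=1}^k Z_{j,t}$. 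Fixing $j\ne i^\star$ and splitting on whether $S_{j,k}\le -D_jk/2$ (on the good event $\pi_j\le(\alpha_j/\alpha_{i^\star})e^{-D_jk/2}$, off it $\pi_j\le1$) gives $\E[\pi_j(D^k)\mid I=i^\star]\le(\alpha_j/\alpha_{i^\star})e^{-D_jk/2}+\Pr(S_{j,k}>-D_jk/2\mid I=i^\star)$. Writing $W_{j,t}:=Z_{j,t}+D_j$, the bad event is $\{\sum_{t=1}^k W_{j,t}>D_jk/2\}$; iterating the conditional MGF bound $\E[e^{\lambda W_{j,t}}\mid\gG_{t-1},I=i^\star]\le e^{\lambda^2\nu_j^2/2}$ gives $\E[e^{\lambda\sum_t W_{j,t}}\mid I=i^\star]\le e^{k\lambda^2\nu_j^2/2}$ for $0\le\lambda\le1/b_j$, and the Chernoff bound optimized over $\lambda\in[0,1/b_j]$ yields the Bernstein tail $\Pr(\sum_t W_{j,t}>D_jk/2\mid I=i^\star)\le\exp\!\big(-\tfrac{D_j^2k}{8(\nu_j^2+b_jD_j/2)}\big)\le e^{-Ck}$. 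Summing over the $T-1$ wrong tasks and using $D_j\ge D_{\min}$ finishes the reduction; multiplying by $5B_f^2$ proves the theorem.

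\emph{Main obstacle.} The total-variance algebra and the Chernoff computation are routine; the two delicate points are (i) securing the \emph{sharp} constant $5$ in Step (A), which hinges on re-centering the between-component variance at $\mu_{i^\star}$ rather than at $\bar\mu$, and (ii) making the sequential likelihood factorization rigorous — existence of the regular conditional predictive densities $p_i(y_t\mid\bm x_t,D^{t-1})$, cancellation of the input densities, and hence legitimacy of both $\pi_j\le(\alpha_j/\alpha_{i^\star})e^{S_{j,k}}$ and the martingale MGF recursion — which is precisely where the abstract hypotheses on $(D_j,\nu_j,b_j)$ must be connected to the actual prompt-generating process. I expect (ii) to be the principal obstacle to a clean write-up.
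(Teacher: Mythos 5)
Your proposal is correct and follows essentially the same route as the paper: the law-of-total-variance split over the task index with the same $B_f^2+4B_f^2$ accounting giving the constant $5$, the reduction of the within-$i^\star$ term to the minimax risk via the Bayes-risk-$\le$-minimax argument (the paper's Lemma~\ref{lem:PV-leq-minimax}), and exponential concentration of the wrong-task posterior mass from the likelihood-ratio bound $\pi_j\le(\alpha_j/\alpha_{i^\star})e^{S_{j,k}}$ plus a Bernstein-type tail with exactly the paper's constant $C_j=D_j^2/(8(\nu_j^2+b_jD_j/2))$. The only cosmetic difference is that you bound $\E[\pi_j\mid I=i^\star]$ per wrong task by a direct conditional-MGF iteration and Chernoff step, while the paper invokes a supermartingale Bernstein inequality and a union-bound event $\gE_k$; the resulting bound is identical.
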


This theorem quantitatively justifies the empirical observation that ICL can quickly adapt\footnote{Here, adaptation does not refer to test-time parameter updates; the model parameters remain fixed after pretraining. Rather, the posterior over the latent task type concentrates on the true family as the context grows.} to the specific task, even when pretrained on a diverse mixture. Concretely, the posterior distribution over the task index, $\gP_{I\mid D^k}$, concentrates exponentially fast on the true index $i^\star$ as $k$ grows. This is consistent with empirical demonstrations. For instance, \citet{panwar2024incontext} show that in hierarchical mixtures, Transformers mimic the Bayes predictor based on the true task distribution. Also, the above theorem explains the ``Bayesian scaling laws'' of \citet{arora2025bayesian}, which model ICL's error curves as repeated Bayesian updates, and under an ideal Bayesian learner, show the task posterior converges to the true task as context grows through experiments. 

Compared to prior ICL theories, Theorem~\ref{thm:mixture-to-true} can be seen as the general form of the result in \citet{kim2024transformers}, which showed that even when the function class used at pretraining is wider than the one at inference, the inference error depends only on the hardness of the latter class. Although \citet{jeon2024} also mention an irreducible error of ICL, our addition is to show that it manifests as Posterior Variance and that it approaches the minimax risk for the “true family” up to a small gap. Moreover, this phenomenon of ICL “selecting algorithms on the fly” is consistent with the theoretical results on in-context algorithm selection in generalized linear models and the Lasso~\citep{akyurek2023what,bai2023transformers,zhang2024trained}. Our result proves that even without assuming a specific algorithmic form, behavior close to optimal algorithm selection emerges through posterior concentration in mixture settings.

Also, the result implies that ICL performance transitions from being dominated by task-type identification (small $k$) to within-family generalization (large $k$), and hence, few-shot benchmarks mainly measure task discrimination, while longer ones assess learning within the identified family.

The assumptions are fairly standard in the theory of sequential data and ensure that the in-context examples provide sufficient signal to rapidly rule out incorrect task types: (i) the supermartingale condition $\E[Z_{j,t} \mid \gG_{t-1}, I=i^\star] \le -D_j<0$~\citep{Williams_1991} means each new observation, on average, decreases the predictive log-likelihood ratio of any wrong type $j$ against the true type; (ii) the Bernstein-type condition $\E \left[\exp\{\lambda (Z_{j,t}+D_j)\}\mid \gG_{t-1}, I=i^\star\right]\le \exp \left(\lambda^2\nu_j^2/2\right)$~\citep{BercuDelyonRio2015} yields the concentration of the cumulative log-likelihood ratio, so occasional misleading samples cannot outweigh the overall trend. Note that $D_j$ is the per-step information gap that favors the true task over the wrong type $j$, $\nu_j$ is the sub-exponential scale of the log-likelihood ratio, $b_j$ bounds the tail via the moment-generating-function (smaller means heavier tails), and $\min_{j\ne i^\star} \tfrac{D_j^2}{8(\nu_j^2+b_jD_j/2)}$ sets the uniform exponential rate at which posterior mass on wrong types decays with more context.

In Appendix~\ref{sec:mixture-to-true}, we consider a concrete regression problem (linear vs.\ series regression) and specify $\nu_j,b_j,D_j$ and $C$ that appear in Theorem~\ref{thm:mixture-to-true}. 

Remark that if the likelihood does not have a density function (with respect to Lebesgue measure), assume that all predictive distributions $P_i(y_t\mid \bm{x}_t,D^{t-1})$ are dominated by a common reference measure so that the Radon-Nikodym derivative exists. Then $Z_{j,t}$ can be rigorously defined as a log-likelihood ratio.

\subsection{OOD Stability of the ICL Risk}\label{sec:ood-stability}
This section investigates how the ICL risk changes under a distributional shift in the input between pretraining data and inference-time prompt. Note that the task distribution and the noise distribution are unchanged. Since $R_{\mathrm{PV}}$ represents the uncertainty of the task at inference time, it depends only on the prompt distribution at inference time. In contrast, the Bayes Gap $R_{\mathrm{BG}}(M_{\hat\theta})$, which measures the performance of the pretrained model $M_{\hat\theta}$, is directly affected by the discrepancy between the pretraining (source domain) and inference-time (target domain) distributions.

To formalize the problem, let $\mathsf P$ and $\mathsf Q$ denote the probability measures governing the prompt-generating process with the source input distribution $\gP_X$ (pretraining) and the target input distribution $\gQ_X$ (inference), respectively.
Denote the Bayes Gap evaluated under $\mathsf R \in \{\mathsf P,\mathsf Q\}$ by
$R_{\mathrm{BG}}^{(\mathsf R)}(M_\theta)
:=\frac{1}{p}\sum_{k=1}^p
\E_{P^k\sim \gL_{\mathsf R}(P^k)}
\![\{M_\theta(P^k)-M_{\mathrm{Bayes}}(P^k)\}^2],
$
where $\gL_{\mathsf R}(Z):=\mathsf R\circ Z^{-1}$ is the induced distribution of a random variable $Z$.

We measure the shift at the prompt level. For $0<\alpha\le 1$ and $k\in\{1,\dots,p\}$, define the ground metric $\overline d_{k,\alpha}\!\big((\bm{u}_{1:k},\bm c),(\bm{u}'_{1:k},\bm c')\big):= \frac{1}{k}\sum_{i=1}^k \|\bm{u}_i-\bm{u}'_i\|_2^{\alpha} + \|\bm c-\bm c'\|_2^{\alpha}$, and the associated 1-Wasserstein distance
$\mathsf W^{(k)}_{\alpha}\!\big(\gL_{\mathsf P}(P^k),\gL_{\mathsf Q}(P^k)\big)
:= W_1\!\big(\gL_{\mathsf P}(P^k),\gL_{\mathsf Q}(P^k);\overline d_{k,\alpha}\big)$.
Assume $\gU$ and $\gC$ have finite diameters (e.g., by truncating on a high-probability event under the sub-Gaussian noise model) for brevity. Note that the decoder is uniformly Lipschitz in its two arguments with constants $(L_s,L_c)$, while $\Lip(\phi_\theta)$ denotes the encoder's Lipschitz constant.

\begin{theorem}[Wasserstein stability of the Bayes Gap]\label{thm:ood-wass}
Consider the prompt-generating process defined in Definition~\ref{def:PGP} under Assumptions \ref{ass:bounded_f} and \ref{ass:bounded_x} for $\gP_X$ and $\gQ_X$. Suppose that the Bayes predictor satisfies the same $\alpha$-H\"older condition as in Theorem~\ref{thm:upper-bayes} with exponent $\alpha\in(0,1]$ and constant $L$. Then, for every parameter~$\theta$,
\begin{align*}
    &\big| R_{\mathrm{BG}}^{(\mathsf Q)}(M_\theta) - R_{\mathrm{BG}}^{(\mathsf P)}(M_\theta) \big|  \\
    &\le \frac{2(B_M+B_f)}{p}\sum_{k=1}^p \big(L + \Lambda_\alpha\big)\, \mathsf W^{(k)}_{\alpha}\!\big(\gL_{\mathsf P}(P^k),\gL_{\mathsf Q}(P^k)\big), 
\end{align*}
where $\Lambda_\alpha:= \big(L_s \Lip(\phi_\theta) + L_c\big)\big(\mathrm{diam}(\gU)+\mathrm{diam}(\gC)\big)^{1-\alpha}$.
\end{theorem}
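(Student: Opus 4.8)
\emph{Proof idea.} The plan is to reduce the claim to a pointwise Lipschitz estimate on the integrand with respect to the ground metric $\overline d_{k,\alpha}$, and then push the input-distribution shift through the $1$-Wasserstein term layer by layer in $k$. Set $g_k(P^k):=\big(M_\theta(P^k)-M_{\mathrm{Bayes}}(P^k)\big)^2$, so that $R_{\mathrm{BG}}^{(\mathsf R)}(M_\theta)=\tfrac1p\sum_{k=1}^p\E_{P^k\sim\mathsf R}[g_k(P^k)]$ and, by the triangle inequality, $\big|R_{\mathrm{BG}}^{(\mathsf Q)}(M_\theta)-R_{\mathrm{BG}}^{(\mathsf P)}(M_\theta)\big|\le\tfrac1p\sum_{k=1}^p\big|\E_{\mathsf Q}[g_k]-\E_{\mathsf P}[g_k]\big|$. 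Here $g_k$ is one fixed function of the prompt: because inputs are independent of the task function, the input marginal cancels in the posterior, so $M_{\mathrm{Bayes}}\colon P^k\mapsto\R$ is unchanged by the shift (and remains Bayes-optimal under both $\mathsf P$ and $\mathsf Q$), and only the integrating measure differs. Before invoking Wasserstein duality I would first record that $\overline d_{k,\alpha}$ is a genuine metric on $\gU^k\times\gC$: each $\|\cdot\|_2^{\alpha}$ is a metric for $\alpha\in(0,1]$ by subadditivity of $t\mapsto t^{\alpha}$, and a nonnegative weighted sum of metrics is again a metric; hence $\mathsf W^{(k)}_\alpha$ and its coupling representation are well defined, and all integrals in sight are finite since $\gU$ and $\gC$ have finite diameter by assumption.

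The core step is the Lipschitz estimate: I claim $g_k$ is $\overline d_{k,\alpha}$-Lipschitz with constant $2(B_M+B_f)(L+\Lambda_\alpha)$. Writing $g_k=h_k^2$ with $h_k:=M_\theta-M_{\mathrm{Bayes}}$, the clip inside $\rho_\theta$ gives $|M_\theta|\le B_M$ and Assumption~\ref{ass:bounded_f} gives $|M_{\mathrm{Bayes}}|\le B_f$, hence $|h_k|\le B_M+B_f$ and $|g_k(a)-g_k(b)|\le 2(B_M+B_f)\,|h_k(a)-h_k(b)|$. So it suffices to bound $|h_k(a)-h_k(b)|$ by $(L+\Lambda_\alpha)\,\overline d_{k,\alpha}(a,b)$, and I treat the two summands of $h_k$ separately. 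For $M_{\mathrm{Bayes}}$ I feed the $\alpha$-H\"older hypothesis through the elementary chain $\|(\bm u_i,\bm c)-(\bm u'_i,\bm c')\|_2^{\alpha}=\big(\|\bm u_i-\bm u'_i\|_2^2+\|\bm c-\bm c'\|_2^2\big)^{\alpha/2}\le\big(\|\bm u_i-\bm u'_i\|_2+\|\bm c-\bm c'\|_2\big)^{\alpha}\le\|\bm u_i-\bm u'_i\|_2^{\alpha}+\|\bm c-\bm c'\|_2^{\alpha}$, which turns $\tfrac1k\sum_i\|(\bm u_i,\bm c)-(\bm u'_i,\bm c')\|_2^{\alpha}$ into exactly $\overline d_{k,\alpha}(a,b)$, giving $|M_{\mathrm{Bayes}}(a)-M_{\mathrm{Bayes}}(b)|\le L\,\overline d_{k,\alpha}(a,b)$. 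For $M_\theta=\rho_\theta\big(\tfrac1k\sum_i\phi_\theta(\bm u_i),\bm c\big)$ I combine the decoder's $(L_s,L_c)$-Lipschitz property with $\big\|\tfrac1k\sum_i\phi_\theta(\bm u_i)-\tfrac1k\sum_i\phi_\theta(\bm u'_i)\big\|_2\le\tfrac1k\sum_i\Lip(\phi_\theta)\|\bm u_i-\bm u'_i\|_2$ and then the diameter-downgrade $\|\bm v-\bm v'\|_2\le\big(\mathrm{diam}(\gU)+\mathrm{diam}(\gC)\big)^{1-\alpha}\|\bm v-\bm v'\|_2^{\alpha}$ applied to the $\bm u$- and $\bm c$-blocks; the $\bm u$-block then picks up coefficient $L_s\Lip(\phi_\theta)\big(\mathrm{diam}(\gU)+\mathrm{diam}(\gC)\big)^{1-\alpha}\le\Lambda_\alpha$ and the $\bm c$-block coefficient $L_c\big(\mathrm{diam}(\gU)+\mathrm{diam}(\gC)\big)^{1-\alpha}\le\Lambda_\alpha$, so — matching the weights $\tfrac1k$ and $1$ of $\overline d_{k,\alpha}$ — one gets $|M_\theta(a)-M_\theta(b)|\le\Lambda_\alpha\,\overline d_{k,\alpha}(a,b)$. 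Adding the two bounds gives the claim.

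Finally, to transfer the shift, for each $k$ and any coupling $\gamma$ of $\gL_{\mathsf P}(P^k)$ and $\gL_{\mathsf Q}(P^k)$ I bound $\big|\E_{\mathsf Q}[g_k]-\E_{\mathsf P}[g_k]\big|\le\E_\gamma\big[|g_k(P)-g_k(P')|\big]\le 2(B_M+B_f)(L+\Lambda_\alpha)\,\E_\gamma\big[\overline d_{k,\alpha}(P,P')\big]$, and taking the infimum over couplings collapses the right-hand side to $2(B_M+B_f)(L+\Lambda_\alpha)\,\mathsf W^{(k)}_\alpha\big(\gL_{\mathsf P}(P^k),\gL_{\mathsf Q}(P^k)\big)$ (equivalently, Kantorovich--Rubinstein duality, legitimate since $g_k$ is bounded and Lipschitz on a finite-diameter space). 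Summing over $k=1,\dots,p$ and dividing by $p$ gives the stated inequality. I do not expect a serious obstacle here: the argument is essentially a careful Lipschitz-plus-transport computation. The only delicate points are (i) checking that $\overline d_{k,\alpha}$ is indeed a metric and that all quantities remain finite, which is precisely where the finite-diameter truncation is invoked, and (ii) the constant bookkeeping that makes the decoder/encoder Lipschitz estimate and the H\"older estimate collapse exactly into $L+\Lambda_\alpha$ with $\Lambda_\alpha$ as defined; all the probabilistic content is packaged into $\mathsf W^{(k)}_\alpha$.
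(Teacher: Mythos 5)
Your proposal is correct and follows essentially the same route as the paper's proof (Theorem~\ref{thm:ood-stability} in the appendix): establish that $g_k=(M_\theta-M_{\mathrm{Bayes}})^2$ is $\overline d_{k,\alpha}$-Lipschitz with constant $2(B_M+B_f)(L+\Lambda_\alpha)$ by combining the bounded range with the decoder/encoder Lipschitz moduli, the diameter downgrade, and the H\"older hypothesis on $M_{\mathrm{Bayes}}$, then transfer the shift via an optimal coupling (equivalently Kantorovich--Rubinstein) and average over $k$. Your extra remarks (subadditivity making $\overline d_{k,\alpha}$ a metric, and the invariance of $M_{\mathrm{Bayes}}$ under the input shift) are consistent with the paper's treatment and do not change the argument.
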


This result implies that the Bayes Gap is distributionally Lipschitz. Its change across domains is controlled by (i) the smoothness $L$ of the Bayes predictor and (ii) the architectural regularity of $M_\theta$ through $L_s$, $L_c$, and $\Lip(\phi_\theta)$. 
In line with the findings of \citet{zhang2024trained} that ICL is susceptible to input-distribution shifts, we find that only the Bayes Gap is affected by such shifts and quantify the magnitude of this effect.
When moderate domain shift is unavoidable, limiting sensitivity (e.g., via spectral normalization and output clipping) may help mitigate performance degradation.

For additional theories and discussions, see Appendix~\ref{sec:ood-appendix}.

\paragraph{Scope of the results.}
We end this section by clarifying the scope of our theorems. Proposition~\ref{thm:main} is model-agnostic; it is just a Bayes-risk identity under squared loss and can be extended to other losses including Bregman-type losses. Theorem~\ref{thm:mixture-to-true} and the permutation-invariance results in Appendix~\ref{sec:bayes-properties} are also independent of the architecture, and use only the exchangeable prompt model. By contrast, Theorem~\ref{thm:upper-bayes} is specific to the uniform-attention class in Definition~\ref{def:transformer}, because its proof uses mean pooling. Theorem~\ref{thm:ood-wass} is intermediate: it only needs regularity quantities such as Lipschitz constants, so the same argument can apply to another architecture once similar bounds are proved.

\begin{figure*}[th]
    \centering
    \includegraphics[width=\textwidth]{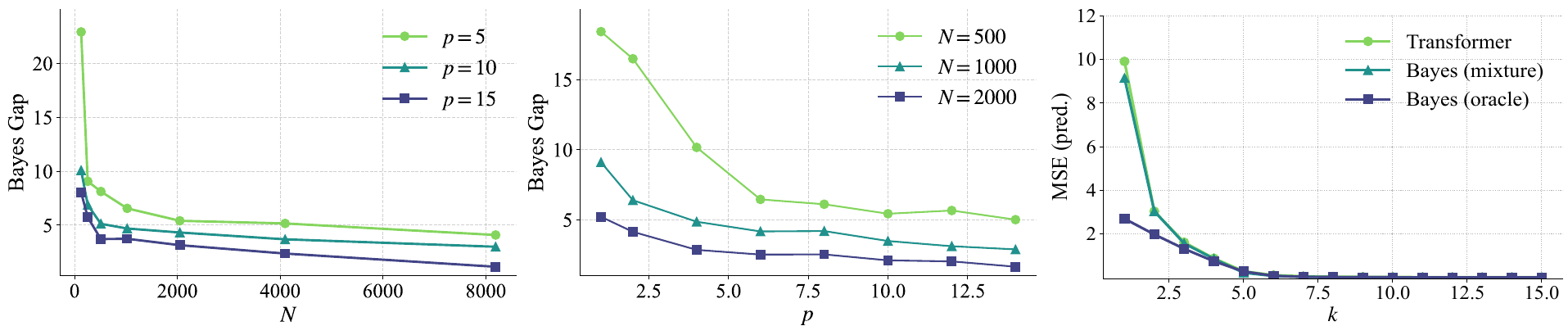}
    \caption{\textbf{Behavior of the Bayes Gap (left: $N$-sweep, middle: $p$-sweep).} 
    The left panel fixes $p\in\{5,10,15\}$ and varies the number of pretraining prompts $N$; the middle panel fixes $N\in\{500,1000,2000\}$ and varies the context length $p$ in pretraining. In both cases, the Bayes Gap decreases generally as $N$ or $p$ increases.
    \textbf{Inference-time error under task mixtures (right).} 
    At inference time, the sufficiently pretrained Transformer closely tracks \emph{Bayes (mixture)} $M_{\mathrm{Bayes}}$ and, with a few in-context examples, it approaches \emph{Bayes (oracle)} $M_{\mathrm{Bayes}}^{\mathrm{oracle}}$ that knows the task family, demonstrating the rapid task-type identification.}
    \label{fig:maintext}
\end{figure*}

\section{Experiments}\label{sec:experiments}
To validate our theories, we conduct pretraining and ICL experiments using a GPT-2 architecture~\citep{radford2019language}, as in \citet{oko2024pretrained} and \citet{garg2022case}. Full experimental details (data generation, architecture, and optimization) are deferred to Appendix~\ref{app:exp-details}.

\textbf{Brief setup.}
We use a two-family synthetic regression mixture (linear vs.\ non-linear; conjugate Bayes in closed form), enabling exact Bayes (mixture/oracle) baselines and direct Bayes-gap measurement.
The Transformer is pretrained on $N$ sets of length-$p$ prompts sampled from the mixture (Appendix~\ref{app:exp-details}).

\textbf{Bayes baselines.}
We report two Bayes reference curves. \emph{Bayes (mixture)} is the Bayes predictor under the mixture prior over task families: $M_{\mathrm{Bayes}}(P^k) := \E\!\left[f(\bm{x}_{k+1}) \middle| D^k\right] = \sum_{i=1}^T \pi_i(D^k) \E\!\left[f(\bm{x}_{k+1}) \,\middle|\, D^k,\, I=i\right]$. \emph{Bayes (oracle)} assumes the true task family is given at test time and uses the within-family posterior mean $M_{\mathrm{Bayes}}^{\mathrm{oracle}}(P^k):= \E\!\left[f(\bm{x}_{k+1}) \middle| D^k, I=i^\star\right]$. Thus, the gap between $M_{\mathrm{Bayes}}$ and $M_{\mathrm{Bayes}}^{\mathrm{oracle}}$ represents the task-type identification uncertainty, which our theory predicts to vanish fast in $k$ (Theorem~\ref{thm:mixture-to-true}).

\textbf{Findings.}
\emph{(1) Bayes Gap shrinks with both $N$ and $p$.}
The left and middle panels in Figure~\ref{fig:maintext} report the Bayes Gap, estimated on 1000 independent test prompts sampled from the same mixture.
Across $N$-sweeps (left) and $p$-sweeps (middle), the Bayes Gap $R_{\mathrm{BG}}(M_{\hat\theta})$ decreases as either the number of pretraining prompts $N$ or the prompt length $p$ increases, supporting the coupled dependence on $pN$ in Theorem~\ref{thm:upper-bayes}.

As an additional quantitative check, we pooled the Bayes-gap values from the $N$- and $p$-sweeps in the left and middle panels of Figure~\ref{fig:maintext}. We then fit the coupled model suggested by Theorem~\ref{thm:upper-bayes},
\[
    \mathrm{BG}(p,N) \approx a+b(pN)^{-\beta}+\frac{c}{N},
\]
and compared it with single-variable surrogates. Table~\ref{tab:pn-coupled-fit} shows that the joint model explains the pooled Bayes-gap values much better than the $N$-only or $p$-only alternatives.

\begin{table}[t]
    \centering
    \caption{Power-law fits to the pooled Bayes-gap values from the $N$- and $p$-sweeps.}
    \label{tab:pn-coupled-fit}
    \small
    \begin{tabular}{@{}llc@{}}
    \toprule
    Model & Surrogate & $R^2$ \\
    \midrule
    joint & $a+b(pN)^{-\beta}+c/N$ & $0.912$ \\
    $N$-only & $a+bN^{-\beta}$ & $0.030$ \\
    $p$-only & $a+cp^{-\gamma}$ & $0.016$ \\
    \bottomrule
    \end{tabular}
\end{table}

The result suggests that the coupled $p$--$N$ power-law dependence is visible at the level of empirical trends. We interpret this as order-level support for Theorem~\ref{thm:upper-bayes}, rather than as evidence that the constants or exponents in the upper bound are tight.

\emph{(2) Rapid approach to the oracle Bayes curve at inference time.}
The right panel shows the MSE averaged over 1000 independent test prompts for each $k$. With sufficiently large pretraining (so that the Bayes Gap is negligible), the Transformer’s test-time MSE closely tracks \emph{Bayes (mixture)}. As the number of examples $k$ grows, the Transformer’s curve quickly approaches \emph{Bayes (oracle)} after only a few examples. 
This demonstrates that the task-type identification error vanishes quickly, so the remaining error is the intrinsic error of the true task family, which aligns with Theorem~\ref{thm:mixture-to-true}.

As a further test beyond the two-family setting, Appendix~\ref{app:exp-details} reports a harder three-family experiment with a noise-level sweep. The same qualitative pattern persists: higher noise slows task-family identification, while the Transformer continues to track the Bayes-mixture predictor and move toward the oracle curve after a few in-context examples.

Overall, our experiments support the main qualitative predictions of the theory. While our theory focuses on a uniform-attention architecture, we empirically observe similar qualitative trends in a more practical Transformer, suggesting that these phenomena are not specific to the simplified model.

\section{Conclusion}\label{sec:conclusion}

We develop a finite-sample theory of ICL under meta-learning with task mixtures. A risk identity splits ICL error into a model-dependent Bayes Gap and a model-independent Posterior Variance, isolating what pretraining can improve versus what only additional context can reduce. For simplified Transformers, we give non-asymptotic Bayes-Gap bounds that couple prompt length $p$ and the number of pretraining prompts $N$. Furthermore, we demonstrate that ICL rapidly identifies the true task within heterogeneous mixtures, effectively reducing inference error to the true task's intrinsic uncertainty. We also quantify OOD sensitivity: only the Bayes Gap grows with Wasserstein shift. These findings provide a theoretical framework that validates ICL’s capability to efficiently adapt to diverse tasks.
The experiments with a GPT-2 architecture provide qualitative evidence that the phenomena described by our theory also appear in a standard Transformer.

\textbf{Practical implications.}\,
Although our analysis is stylized, it suggests several practical guidelines for ICL. First, at inference time, very long prompts are not always necessary: in our experiments, about three examples were often enough for the model to identify the task (right panel of Figure~\ref{fig:maintext}). For harder problems, however, additional context can still help by reducing the true-family minimax risk, as described in Theorem~\ref{thm:mixture-to-true}. Second, under distribution shift, it is preferable to match the test input distribution to the pretraining distribution whenever possible. Theorem~\ref{thm:ood-wass} suggests that such shifts mainly degrade the Bayes gap; under moderate unavoidable shift, controlling Transformer Lipschitz constants, for example by spectral normalization, and then increasing $p$ and $N$ is a more principled strategy. Third, for benchmark design, small-$k$ evaluations mainly test task-family identification, while larger-$k$ evaluations also probe learning within the identified family.

\textbf{Limitations and Future Work.}\,
Our finite-sample pretraining bound (Theorem~\ref{thm:upper-bayes}) is proved only for the uniform-attention class. The main obstacle to extending this result to standard softmax attention is that the attention weights are prompt-dependent and couple all context tokens through logits and normalization, so the effective hypothesis class becomes data-dependent. A plausible route is to treat softmax attention as a data-dependent weighted empirical summary and combine Lipschitz control of the attention map with sequential covering / sequential Rademacher bounds for this compositional class. Another direction is to move beyond a finite and fixed number of task families. In Theorem~\ref{thm:mixture-to-true}, the task-identification term contains a factor of order $T$; under fixed separation constants, keeping this term small requires a context length that grows only logarithmically with $T$. In more realistic settings with many or infinitely many task types, this finite factor should be replaced by a complexity measure of the task space, such as a covering number, metric entropy, or a prior-mass term.

\section*{Acknowledgments}
We thank the anonymous reviewers for their constructive comments.
TW was partially supported by JSPS KAKENHI (26K21188), JST ACT-X (JPMJAX23CS) and RIKEN Incentive Research Project.
TS was partially supported by JSPS KAKENHI (24K02905) and JST CREST (PMJCR2015). This research is supported by the National Research Foundation, Singapore and the Ministry of Digital Development and Information under the AI Visiting Professorship Programme (award number AIVP-2024-004). Any opinions, findings and conclusions or recommendations expressed in this material are those of the author(s) and do not reflect the views of National Research Foundation, Singapore and the Ministry of Digital Development and Information.

\section*{Impact Statement}
This paper presents work whose goal is to advance the field of Machine Learning. There are many potential societal consequences of our work, none of which we feel must be specifically highlighted here.

\bibliographystyle{icml2026}
\bibliography{main}

\clearpage

\newpage
\appendix
\section*{\Large Appendix}

\section{Notation and Definitions}\label{sec:notation}

This section provides a comprehensive list of notations and definitions used throughout the paper for ease of reference.

\paragraph{General Mathematical Notation}
\begin{itemize}[labelindent=2pt,leftmargin=*,itemsep=4pt, parsep=1pt, topsep=0pt, partopsep=0pt]
    \item $\R^d$: The $d$-dimensional Euclidean space.
    \item $\|\cdot\|_2$: The Euclidean ($\ell_2$) norm for vectors and the spectral (operator) norm for matrices.
    \item $\|\cdot\|_1$: The $\ell_1$ norm of a vector.
    \item $\|\cdot\|_0$: The $\ell_0$ pseudo-norm of a vector, counting the number of non-zero elements.
    \item $\mathbf{1}$: A vector of all ones, with its dimension inferred from the context.
    \item $\Delta^{m-1}$: The standard probability simplex in $\R^m$, defined as $\Delta^{m-1} = \{ \bm{s} \in [0,1]^m : \sum_{j=1}^m s_j = 1 \}$.
    \item $\gU, \gC$: The example domain (the space of $(\bm{x}_i, y_i)$) and the query domain (the space of $\bm{x}_{k+1}$), respectively. 
    \item $F_i$: The function space for tasks of type $i$.
    \item $\Theta$: The parameter space for the neural network model $M_\theta$.
    \item $\mathrm{diam}(A) := \sup_{\bm{x}, \bm{y} \in A} \|\bm{x} - \bm{y}\|_2$: The diameter of a set $A$.
    \item $B(\bm{a}, R)$: The closed Euclidean ball of radius $R\ge 0$ centered at $\bm{a}$, $B(\bm{a}, R) := \{ \bm{x} \in \R^d : \|\bm{x}-\bm{a}\|_2 \le R \}$, where the ambient dimension $d$ is understood from context.
    \item $\Lip(f)$: The Lipschitz constant of a function $f$.
    \item $f \asymp g$: Indicates that $f$ and $g$ are of the same order, i.e., there exist constants $c_1, c_2 > 0$ such that $c_1 g \le f \le c_2 g$.
    \item $f \lesssim g$: Indicates that $f$ is less than or equal to $g$ up to a constant factor, i.e., $f \le C g$ for some universal constant $C>0$.
    \item $\tilde O(\cdot)$: Asymptotic notation that hides polylogarithmic factors.
    \item $\polylog(\cdot) := (\log (\cdot))^{O(1)}$, i.e., $\log^c (\cdot)$ for some constant $c>0$.
    \item $\sigma(\cdot)$: The Rectified Linear Unit (ReLU) activation function, $\sigma(u) = \max\{u, 0\}$, applied element-wise.
    \item $\mathrm{clip}_{[a,b]}(x) := \max(a, \min(b, x))$: The clipping function.
    \item $\circ$: The composition operator for maps. For maps $f,g$ with compatible domains or codomains, $(f\circ g)(x)=f(g(x))$. For a probability measure $\mu$ and a measurable map $T$, $\mu\circ T^{-1}$ denotes the pushforward measure (law) of $T$ under $\mu$, i.e., $(\mu\circ T^{-1})(A)=\mu(T^{-1}(A))$ for measurable sets $A$.
\end{itemize}

\paragraph{Probability and Statistics}
\begin{itemize}[labelindent=2pt,leftmargin=*,itemsep=4pt, parsep=1pt, topsep=0pt, partopsep=0pt]
    \item $\gP_X, \gP_\varepsilon, \dots$: Probability distributions of random variables $X, \varepsilon, \dots$.
    \item $\E_{X \sim \gP_X}[\cdot]$ or simply $\E[\cdot]$: The expectation with respect to the distribution of the random variable(s) specified in the subscript. If no subscript is present, the expectation is taken over all relevant random variables.
    \item $\Var(\cdot)$: The variance of a random variable.
    \item $\mathrm{Emp}_k(\bm{u}_{1:k}) := \frac{1}{k}\sum_{t=1}^k \delta_{\bm{u}_t}$: The empirical measure of the context.
    \item $\Sigma_X := \E\!\left[(\bm{x}-\E\bm{x})(\bm{x}-\E\bm{x})^\top\right]$: The covariance matrix of $\bm{x}$.
    \item $\Pr(\cdot)$: The probability of an event.
    \item $X \sim \gP_X$: The random variable $X$ is drawn from the distribution $\gP_X$.
    \item $\stackrel{\mathrm{i.i.d.}}{\sim}$: A symbol for ``is independently and identically distributed as''.
    \item $X \perp Y$: The random variables $X$ and $Y$ are statistically independent.
    \item $\gP_{X,Y \mid f}$: The joint distribution of $(X,Y)$ conditional on a function $f$.
    \item $\gP^{\otimes k}$: The $k$-fold product measure, corresponding to $k$ i.i.d. draws from the distribution $\gP$.
    \item $I \sim \mathrm{Categorical}(\bm{\alpha})$: $I$ is a discrete random variable on $\{1,\dots,T\}$ with $\Pr(I=i)=\alpha_i$ and $\sum_{i=1}^T \alpha_i=1$, and $I \perp \{\bm{x}_k\}_{k\ge1},\, \{\varepsilon_k\}_{k\ge1}$.
    \item $\gP(f \mid D^k)$: The marginal posterior distribution of the task function $f$ given the context data $D^k$.
    \item $\pi_i(D^k) := \Pr(I=i \mid D^k)$: The marginal posterior probability of task type (task family) $i$ given the context $D^k$.
    \item $\gG_k$: The $\sigma$-algebra generated by the random variables $D^{k}$, representing the information available at step $k$.
    \item $\gG'_k$: The $\sigma$-algebra generated by the random variables $(D^{k},\bm{x}_{k+1})$.
    \item Sub-Gaussian: A centered random variable $X$ is sub-Gaussian with proxy variance $\sigma^2$ if $\E e^{\lambda X} \le \exp(\sigma^2 \lambda^2/2)$ for all $\lambda\in\R$.
    \item Sub-exponential: A centered random variable $X$ is $(\nu,b)$-sub-exponential if $\E e^{\lambda X} \le \exp(\nu^2 \lambda^2/2)$ for all $|\lambda|\le 1/b$.
    \item $\mathrm{KL}(P\|Q)$: Kullback–Leibler divergence between distributions $P$ and $Q$, used to quantify separation between task types.
    \item $m_i(D^k) := \int \prod_{t=1}^k p(y_t\mid \bm{x}_t,f)\,\gP_{F_i}(\dd f)$: The marginal likelihood of the context $D^k$ under task type $i$.
    \item $\mu_{i,t}(\bm{x}),\ s_{i,t}^2(\bm{x})$: Predictive mean and variance for task type $i$ after observing $t{-}1$ examples.
    \item $\mathcal{N}(\mu,\sigma^2)$: Gaussian (normal) distribution with mean $\mu$ and variance $\sigma^2$.
    \item Truncated Gaussian: A Gaussian distribution restricted to a bounded support set and renormalized to integrate to 1.
    \item $\tfrac{\dd P}{\dd Q}$: Radon--Nikodym derivative of $P$ with respect to $Q$ (when $P$ is absolutely continuous with respect to $Q$).
\end{itemize}

\paragraph{Meta-learning Setup}
\begin{itemize}[labelindent=2pt,leftmargin=*,itemsep=4pt, parsep=1pt, topsep=0pt, partopsep=0pt]
    \item $T$: The total number of distinct task types (task families).
    \item $p$: The maximum number of in-context examples (i.e., the context length).
    \item $i^\star$: The index of the true task type at inference time.
    \item $N$: The number of prompts in the pretraining dataset.
    \item $d_{\mathrm{feat}}$: The dimensionality of the input features $\bm{x}$.
    \item $d_{\mathrm{eff}} := d_{\mathrm{feat}} + 1$: The effective dimensionality of an example pair $(\bm{x}_i, y_i)$.
    \item $m$: The dimensionality of the feature vector produced by the encoder, $\phi_\theta(\bm{x}_i, y_i)$.
    \item $P = (\bm{x}_1, y_1, \dots, \bm{x}_p, y_p, \bm{x}_{p+1})$: A full prompt of length $p$.
    \item $P^k = (\bm{x}_1, y_1, \dots, \bm{x}_k, y_k, \bm{x}_{k+1})$: A partial prompt of length $k$.
    \item $D^k = \{(\bm{x}_j, y_j)\}_{j=1}^k$: The context data, consisting of $k$ example pairs. Under our prompt-generating process, conditional on $(I,f)$, the pairs $\{(\bm{x}_j, y_j)\}_{j=1}^k$ are i.i.d.\ and hence exchangeable. Accordingly, we freely identify $D^k$ with the corresponding multiset (equivalently, its empirical measure) whenever order is irrelevant.
    \item $M, M_\theta, M_{\hat{\theta}}$: A generic predictor, the uniform-attention Transformer parameterized by $\theta$, and the uniform-attention Transformer obtained by empirical risk minimization (ERM), respectively.
    \item $\phi_\theta$: The feature encoder network that maps an example $(\bm{x},y)$ to a feature vector in $\Delta^{m-1}$.
    \item $\rho_\theta$: The decoder network that predicts the output from the aggregated features and the query input.
    \item $\ell(u,v) = (u-v)^2$: The squared error loss function used throughout the paper.
    \item $S(\gT_\theta) = \prod_{\ell=1}^{L} \|W^{(\ell)}\|_2$: The spectral product of the weight matrices of a neural network $\gT_\theta$.
    \item $\mathrm{Renorm}_\tau(\bm{s})$: A specific renormalization layer that maps a vector $\bm{s} \in \R^m$ to the probability simplex $\Delta^{m-1}$.
    \item $B_f, B_X, B_M$: Uniform bounds on $|f(\bm{x})|$, $\|\bm{x}\|_2$, and $|M(P^k)|$, respectively (as assumed).
    \item $S(\phi_\theta), S(\rho_\theta)$: Layerwise spectral-product bounds (or induced Lipschitz budgets) for the feature network $\phi_\theta$ and decoder $\rho_\theta$ used in generalization and stability analyses.
\end{itemize}

\paragraph{Theoretical Quantities}
\begin{itemize}[labelindent=2pt,leftmargin=*,itemsep=4pt, parsep=1pt, topsep=0pt, partopsep=0pt]
    \item $R(M)$: The in-context learning (ICL) risk of a predictor $M$. \(R(M):=\tfrac1p\sum_{k=1}^p \E[(M(P^k)-f(\bm{x}_{k+1}))^2]\).
    \item $M_{\mathrm{Bayes}}(P^k) := \E[f(\bm{x}_{k+1}) \mid D^k, \bm{x}_{k+1}]$: The Bayes predictor, which corresponds to the posterior mean of the query output given the context and is the optimal predictor for the squared error loss.
    \item $R_{\mathrm{BG}}(M)$: The Bayes Gap, measuring the squared difference between the predictor $M$ and the Bayes predictor, averaged over prompts. This term is reducible by training the model.
    \item $R_{\mathrm{BG},k}(M) := \E[\{M(P^k)-M_{\mathrm{Bayes}}(P^k)\}^2]$, $R_{\mathrm{PV},k} := \E[\Var(f(\bm{x}_{k+1})\mid D^k)]$: Per-$k$ versions used in the risk decomposition.
    \item $R_{\mathrm{BG}}^{(\mathsf P)}(M) := \frac{1}{p}\sum_{k=1}^p \E_{P^k\sim \gL_{\mathsf P}(P^k)}[\{M(P^k)-M_{\mathrm{Bayes}}(P^k)\}^2]$: Bayes Gap evaluated under a prompt distribution $\mathsf P$ (used in OOD analysis).
    \item $R_{\mathrm{PV}}$: The Posterior Variance, which is the irreducible error corresponding to the variance of the posterior predictive distribution. This term is independent of the model $M$.
    \item $R_k^\star(F_{i^\star})$: The minimax risk for predicting a function from the true task class $F_{i^\star}$ given $k$ examples.
    \item $R_k^\star(F_{i^\star};\mathsf R)$: The minimax risk for predicting a function from the true task class $F_{i^\star}$ under prompt distribution $\mathsf R$ (default $\mathsf R$ is the pretraining domain).
    \item $L, \alpha$: Constants that define the H\"older condition on the Bayes predictor (see Lemma~\ref{lem:additive} and Theorem~\ref{thm:upper-bayes}).
    \item $p_i(y_t \mid \bm x_t, D^{t-1})$: Posterior predictive density under task family $i$.
    \item $Z_{j,t} := \log \frac{p_j(y_t\mid \bm{x}_t,D^{t-1})}{p_{i^\star}(y_t\mid \bm{x}_t,D^{t-1})}$: The predictive log-likelihood ratio increment.
    \item $D_j,\ \nu_j,\ b_j,\ C$: Identification-rate constants for the wrong task $j\neq i^\star$; $D_j$ is the negative drift, $(\nu_j,b_j)$ are sub-exponential parameters, and $C$ controls exponential concentration of the posterior mass on incorrect types.
    \item $S_k$: The symmetric group on $\{1,\dots,k\}$; $\gS[M]$ denotes the symmetrized predictor obtained by averaging $M$ over all permutations in $S_k$.
    \item Predictable tree: A depth-$p$ tree $z=\{z_t(\xi_{1:t-1})\}_{t\le p}$ whose node $z_t$ depends only on past signs $\xi_{1:t-1}\in\{\pm1\}^{t-1}$.   
    \item $\ell_2$ sequential metric: For a depth-$p$ tree $z$ and predictable sequences $v,v'$, and for a path $\xi\in\{\pm1\}^p$, define
    $d_{2,\xi}(v,v';z):=\left[\frac1p\sum_{t=1}^p \left\{v_t(z_t(\xi_{1:t-1}))-v'_t(z_t(\xi_{1:t-1}))\right\}^2\right]^{1/2}.$
    \item $N^{\mathrm{seq}}_{2}(\alpha,\gF;z)$: The sequential covering number~\citep{Rakhlin2010online,Rakhlin2015} is the minimal size of a predictable $\alpha$-cover on a predictable tree $z$ with respect to $d_{2,\xi}(\cdot,\cdot; z)$ such that, for all $\xi\in\{\pm1\}^p$ and all $f\in\gF$, there exists $v$ in the cover with $d_{2,\xi}(f\circ z, v;z)\le \alpha$.
    \item $N^{\mathrm{seq}}_{2}(\alpha,\gF,p)$: The depth-$p$ $\ell_2$ sequential covering number is the worst–tree version $N^{\mathrm{seq}}_{2}(\alpha,\gF,p)
    := \sup_{z} N^{\mathrm{seq}}_{2}(\alpha,\gF;z)$, where the supremum ranges over all predictable trees $z$ of depth $p$. 
    \item Sequential Rademacher complexity: $\mathfrak R^{\mathrm{seq}}_{p}(\gF):=\sup_{z}\E_{\xi}\left[\sup_{f\in\gF}\frac{1}{p}\sum_{t=1}^{p}\xi_t f\big(z_t(\xi_{1:t-1})\big)\right]$, where $\xi_t\stackrel{\text{i.i.d.}}{\sim}\mathrm{Unif}\{\pm1\}$.
    \item $W_1(\mu,\nu; d)$: The 1-Wasserstein distance between probability measures $\mu,\nu$ with ground metric $d$. The specialized distances $W_{\alpha}^{(u)}$ and $\mathsf W^{(k)}_{\alpha}$ below are instances of $W_1(\cdot,\cdot; \cdot)$ with particular choices of $d$.
    \item $W_{\alpha}^{(u)}(\bm{s},\bm{t})$: Discrete 1-Wasserstein on $\Delta^{m-1}$ with grid $\{\bm{r}_j\}\subset\gU$ and cost $c^{(u)}(j,\ell)=\|\bm{r}_j-\bm{r}_\ell\|_2^{\alpha}$ $(0<\alpha\le 1)$; $W_{\alpha}^{(u)}(\bm{s},\bm{t})=\min_{\pi\ge 0}\sum_{j,\ell} c^{(u)}(j,\ell)\pi_{j\ell}$ subject to the usual marginal constraints. On the simplex, $W_{\alpha}^{(u)}(\bm{s},\bm{t})\le \frac{\mathrm{diam}(\gU)^{\alpha}}{2}\|\bm{s}-\bm{t}\|_1$.
    \item $\gP_X,\ \gQ_X$: Source (pretraining) and target (test) input distributions used in OOD analysis.
    \item $\gL_{\mathsf P}(P^k),\ \gL_{\mathsf Q}(P^k)$: Distributions of length-$k$ prompts under the source and target domains, respectively.
    \item $\overline d_{k,\alpha}\big((\bm{u}_{1:k},\bm{c}),(\bm{u}'_{1:k},\bm{c}')\big) := \frac{1}{k}\sum_{i=1}^k \|\bm{u}_i-\bm{u}'_i\|_2^\alpha + \|\bm{c}-\bm{c}'\|_2^\alpha$: Prompt-level ground metric $(0<\alpha\le 1)$.
    \item $\mathsf W^{(k)}_{\alpha}\big(\gL_{\mathsf P}(P^k),\gL_{\mathsf Q}(P^k)\big) := W_1\big(\gL_{\mathsf P}(P^k),\gL_{\mathsf Q}(P^k); \overline d_{k,\alpha}\big)$: Prompt-level Wasserstein distance used in OOD bounds.
    \item $\mathsf P,\ \mathsf Q$: Generic prompt distributions used when evaluating risks.
\end{itemize}

\section{Details of the Experiments}\label{app:exp-details}
In this section, we describe the experimental details and present additional plots and discussion.

\paragraph{Prompt Generation}
We set the feature dimension $d_{\mathrm{feat}}=5$. Tasks are generated from a mixture of two regression families: Family 1 is linear regression $y=\bm{w}^\top \bm{x}+\varepsilon$, and Family 2 is regression via an element-wise nonlinear map $\varphi(\bm{x})=\tanh(\bm{x})$, i.e., $y=\bm{w}^\top \varphi(\bm{x})+\varepsilon$. The mixture weights are $(\alpha_1,\alpha_2)=(0.5,0.5)$. Inputs are sampled i.i.d. as $\bm{x}\sim\mathcal{N}(0,I_{d_{\mathrm{feat}}})$.
The weight vector $\bm{w}$ for each family is drawn from $\gN(\bm{\mu}_i,\tau^2 I)$ with $\tau=1$, $\bm{\mu}_1=(3,3,3,3,3)$ and $\bm{\mu}_2=(-3,-3,-3,-3,-3)$. The observation noise $\varepsilon$ is zero-mean Gaussian with $\sigma_\varepsilon=0.1$. This process generates prompts $P^k = (\bm{x}_1,y_1,\,\ldots,\,\bm{x}_k,y_k,\,\bm{x}_{k+1})\in(\R^{d_{\mathrm{feat}}}\times\R)^{k}\times\R^{d_{\mathrm{feat}}}$ and the target $y_{k+1}\in\R$.

\paragraph{Architecture}
As in \citet{oko2024pretrained} and \citet{garg2022case}, we adopt a GPT-2 model~\citep{radford2019language,wolf-etal-2020-transformers}, with 12 layers, 8 heads, a hidden size of 256, and positional embeddings removed. The input is a token sequence of length $2k+1$: $P^k=(\bm{x}_1,y_1,\dots,\bm{x}_k,y_k,\bm{x}_{k+1})$.

\paragraph{Evaluation of Bayes Gap}
For any prompt $(D^k,\bm{x}_{k+1})$, since each family admits a conjugate Bayesian update, we can compute the mixture posterior $\pi_i(D^k)$ and the Bayes predictor $M_{\mathrm{Bayes}}$ in closed form~\citep{gelman2013bda3}. At inference time, using token sequences of length $2p+1$, we compute the squared difference between the model's prediction and the Bayes prediction $M_{\mathrm{Bayes}}(D^k,\bm{x}_{k+1})$. We average this over 1000 trials and report the mean as the Bayes Gap.
\begin{itemize}[labelindent=2pt,leftmargin=*,itemsep=6pt, parsep=1pt, topsep=0pt, partopsep=0pt]
  \item[-] \textbf{$N$-Sweep (fixed $p$):} For $p\in\{5,10,15\}$, we train the model on data with $N\in\{128,256,\dots,8192\}$ and report the Bayes Gap.
  \item[-] \textbf{$p$-Sweep (fixed $N$):} For $N\in\{500,1000,2000\}$, we vary $p\in\{1,2,4,6,8,10,12,14\}$ and report the Bayes Gap.
\end{itemize}
During pretraining, we perform the ERM as in \eqref{eq:erm} using Adam \citep{kingma2015adam,paszke2019pytorch} with a learning rate of $0.00005$, a batch size of 64, and 10000 steps.

\begin{figure}[t]
    \centering
    \includegraphics[width=\linewidth]{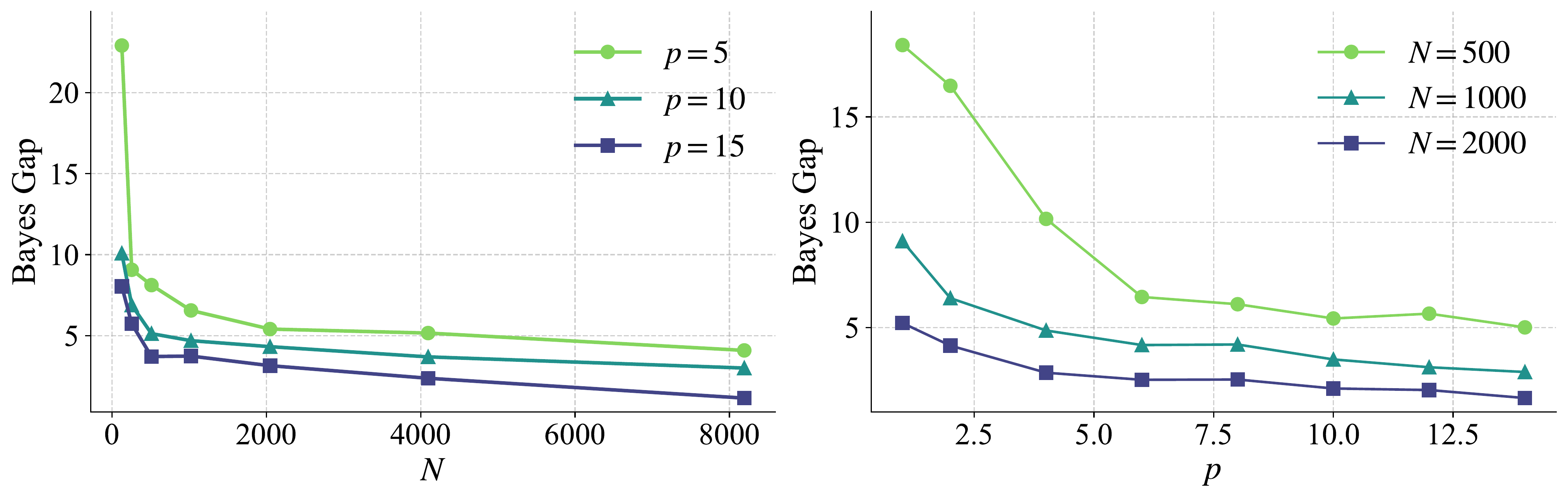}
    \caption{\textbf{Behavior of the Bayes Gap (left: $N$-sweep, right: $p$-sweep).} 
    The left panel fixes $p\in\{5,10,15\}$ and varies the number of pretraining prompts $N$; 
    the right panel fixes $N\in\{500,1000,2000\}$ and varies the context length $p$ in pretraining. 
    In both cases, the Bayes Gap decreases generally as $N$ or $p$ increases, 
    demonstrating that longer contexts and more pretraining improve approximation to the Bayes predictor.}
    \label{fig:BayesGap}
\end{figure}
Figure~\ref{fig:BayesGap} illustrates how the Bayes Gap depends jointly on the number of pretraining prompts $N$ and the context length $p$. In the $N$-sweep (left panel), larger $p$ values start with smaller gaps and maintain this advantage as $N$ grows. In the $p$-sweep (right panel), increasing $p$ steadily reduces the gap for any fixed $N$, with higher $N$ curves lying lower overall. 

\paragraph{Evaluation of In-Context Error}
We pretrain the model on a dataset with $N=12800000$ and $p=15$ using Adam, which involves 200000 steps of online learning, sampling 64 data points per step, similar to \citep{garg2022case}.
On $1000$ test prompts, the Bayes Gap is below $10^{-2}$, so the pretrained Transformer reasonably approximates $M_{\mathrm{Bayes}}$ (cf.\ Theorem~\ref{thm:upper-bayes}). 
We then investigate: (i) the prediction accuracy of the Transformer via the mean squared error (MSE) with respect to $y$, and (ii) Bayes-like behavior of the Transformer via linear probing: generating inputs, recovering the regression coefficients implied by the Transformer's outputs (inverting the induced linear map), and comparing against the optimal Bayesian coefficients. The probed coefficients are averaged over $2000$ independent input–output sets.

\paragraph{Bayes baselines.}
We use two Bayes reference predictors computed in closed form (conjugate Gaussian updates). For each family $i\in\{1,2\}$, let $m_i(D^k,\bm{x}_{k+1}):=\E\!\left[f(\bm{x}_{k+1})\mid D^k, I=i\right]$. Let the family posterior be $\pi_i(D^k):=\Pr(I=i\mid D^k)\propto \alpha_i\,p(D^k\mid I=i)$. We report two Bayes predictors:
\begin{itemize}[labelindent=2pt,leftmargin=*,itemsep=4pt, parsep=1pt, topsep=0pt, partopsep=0pt]
  \item[-] \textbf{Bayes (mixture):}\;
  $M_{\mathrm{Bayes}}(P^k)=\sum_{i=1}^2 \pi_i(D^k)\, m_i(D^k,\bm{x}_{k+1})$.
  \item[-] \textbf{Bayes (oracle):}\;
  $M_{\mathrm{Bayes}}^{\mathrm{oracle}}(P^k)= m_{i^\star}(D^k,\bm{x}_{k+1})$
  (true family $i^\star$ is known).
\end{itemize}
The difference between these curves reflects task-type identification uncertainty.

Figure~\ref{fig:ICL-mixture} illustrates the inference-time behavior of the Transformer. First, with sufficiently large pretraining, the Transformer tracks the Bayes (mixture) curve almost exactly and its error decays to nearly zero, consistent with our theory (Theorem~\ref{thm:upper-bayes}). In terms of parameter estimation, the Transformer likewise reproduces the Bayes-mixture behavior, supporting the view that it performs Bayesian inference in context. Moreover, the gap between the Transformer and Bayes (oracle) diminishes rapidly as the number of in-context examples $k$ increases and essentially vanishes around $k \approx 3\text{--}4$. This indicates that the task type identification error quickly diminishes and the remaining bottleneck is the intrinsic (optimal) error of the true task family (Theorem~\ref{thm:mixture-to-true}).

\begin{figure}
    \centering
    \includegraphics[width=\linewidth]{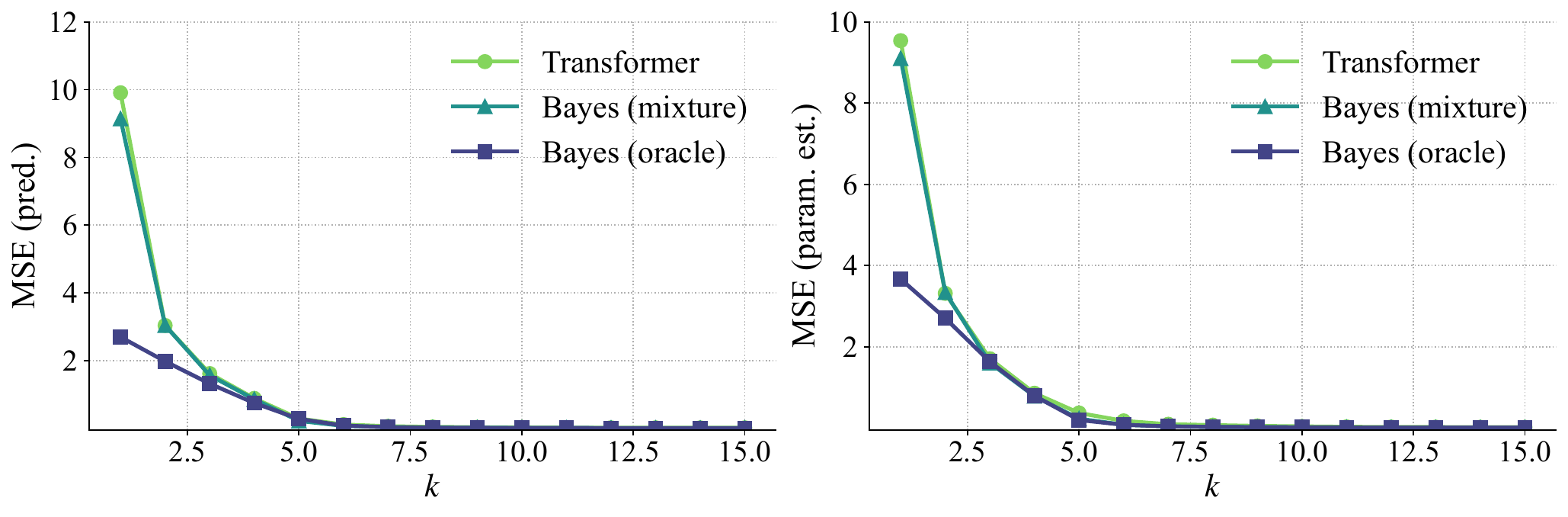}
    \caption{\textbf{In-context error under task mixtures (left: predictive MSE; right: parameter-estimation MSE).} As the context length $k$ increases, both predictive error for the next label $y_{k+1}$ from $P^k$ (left) and parameter-estimation error (right) decrease monotonically. The Transformer closely tracks the mixture Bayes predictor and, with sufficient context, approaches the oracle Bayes curve that knows the true task family. This demonstrates the rapid concentration of the task-index posterior under growing context and the corresponding shrinkage of the irreducible term.}
\label{fig:ICL-mixture}
\end{figure}

\paragraph{Harder three-family experiment.}
To test whether the same mechanism persists beyond the two-family mixture, we also performed a more challenging synthetic experiment. We replaced the original two-family mixture by a three-family mixture consisting of (i) linear regression, (ii) tanh-feature regression, and (iii) odd-quadratic-feature regression with the element-wise feature map $\varphi_{\mathrm{odd}}(\bm x)=\bm x\odot |\bm x|$. We further introduced a difficulty sweep by increasing the observation noise over $\sigma\in\{0.1,0.3,0.5\}$. This keeps exact Bayes mixture and oracle baselines available while making both task-family identification and within-family prediction more challenging.

\begin{table*}[t]
    \centering
    \caption{Harder three-family mixture experiment under an observation-noise sweep. Wrong-family mass is $\sum_{i\neq i^\star}\pi_i(D^k)$. Each MSE entry is reported as Transformer / Bayes (mixture) / Bayes (oracle).}
    \label{tab:harder-three-family}
    \small
    \begin{tabular}{@{}cccccc@{}}
    \toprule
    Noise $\sigma$ & \shortstack{Wrong-family\\mass @ $k=3$} & \shortstack{Wrong-family\\mass @ $k=5$} & \shortstack{MSE @ $k=1$\\Trans. / Mix / Oracle} & \shortstack{MSE @ $k=3$\\Trans. / Mix / Oracle} & \shortstack{MSE @ $k=5$\\Trans. / Mix / Oracle} \\
    \midrule
    $0.1$ & $0.067$ & $0.0010$ & $19.15 / 18.027 / 4.063$ & $1.814 / 1.738 / 1.136$ & $0.156 / 0.109 / 0.013$ \\
    $0.3$ & $0.080$ & $0.0025$ & $23.27 / 22.737 / 4.773$ & $3.135 / 2.879 / 2.110$ & $0.282 / 0.224 / 0.152$ \\
    $0.5$ & $0.093$ & $0.0054$ & $23.94 / 22.426 / 6.124$ & $3.291 / 2.908 / 2.364$ & $0.338 / 0.317 / 0.314$ \\
    \bottomrule
    \end{tabular}
\end{table*}

Table~\ref{tab:harder-three-family} shows two clear patterns. First, as the noise level increases, the posterior mass outside the true family decays more slowly at the same context length. Thus, harder settings require more in-context examples for task-family identification. Second, the Transformer exhibits the same qualitative behavior as the Bayes-mixture predictor: the prediction error drops rapidly in the first few in-context examples, and the Transformer quickly approaches the Bayes oracle. This indicates that the Transformer identifies the true task family after only a few examples; after that point, the remaining error is mainly the task-specific intrinsic error. Overall, the qualitative mechanism predicted by Theorem~\ref{thm:mixture-to-true} persists in the harder three-family setting and under the noise-difficulty sweep.

\section{Permutation Invariance and Justification for Uniform-Attention Transformers}\label{sec:bayes-properties}
This section formalizes the permutation invariance of the Bayes predictor under the prompt-generating process (Definition~\ref{def:PGP}). In summary, by Proposition~\ref{prop:empirical}, the Bayes predictor depends on $D^k$ only via its empirical measure. Hence, any architecture that can approximate functionals of empirical distributions, e.g., uniform-attention Transformers, matches the symmetry of the optimal predictor. Moreover, in view of Theorem~\ref{thm:symm}, replacing any non-invariant model by its permutation average never increases risk. 

Recall that the loss is the squared error, and all random objects live on standard Borel spaces, so regular conditional distributions exist.

Under Definition~\ref{def:PGP}, once the task $(I,f)$ is fixed, the context pairs $(\bm{x}_t,y_t)$ are i.i.d.\ draws. The following lemma says that the context can be treated as a multiset rather than an ordered list.
\begin{lemma}[Conditional exchangeability]\label{lem:cond-exch}
Fix $(I,f)$ with $I\in\{1,\dots,T\}$ and $f\in F_I$. Under Definition~\ref{def:PGP}, the context pairs $(\bm{x}_t,y_t)_{t=1}^k$ are i.i.d.\ from $\gP_{X,Y\mid f}$; hence for any permutation $\pi$ of $\{1,\ldots,k\}$,
\begin{align*}
    &\gL\!\left((\bm{x}_1,y_1,\ldots,\bm{x}_k,y_k)\,\middle|\,I,f\right) \\
    &=\gL\!\left((\bm{x}_{\pi(1)},y_{\pi(1)},\ldots,\bm{x}_{\pi(k)},y_{\pi(k)})\,\middle|\,I,f\right),
\end{align*}
where $\gL(Z\mid W)$ denotes the conditional law (distribution) of $Z$ given $W$.
\end{lemma}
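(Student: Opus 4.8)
The plan is to reduce the statement to the elementary fact that a product measure is invariant under permutations of its coordinates, after first identifying the conditional law of the context as exactly such a product measure. First I would unwind Step~3 of Definition~\ref{def:PGP} conditionally on the pair $(I,f)$: the inputs $\bm{x}_1,\dots,\bm{x}_k$ are i.i.d.\ $\gP_X$, the noises $\varepsilon_1,\dots,\varepsilon_k$ are i.i.d.\ $\gP_\varepsilon$, and the noise vector $(\varepsilon_t)_{t\le k}$ is independent of $\big(f,(\bm{x}_t)_{t\le k}\big)$. The cross-$t$ conditional independence is precisely what Assumption~\ref{ass:bounded_x} supplies, and together with the per-coordinate noise independence asserted in Definition~\ref{def:PGP} it gives that $\big((\bm{x}_t,\varepsilon_t)\big)_{t=1}^k$ has conditional law $(\gP_X\otimes\gP_\varepsilon)^{\otimes k}$ given $(I,f)$.

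Next I would push this product measure forward through the measurable map $\Psi_f(\bm{x},\varepsilon):=\big(\bm{x},\,f(\bm{x})+\varepsilon\big)$ applied in each coordinate: since $(\bm{x}_t,y_t)=\Psi_f(\bm{x}_t,\varepsilon_t)$ and a coordinatewise map sends a product measure to a product measure, the conditional law of $\big((\bm{x}_t,y_t)\big)_{t=1}^k$ given $(I,f)$ equals $\mu_f^{\otimes k}$, where $\mu_f:=(\Psi_f)_*(\gP_X\otimes\gP_\varepsilon)$ is exactly $\gP_{X,Y\mid f}$. This establishes the conditional i.i.d.\ claim. For permutation invariance, let $T_\pi$ be the coordinate permutation on $\gU^k$ induced by $\pi$. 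On measurable rectangles, $\mu_f^{\otimes k}(A_1\times\cdots\times A_k)=\prod_{t=1}^k\mu_f(A_t)$ is unchanged if the factors are permuted; since rectangles are closed under finite intersection and generate the product $\sigma$-algebra, a monotone-class argument upgrades this to $(T_\pi)_*\mu_f^{\otimes k}=\mu_f^{\otimes k}$. Composing with the regular conditional distribution given $(I,f)$ then yields the displayed equality of conditional laws.

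I do not expect a genuine obstacle: the only point that needs care is that all the conditional laws invoked — in particular conditioning on the function-valued random element $f$ — are honest regular conditional distributions, which is guaranteed by the standing assumption that every random object lives on a standard Borel space, so that $f\mapsto\gP_{X,Y\mid f}$ is a well-defined Markov kernel. Everything else is routine bookkeeping with product measures.
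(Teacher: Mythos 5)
Your proposal is correct and matches the paper's reasoning: the paper treats Lemma~\ref{lem:cond-exch} as immediate from Definition~\ref{def:PGP} (conditional on $(I,f)$ the pairs are i.i.d.\ draws, hence their joint law is a product measure and thus permutation-invariant), and your argument is just a careful spelling-out of exactly that — the pushforward through $\Psi_f$, the product-measure exchangeability via a monotone-class argument, and the standard-Borel caveat for regular conditional distributions. No gap; the extra detail is fine but not a different route.
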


If the order of the context is uninformative, averaging any predictor over all permutations should not increase risk. This symmetrization principle justifies restricting attention to permutation-invariant models.
\begin{theorem}[Risk–reducing symmetrization]\label{thm:symm}
For any measurable predictor $M$ and any $k\in\{1,\dots,p\}$, define the permutation–averaged predictor
$\gS[M](P^k):=\E_{\Pi}\!\left[M\!\left((\bm{x}_{\Pi(1)},y_{\Pi(1)},\ldots,\bm{x}_{\Pi(k)},y_{\Pi(k)}),\bm{x}_{k+1}\right)\mid P^k\right]$, where $\Pi$ is uniform on the symmetric group $S_k$ and independent of everything else. Then the ICL risk satisfies
\[
    R(\gS[M])\ \le\ R(M).
\]
\end{theorem}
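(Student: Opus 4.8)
The plan is to reduce the claim to a per-$k$ comparison and then obtain that comparison from conditional Jensen's inequality, once we recognize $\gS[M]$ as a conditional expectation. Since $R(M)=\tfrac1p\sum_{k=1}^p\E\big[(f(\bm x_{k+1})-M(P^k))^2\big]$ (expectation over $(I,f,D^k,\bm x_{k+1})$ as in Definition~\ref{def:PGP}) and $\gS[M](P^k)$ is again a measurable function of $P^k=(D^k,\bm x_{k+1})$, it suffices to prove, for each fixed $k\in\{1,\dots,p\}$,
\[
  \E\big[(f(\bm x_{k+1})-\gS[M](P^k))^2\big]\ \le\ \E\big[(f(\bm x_{k+1})-M(P^k))^2\big].
\]
We may assume $R(M)<\infty$ (otherwise there is nothing to prove); combined with Assumption~\ref{ass:bounded_f} this gives $M(P^k)\in L^2$ for every $k$, so all conditional expectations below are well defined.

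First I would note that, because $\Pi$ is uniform on $S_k$ and independent of everything else, the conditioning in the definition of $\gS[M]$ is vacuous, so $\gS[M](P^k)=\tfrac1{k!}\sum_{\pi\in S_k} M\big((\bm x_{\pi(1)},y_{\pi(1)},\dots,\bm x_{\pi(k)},y_{\pi(k)}),\bm x_{k+1}\big)$, a deterministic symmetric function of $P^k$. Next, writing $\bm u_t=(\bm x_t,y_t)$, let $\gG^\ast$ be the $\sigma$-algebra generated by $I$, $f$, $\bm x_{k+1}$, and the \emph{unordered} multiset $\{\bm u_1,\dots,\bm u_k\}$; then both $f(\bm x_{k+1})$ and $\gS[M](P^k)$ are $\gG^\ast$-measurable. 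The key identification is
\[
  \gS[M](P^k)=\E\big[M(P^k)\,\big|\,\gG^\ast\big]\qquad\text{almost surely.}
\]
This follows from Lemma~\ref{lem:cond-exch}: conditional on $(I,f)$ the tuple $(\bm u_1,\dots,\bm u_k)$ is exchangeable, and since $\bm x_{k+1}\sim\gP_X$ is drawn independently it remains exchangeable given $(I,f,\bm x_{k+1})$; hence, conditional on $\gG^\ast$, the ordered tuple is uniformly distributed over the permutations of its multiset (the standard fact that conditioning a finite exchangeable vector on its unordered version yields the uniform-over-$S_k$ law, valid on standard Borel spaces, where regular conditional distributions exist). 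Averaging $M(\cdot,\bm x_{k+1})$ against this conditional law is exactly the symmetric average above.

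Finally, since $f(\bm x_{k+1})$ is $\gG^\ast$-measurable, conditional Jensen applied to the convex map $x\mapsto x^2$ gives, almost surely,
\[
  \big(f(\bm x_{k+1})-\gS[M](P^k)\big)^2
  =\big(\E[f(\bm x_{k+1})-M(P^k)\mid\gG^\ast]\big)^2
  \le \E\big[(f(\bm x_{k+1})-M(P^k))^2\mid\gG^\ast\big].
\]
Taking expectations yields the per-$k$ inequality; summing over $k=1,\dots,p$ and dividing by $p$ gives $R(\gS[M])\le R(M)$.

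\textbf{Main obstacle.} The delicate point is the middle step: justifying, with the measure-theoretic care the setting demands (standard Borel spaces and regular conditional distributions, already flagged in Section~\ref{sec:primer}), that conditioning the exchangeable context tuple on its unordered multiset produces the uniform-over-$S_k$ law, and hence that $\gS[M](P^k)=\E[M(P^k)\mid\gG^\ast]$ — including the bookkeeping when the multiset has repeated entries. Everything else (the reduction to fixed $k$, the vacuous conditioning in the definition of $\gS[M]$, and the final Jensen step) is routine.
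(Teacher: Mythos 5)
Your proof is correct, but it takes a heavier route than the paper's. The paper fixes $k$, conditions on the full \emph{ordered} data $(D^k,\bm{x}_{k+1},I,f)$, and applies Jensen's inequality only to the average over the independent permutation $\Pi$; conditional exchangeability (Lemma~\ref{lem:cond-exch}) is then invoked merely to assert that each permuted predictor $M_\Pi$ has the same expected loss as $M$, i.e.\ $\E_{\Pi}\E[(f(\bm{x}_{k+1})-M_\Pi)^2\mid I,f]=\E[(f(\bm{x}_{k+1})-M(P^k))^2\mid I,f]$, after which one takes expectations. Nothing about conditional laws given the unordered context is needed, so the point you flag as the main obstacle simply does not arise in the paper's argument. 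Your route instead identifies $\gS[M]$ as the conditional expectation $\E[M(P^k)\mid\gG^\ast]$ onto the $\sigma$-algebra generated by $(I,f,\bm{x}_{k+1})$ and the multiset of context pairs, and then uses conditional Jensen together with the $\gG^\ast$-measurability of $f(\bm{x}_{k+1})$. That identification is true and can be justified with the tools already in the paper: the symmetrized average is a symmetric measurable function, hence $\gG^\ast$-measurable by the same standard-Borel quotient argument used for Proposition~\ref{prop:empirical}; and for any bounded $\gG^\ast$-measurable test variable $H$, exchangeability gives $\E[M(P^k)\,H]=\E[M_\pi\,H]$ for each fixed $\pi$, so averaging over $\pi\in S_k$ verifies the defining property of conditional expectation (this bookkeeping also disposes of ties, since permutations fixing the ordered tuple contribute identical terms). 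In exchange for this extra measure-theoretic layer, your version buys a slightly stronger conclusion --- $\gS[M]$ is an honest $L^2$ projection, so one even gets a Pythagorean identity rather than just an inequality --- whereas the paper's randomization argument reaches the stated inequality with less machinery.
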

Hence, by convexity of the squared loss and conditional exchangeability (Lemma~\ref{lem:cond-exch}), permutation-averaging is a risk-reducing ensembling step. This is an instance of Rao–Blackwellization by group averaging under convex loss \citep{LehmannCasella1998}. Therefore, uniform-attention (mean-pooling) architectures are not only natural but also without loss of optimality in this setting.

\begin{proof}[Proof of Theorem~\ref{thm:symm}]
Write $R(M)=\frac1p\sum_{k=1}^p R_k(M)$ with $R_k(M):=\E\big[(f(\bm{x}_{k+1})-M(P^k))^2\big]$. Fix $k$ and condition on $(D^k,\bm{x}_{k+1},I,f)$. By Jensen's inequality applied to the convex map $v\mapsto (f(\bm{x}_{k+1})-v)^2$,
\begin{align*}
    &\E_{\Pi}\!\left[(f(\bm{x}_{k+1})-M_\Pi)^2\mid D^k,\bm{x}_{k+1},I,f\right] \\
    &\ge\ \big(f(\bm{x}_{k+1})-\gS[M]\big)^2,
\end{align*}
where $M_\Pi:=M\big((\bm{x}_{\Pi(1)},y_{\Pi(1)}),\dots,(\bm{x}_{\Pi(k)},y_{\Pi(k)}),\bm{x}_{k+1}\big)$. By Lemma~\ref{lem:cond-exch}, $\E_{\Pi}[(f(\bm{x}_{k+1})-M_\Pi)^2\mid I,f]=\E[(f(\bm{x}_{k+1})-M(P^k))^2\mid I,f]$. Taking expectations proves $R_k(\gS[M])\le R_k(M)$ and summing over $k$ yields the claim.
\end{proof}

The previous theorem immediately yields that an optimal predictor can be chosen permutation-invariant:
\begin{corollary}[Existence of permutation-invariant minimizers]\label{cor:perm-min}
There exists a risk minimizer that is permutation-invariant in the $k$ context items. In particular, when analyzing architectures it is without loss of generality to restrict to permutation-invariant models, e.g., uniform-attention (mean-pooling) Transformers.
\end{corollary}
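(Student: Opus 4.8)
The plan is to obtain Corollary~\ref{cor:perm-min} as an immediate consequence of the symmetrization principle of Theorem~\ref{thm:symm}, once it is pinned down that the ICL risk $R$ actually admits a minimizer. I would not argue abstract existence; instead I would invoke the Primer (Section~\ref{sec:primer}), which exhibits the Bayes predictor $M_{\mathrm{Bayes}}(P^k)=\E[f(\bm{x}_{k+1})\mid D^k,\bm{x}_{k+1}]$ as an explicit solution of $\min_M R(M)$. This map is well-defined and measurable because all objects live on standard Borel spaces (regular conditional distributions exist), and it is bounded by $B_f$ under Assumption~\ref{ass:bounded_f}, so it is a legitimate element of the class of measurable bounded predictors over which the risk is minimized.

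The key steps, in order: (1) fix any risk minimizer $M^\star$, concretely $M^\star=M_{\mathrm{Bayes}}$, so that $R(M^\star)=\inf_M R(M)$. (2) Observe that for each context length $k$ the symmetrized predictor is simply a finite average,
\[
    \gS[M^\star](P^k)=\frac{1}{k!}\sum_{\pi\in S_k} M^\star\big((\bm{x}_{\pi(1)},y_{\pi(1)},\dots,\bm{x}_{\pi(k)},y_{\pi(k)}),\,\bm{x}_{k+1}\big),
\]
which is therefore measurable, bounded by the same constant, and invariant under any relabeling of the $k$ context pairs (relabeling merely reindexes the sum); crucially this holds simultaneously for every $k=1,\dots,p$, since $\gS$ is applied length-by-length, so $\gS[M^\star]$ is a bona fide permutation-invariant predictor in the sense required by the ICL risk \eqref{eq:ICL-risk}. (3) Apply Theorem~\ref{thm:symm} to get $R(\gS[M^\star])\le R(M^\star)=\inf_M R(M)$; since $\inf_M R(M)$ is a lower bound on every risk, equality holds, so $\gS[M^\star]$ is itself a minimizer, and it is permutation-invariant by construction. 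For the final clause of the corollary, I would note that the inequality $R(\gS[M])\le R(M)$ holds for \emph{every} measurable bounded $M$, so the optimal risk over permutation-invariant predictors equals the global optimal risk; hence restricting to set-functional architectures, in particular the mean-pooling uniform-attention Transformer of Definition~\ref{def:transformer}, which is permutation-invariant by construction, sacrifices no optimality in this setting.

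I do not anticipate a genuine obstacle: the substantive content is entirely carried by Theorem~\ref{thm:symm}, which is itself a Rao--Blackwell/Jensen argument through Lemma~\ref{lem:cond-exch}. The only points that require care are bookkeeping: first, that a minimizer exists before one symmetrizes it, which is handled cleanly by citing the explicit Bayes predictor rather than arguing abstract existence; and second, that the notion of permutation invariance here must mean invariance for each $k$ separately, which is automatic because both $R$ and $\gS$ decompose over $k$. If one insisted on avoiding any reference to $M_{\mathrm{Bayes}}$, the fallback would be a weak-$*$ compactness or lower-semicontinuity argument for the existence of a minimizer among uniformly bounded measurable maps, but that detour is unnecessary given the Primer.
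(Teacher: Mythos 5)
Your proposal is correct and follows essentially the same route as the paper: the corollary is stated as an immediate consequence of Theorem~\ref{thm:symm}, with existence of a minimizer supplied by the Bayes predictor from the Primer (which is in fact already permutation-invariant, cf.\ Corollary~\ref{cor:bayes-symm}, so your symmetrization step is harmless but redundant when applied to $M_{\mathrm{Bayes}}$ itself).
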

Analytically, we may restrict our hypothesis class to set-function architectures (e.g., uniform-attention Transformers) without sacrificing optimality.

With a mixture over task families, the optimal predictor must both identify the task type and perform within-family inference. Bayes' rule exposes this computational structure explicitly.
\begin{theorem}[Hierarchical posterior factorization]\label{thm:hier-post}
Assume there exists a $\sigma$-finite reference measure $\mu$ such that,
for all $f\in\cup_{i=1}^T F_i$ and all $x$, the conditional law $P(\cdot\mid x,f)$ is dominated by $\mu$ and admits the Radon--Nikodym derivative $p(y\mid x,f):=\frac{dP(\cdot\mid x,f)}{d\mu}(y)$. Then for any context $D^k$,
\begin{align*}
    &\gP_{F_i}(\dd f\mid D^k, I=i)\ \propto\ \Big\{\prod_{t=1}^k p(y_t\mid \bm{x}_t,f)\Big\}\,\gP_{F_i}(\dd f),\\
    &\pi_i(D^k)=\frac{\alpha_i\, m_i(D^k)}{\sum_{j=1}^T \alpha_j\, m_j(D^k)},
\end{align*}
where $m_i(D^k)\!:=\!\int \prod_{t=1}^k p(y_t\mid \bm{x}_t,f)\,\gP_{F_i}(\dd f)$ and $\pi_i(D^k):=\Pr(I=i\mid D^k)$. Consequently, the Bayes predictor decomposes as
\[
    M_{\mathrm{Bayes}}(P^k)=\sum_{i=1}^T \pi_i(D^k)\;\E_{f\sim \gP_{F_i}(\cdot\mid D^k,I=i)}\!\left[f(\bm{x}_{k+1})\right].
\]
\end{theorem}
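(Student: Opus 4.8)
The plan is to derive both posterior formulas by a direct application of Bayes' rule, exploiting the fact that, under Definition~\ref{def:PGP}, the covariates $\bm{x}_{1:k}$ (and $\bm{x}_{k+1}$) are drawn from $\gP_X$ independently of the task pair $(I,f)$, so that the covariate marginal is a common multiplicative factor that cancels on normalization. First I would fix $I=i$ and compute the within-family posterior. Conditional on $(I,f)$, Assumption~\ref{ass:bounded_x} gives that the pairs $(\bm{x}_t,y_t)_{t=1}^k$ are conditionally independent; combined with $\bm{x}_t\perp f$ and the domination hypothesis, the conditional law of $D^k$ given $(I=i,f)$ factors as $\prod_{t=1}^k\big[p(y_t\mid\bm{x}_t,f)\,\mu(\dd y_t)\big]\,\gP_X(\dd\bm{x}_t)$, where $\mu$ is the common reference measure. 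Bayes' rule in the form of the Radon--Nikodym disintegration then yields
\[
\gP_{F_i}(\dd f\mid D^k,I=i)=\frac{\big\{\prod_{t=1}^k p(y_t\mid\bm{x}_t,f)\big\}\,\gP_{F_i}(\dd f)}{\int\big\{\prod_{t=1}^k p(y_t\mid\bm{x}_t,f')\big\}\,\gP_{F_i}(\dd f')},
\]
where the denominator is exactly $m_i(D^k)$; this is the claimed proportionality.

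Next I would compute the task-type posterior by applying Bayes' rule to the discrete variable $I$. Marginalizing the previous computation over $f\sim\gP_{F_i}$, the likelihood of the context $D^k$ under $\{I=i\}$ equals $m_i(D^k)$ times the covariate marginal $\prod_{t=1}^k\gP_X(\dd\bm{x}_t)$, the latter not depending on $i$. Combining with the prior $\Pr(I=i)=\alpha_i$ and normalizing over $i\in\{1,\dots,T\}$ gives
\[
\pi_i(D^k)=\Pr(I=i\mid D^k)=\frac{\alpha_i\, m_i(D^k)}{\sum_{j=1}^T \alpha_j\, m_j(D^k)},
\]
the covariate factor cancelling between numerator and denominator.

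Finally, for the Bayes-predictor decomposition, I would start from $M_{\mathrm{Bayes}}(P^k)=\E[f(\bm{x}_{k+1})\mid D^k,\bm{x}_{k+1}]$ as established in Section~\ref{sec:primer}. Since $\bm{x}_{k+1}$ is independent of $(I,f)$ and of $D^k$, conditioning on $\bm{x}_{k+1}=\bm{x}$ reduces this to $\int f(\bm{x})\,\gP(\dd f\mid D^k)$; applying the tower property over $I$ and then the first two parts,
\[
\int f(\bm{x})\,\gP(\dd f\mid D^k)=\sum_{i=1}^T \pi_i(D^k)\int f(\bm{x})\,\gP_{F_i}(\dd f\mid D^k,I=i)=\sum_{i=1}^T \pi_i(D^k)\,\E_{f\sim\gP_{F_i}(\cdot\mid D^k,I=i)}[f(\bm{x})].
\]
Re-substituting $\bm{x}=\bm{x}_{k+1}$ yields the claim.

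The main obstacle is purely measure-theoretic bookkeeping: one must justify the disintegration/Bayes-rule step rigorously when $f$ ranges over an infinite-dimensional function space. This is exactly what the standing assumptions buy us --- all objects live on standard Borel spaces, so regular conditional distributions exist, and the predictive laws are dominated by a common reference measure, so the densities $p(y\mid\bm{x},f)$ exist and the likelihood factorization is meaningful. Beyond that, the only conceptual point worth stating explicitly is that the covariate marginal $\prod_{t}\gP_X(\dd\bm{x}_t)$ is shared across all task types and cancels in every normalization, which is precisely why the posteriors depend on the context only through the $y$-likelihoods.
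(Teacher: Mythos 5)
Your argument is correct and follows essentially the same route as the paper's proof: Bayes' rule with the conditional i.i.d.\ structure of $(\bm{x}_t,y_t)$ given $(I,f)$ for the two posterior formulas, and the tower property over $I$ applied to $\E[f(\bm{x}_{k+1})\mid D^k,\bm{x}_{k+1}]$ for the mixture decomposition. Your additional remarks on the cancellation of the covariate marginal and the standard-Borel/domination assumptions simply make explicit what the paper's terse proof leaves implicit.
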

The Bayes predictor is a mixture of within-family posterior means with weights $\pi_i(D^k)$ determined by marginal likelihoods $m_i(D^k)$. Because these weights depend on the product of likelihood factors, they are invariant to permutations of the context, foreshadowing the permutation invariance results below and validating architectures that first summarize the context before decoding.
\begin{proof}[Proof of Theorem~\ref{thm:hier-post}]
Bayes' rule and conditional i.i.d.\ of $(\bm{x}_t,y_t)$ given $(I,f)$ yield the displayed formulas. The final expression follows from the tower property applied to $\E[f(\bm{x}_{k+1})\mid D^k,\bm{x}_{k+1}]$.
\end{proof}

\begin{corollary}[Permutation invariance of the Bayes predictor]\label{cor:bayes-symm}
For any permutation $\pi$ of $\{1,\ldots,k\}$,
\begin{align*}
    &M_{\mathrm{Bayes}}\!\left((\bm{x}_1,y_1),\ldots,(\bm{x}_k,y_k),\bm{x}_{k+1}\right) \\
    &= M_{\mathrm{Bayes}}\!\left((\bm{x}_{\pi(1)},y_{\pi(1)}),\ldots,(\bm{x}_{\pi(k)},y_{\pi(k)}),\bm{x}_{k+1}\right).
\end{align*}
\end{corollary}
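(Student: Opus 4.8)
The plan is to obtain the statement as an immediate corollary of the hierarchical factorization in Theorem~\ref{thm:hier-post}, by observing that every ingredient of that factorization sees the context only through a permutation-symmetric quantity. Fix a permutation $\pi$ of $\{1,\dots,k\}$ and write $D^k_\pi := \{(\bm{x}_{\pi(t)}, y_{\pi(t)})\}_{t=1}^k$ for the reordered context; the query $\bm{x}_{k+1}$ is left untouched throughout.

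First I would note that the likelihood product is symmetric in its factors: $\prod_{t=1}^k p(y_{\pi(t)}\mid\bm{x}_{\pi(t)},f) = \prod_{t=1}^k p(y_t\mid\bm{x}_t,f)$ for every $f$, simply because reindexing a finite product does not change its value. By the first display of Theorem~\ref{thm:hier-post}, the unnormalized within-family posterior $f\mapsto \{\prod_{t=1}^k p(y_t\mid\bm{x}_t,f)\}\,\gP_{F_i}(\dd f)$ is therefore identical for $D^k$ and $D^k_\pi$, so the normalized posteriors $\gP_{F_i}(\cdot\mid D^k,I=i)$ and $\gP_{F_i}(\cdot\mid D^k_\pi,I=i)$ coincide. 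The same symmetry shows $m_i(D^k)=m_i(D^k_\pi)$ for every $i$, hence the mixture weights $\pi_i(D^k)=\alpha_i m_i(D^k)/\sum_j\alpha_j m_j(D^k)$ are unchanged as well.

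Assembling the pieces, since the query is untouched each within-family posterior mean $\E_{f\sim\gP_{F_i}(\cdot\mid D^k,I=i)}[f(\bm{x}_{k+1})]$ is the same for $D^k$ and $D^k_\pi$, and combining this with the invariance of the weights, the mixture
\[
    M_{\mathrm{Bayes}}(P^k)=\sum_{i=1}^T \pi_i(D^k)\,\E_{f\sim\gP_{F_i}(\cdot\mid D^k,I=i)}[f(\bm{x}_{k+1})]
\]
from Theorem~\ref{thm:hier-post} is invariant under $\pi$, which is exactly the assertion of Corollary~\ref{cor:bayes-symm}.

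I do not expect a genuine obstacle: the result is a corollary of Theorem~\ref{thm:hier-post} and the argument reduces to the one-line observation that finite products are symmetric, plus bookkeeping. The only point deserving a moment's care is the degenerate case in which the predictive laws admit no densities; but Theorem~\ref{thm:hier-post} is stated under a common dominating reference measure, so we may work in that setting, and the general case can be handled instead via conditional exchangeability (Lemma~\ref{lem:cond-exch}): permuting the context is a law-preserving relabeling of the conditioning variables in $\E[f(\bm{x}_{k+1})\mid D^k,\bm{x}_{k+1}]$, which leaves the regular conditional expectation unchanged almost everywhere, after which a measurability argument in the spirit of Proposition~\ref{prop:empirical} promotes this to an everywhere identity.
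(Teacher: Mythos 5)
Your proposal is correct and follows essentially the same route as the paper: invoke the factorization of Theorem~\ref{thm:hier-post} and observe that both the within-family posteriors and the weights $\pi_i(D^k)$ depend on the context only through the product $\prod_{t=1}^k p(y_t\mid \bm{x}_t,f)$, which is invariant under reindexing. Your additional remark on the density-free case is careful but not needed, since Theorem~\ref{thm:hier-post} already assumes a common dominating measure.
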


\begin{proof}[Proof of Corollary~\ref{cor:bayes-symm}]
In Theorem~\ref{thm:hier-post}, both the within-family posterior $\gP_{F_i}(\cdot\mid D^k,I=i)$ and the weight $\pi_i(D^k)$ depend on $D^k$ only through the product $\prod_{t=1}^k p(y_t\mid \bm{x}_t,f)$, which is invariant under reindexing $t\mapsto\pi(t)$. Substituting this into the mixture formula yields the claim.
\end{proof}

\begin{prop}[Empirical–measure representation]\label{prop:empirical}
Let $\mathrm{Emp}_k:\gU^k\to\gP(\gU)$ be the empirical measure map $\mathrm{Emp}_k(\bm u_{1:k})=\frac1k\sum_{t=1}^k \delta_{\bm u_t}$, where $\gP(\gU)$ is endowed with the Borel $\sigma$-algebra of the weak topology. Then there exists a measurable map $\Psi:\gP(\gU)\times\gC\to\R$ such that, for any $\bm u_{1:k}\in\gU^k$ and $\bm c\in\gC$,
\[
    M_{\mathrm{Bayes}}(\bm u_{1:k},\bm c)=\Psi\!\left(\mathrm{Emp}_k(\bm u_{1:k}),\,\bm c\right). 
\]
\end{prop}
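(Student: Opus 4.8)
The plan is to reduce the statement to the Doob--Dynkin factorization lemma, letting the permutation invariance already recorded in Corollary~\ref{cor:bayes-symm} do all the probabilistic work. First I would fix a jointly Borel version of the Bayes predictor: since the spaces are standard Borel, the posterior $\gP(\cdot\mid D^k)$ has a regular conditional version and $M_{\mathrm{Bayes}}(\bm u_{1:k},\bm c)=\int f(\bm c)\,\gP(\dd f\mid\bm u_{1:k})$ is Borel jointly in $(\bm u_{1:k},\bm c)\in\gU^k\times\gC$; by Corollary~\ref{cor:bayes-symm} this version is invariant under the $S_k$-action that permutes $\bm u_1,\dots,\bm u_k$. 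Next I would record three soft facts about $\mathrm{Emp}_k:\gU^k\to\gP(\gU)$: (i) it is continuous for the weak topology --- $\bm u^{(n)}_{1:k}\to\bm u_{1:k}$ forces $\tfrac1k\sum_t g(\bm u^{(n)}_t)\to\tfrac1k\sum_t g(\bm u_t)$ for every bounded continuous $g$ --- hence Borel; (ii) $\mathrm{Emp}_k(\bm u_{1:k})=\mathrm{Emp}_k(\bm u'_{1:k})$ iff $\bm u'_{1:k}$ is a permutation of $\bm u_{1:k}$, since equality of two uniform $k$-atom measures is equality of the underlying multisets; (iii) because $\gU$ is standard Borel, the $S_k$-action on $\gU^k$ is Borel with finite orbits, so $\mathrm{Emp}_k$ factors as $\iota\circ q$ with $q:\gU^k\to\gU^k/S_k$ the (standard Borel) quotient map and $\iota$ \emph{injective} Borel by (ii); by the Lusin--Souslin theorem $\iota$ is then a Borel isomorphism onto its image $\gP_k(\gU):=\{\tfrac1k\sum_{t=1}^k\delta_{u_t}:\bm u_{1:k}\in\gU^k\}$, which is a Borel subset of $\gP(\gU)$.

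The central step is the $\sigma$-algebra identity $\sigma(\mathrm{Emp}_k)=\mathcal I_k$, where $\mathcal I_k\subseteq\mathcal B(\gU^k)$ denotes the $\sigma$-algebra of $S_k$-invariant Borel sets. One inclusion is immediate from (ii). For the reverse, (iii) gives $\mathrm{Emp}_k=\iota\circ q$ with $\iota$ a bi-measurable injection onto a Borel set, hence $\sigma(\mathrm{Emp}_k)=\sigma(q)$; and $\sigma(q)$ is by definition the pullback of the quotient $\sigma$-algebra, i.e., exactly $\mathcal I_k$. Tensoring with $(\gC,\mathcal B(\gC))$, the map $\mathrm{Emp}_k\times\mathrm{id}_{\gC}:\gU^k\times\gC\to\gP(\gU)\times\gC$ generates precisely the sub-$\sigma$-algebra of $\mathcal B(\gU^k\times\gC)$ consisting of sets invariant under permutations of the first $k$ coordinates.

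To conclude, combine the pieces: $M_{\mathrm{Bayes}}$ is Borel and, by Corollary~\ref{cor:bayes-symm}, invariant under permutations of $\bm u_{1:k}$, hence measurable with respect to $\sigma(\mathrm{Emp}_k\times\mathrm{id}_{\gC})$. Since $\gP(\gU)\times\gC$ is standard Borel, the Doob--Dynkin lemma supplies a Borel map $\Psi_0:\gP_k(\gU)\times\gC\to\R$ with $M_{\mathrm{Bayes}}=\Psi_0\circ(\mathrm{Emp}_k\times\mathrm{id}_{\gC})$ on $\gU^k\times\gC$; extending $\Psi_0$ by $0$ on the Borel complement of $\gP_k(\gU)\times\gC$ yields the required measurable $\Psi:\gP(\gU)\times\gC\to\R$. (An equivalent route replaces (iii) by a countable measure-separating family $\{g_n\}$ of bounded continuous functions together with the injection $\mu\mapsto(\int g_n\,\dd\mu)_n$, but the step establishing Borelness of the image is the same.) I expect the only genuinely delicate point to be fact (iii) --- that $\gP_k(\gU)$ is Borel and $\mathrm{Emp}_k$ restricts to a Borel isomorphism onto it --- since this is what upgrades ``$M_{\mathrm{Bayes}}$ is constant on the fibers of $\mathrm{Emp}_k$'' from a merely universally measurable factorization to an honest Borel one; the probabilistic content of the proposition is entirely absorbed into Corollary~\ref{cor:bayes-symm}.
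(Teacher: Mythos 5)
Your proposal is correct and follows essentially the same route as the paper's proof: invoke the permutation invariance of Corollary~\ref{cor:bayes-symm} and factor $M_{\mathrm{Bayes}}$ through the quotient of $\gU^k$ by the finite $S_k$-action via the canonical invariant $\mathrm{Emp}_k$. Your write-up merely spells out the measurability details (Lusin--Souslin, the $\sigma$-algebra identity, Doob--Dynkin, and the extension off the image) that the paper compresses into ``measurability from the quotient factorization.''
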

Once the context is summarized as an empirical distribution, a mean-pooled soft histogram is a faithful finite-dimensional proxy. This is precisely the representation approximated by our feature map $\phi_\theta$ and decoder $\rho_\theta$ (cf.\ Lemma~\ref{lem:additive}, Theorem~\ref{thm:upper-bayes}). It also explains why the analysis avoids a dependence on the sequence length $p$ beyond averaging.

\begin{proof}[Proof of Proposition~\ref{prop:empirical}]
By Corollary~\ref{cor:bayes-symm}, $M_{\mathrm{Bayes}}(\cdot,\bm c)$ is invariant under permutations on $\gU^k$. The quotient of a standard Borel space by a finite group action is standard Borel; thus any measurable, permutation-invariant map factors measurably through the canonical invariant $\mathrm{Emp}_k$. Define $\Psi(\mu,\bm c)$ to be the common value of $M_{\mathrm{Bayes}}(\bm u_{1:k},\bm c)$ on the fiber $\{\bm u_{1:k}:\mathrm{Emp}_k(\bm u_{1:k})=\mu\}$; well-definedness follows from invariance, measurability from the quotient factorization.
\end{proof}

\begin{remark}[When permutation invariance may fail]
The invariance arguments rely on the conditional i.i.d.\ structure of Definition~\ref{def:PGP}.
If inputs are chosen adaptively (active learning, bandit-style data acquisition), or if the within-prompt distribution drifts over time, $(\bm{x}_t,y_t)$ are not conditionally i.i.d.\ given $(I,f)$ and order may carry information.
In such cases, non-uniform attention or explicitly sequential models can be beneficial, and our uniform-attention analysis should be viewed as the principled baseline for the i.i.d.\ prompt regime.
\end{remark}

For further investigation of exchangeability of the Bayes predictor in the standard Bayesian statistics context, refer to~\cite{BernardoSmith1994,gelman2013bda3,ghosal2017fundamentals}.

\section{Further Details on Out-of-Distribution Generalization}\label{sec:ood-appendix}

Recall that $\gP_X$ denotes the source input distribution used during pretraining and $\gQ_X$ denotes an input distribution at inference time. The task distribution and the noise distribution are unchanged.
We work with ground metrics and assume compact supports (finite diameters) so that constants remain finite; in particular, $\mathrm{diam}(\gU)<\infty$ and $\mathrm{diam}(\gC)<\infty$. (Unit-diameter rescaling, e.g., $\mathrm{diam}(\gU)\le1$ and $\mathrm{diam}(\gC)\le1$, is a convenient normalization and only rescales constants.)

\begin{remark}[High-probability boundedness]
As in Step~0 of Theorem~\ref{thm:upper-bayes}, define 
$\gE_\delta:=\{\max_{t\le p+1}|\varepsilon_t|\le t_\delta\}$ with 
$t_\delta=\sigma_\varepsilon\sqrt{2\log(4p/\delta)}$. On $\gE_\delta$, the domains $\gU,\gC$ have finite diameters since $|y|\le B_f+t_\delta$. Theorems in this section can thus be established on $\gE_\delta$, while the contribution of $\gE_\delta^{\mathsf c}$ is controlled by sub-Gaussian tails, yielding an additional $O(\delta\log(1/\delta))$ term. 
\end{remark}

For a metric space $(\gZ,d)$, we write
$ W_1(\mu,\nu;d) := \inf_{\pi\in\Pi(\mu,\nu)} \int d(z,z') \pi(\dd z,\dd z')$ for the 1-Wasserstein distance with ground metric $d$. For $0<\alpha\le1$ and $k\in\N_+$, define the prompt-level ground metric
\[
    \overline d_{k,\alpha} \big((\bm{u}_{1:k},\bm{c}),(\bm{u}'_{1:k},\bm{c}')\big)
    := \frac1k\sum_{i=1}^k \|\bm{u}_i-\bm{u}_i'\|_2^\alpha  +  \|\bm{c}-\bm{c}'\|_2^\alpha,
\]
and abbreviate $\mathsf W^{(k)}_\alpha(\cdot,\cdot) := W_1(\cdot,\cdot;\overline d_{k,\alpha})$. Likewise, for a random pair $\bm U=(\bm X,Y)$, we write $\mathsf W_\alpha(\cdot,\cdot)
:= W_1(\cdot,\cdot;\|\cdot\|_2^\alpha)$. Note that for any metric $d$ and $0<\alpha\le1$, $d^\alpha$ is a metric by concavity of $t\mapsto t^\alpha$.
Since $|f|\le B_f$, we have $|M_{\mathrm{Bayes}}|\le B_f$ and hence $|M_\theta-M_{\mathrm{Bayes}}|\le B_M+B_f$.

We restate Theorem~\ref{thm:ood-wass} here for a self-contained proof.
\begin{theorem}[Wasserstein stability: OOD upper bound for the Bayes Gap]
\label{thm:ood-stability}
Under Definition~\ref{def:PGP}, Definition~\ref{def:transformer} and Assumptions~\ref{ass:bounded_f}–\ref{ass:bounded_x} for $\gP_X$ and $\gQ_X$, assume the Bayes predictor $M_{\mathrm{Bayes}}$ satisfies the same $\alpha$-H\"older condition as in Theorem~\ref{thm:upper-bayes} with exponent $\alpha\in(0,1]$ and constant $L$. Then, for any $\theta$,
\begin{align*}
&\big| R_{\mathrm{BG}}^{(\mathsf Q)}(M_\theta) - R_{\mathrm{BG}}^{(\mathsf P)}(M_\theta) \big| \\
&\le \frac{2(B_M+B_f)}{p}\sum_{k=1}^p
\big(L + \Lambda_\alpha\big) 
\mathsf W^{(k)}_{\alpha}\big(\gL_{\mathsf P}(P^k),\gL_{\mathsf Q}(P^k)\big),
\end{align*}
where $\Lambda_\alpha:= \big(L_s \Lip(\phi_\theta) + L_c\big) 
\big(\mathrm{diam}(\gU)+\mathrm{diam}(\gC)\big)^{1-\alpha}$.
In particular, when $\alpha=1$, $\Lambda_1=L_s\,\Lip(\phi_\theta)+L_c$.
\end{theorem}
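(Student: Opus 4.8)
\textit{Proof plan.} The approach is to reduce the two-sided bound to a family of per-$k$ Kantorovich--Rubinstein estimates. For each $k$ set $g_k(P^k):=\bigl(M_\theta(P^k)-M_{\mathrm{Bayes}}(P^k)\bigr)^2$, so that $R_{\mathrm{BG}}^{(\gP_X)}(M_\theta)=\tfrac1p\sum_{k=1}^p\E_{\gL_{P}(P^k)}[g_k]$ and likewise under $\gQ_X$. By the triangle inequality it suffices to control $\bigl|\E_{\gL_{Q}(P^k)}[g_k]-\E_{\gL_{P}(P^k)}[g_k]\bigr|$ for each $k$. Since $t\mapsto t^\alpha$ is concave, $\overline d_{k,\alpha}$ is a genuine metric on $\gU^k\times\gC$, so the dual form of $W_1$ gives
\[
\bigl|\E_{\gL_{Q}(P^k)}[g_k]-\E_{\gL_{P}(P^k)}[g_k]\bigr|\ \le\ \Lip_{\overline d_{k,\alpha}}(g_k)\cdot \mathsf W^{(k)}_{\alpha}\bigl(\gL_{P}(P^k),\gL_{Q}(P^k)\bigr),
\]
where $\Lip_{\overline d_{k,\alpha}}(g_k)$ denotes the Lipschitz constant of $g_k$ relative to $\overline d_{k,\alpha}$. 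Everything then reduces to the bound $\Lip_{\overline d_{k,\alpha}}(g_k)\le 2(B_M+B_f)(L+\Lambda_\alpha)$.

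To that end I would first peel off the square: clipping gives $|M_\theta|\le B_M$ and Assumption~\ref{ass:bounded_f} gives $|M_{\mathrm{Bayes}}|\le B_f$, so with $h:=M_\theta-M_{\mathrm{Bayes}}$ one has $|g_k(P)-g_k(P')|=|h(P)+h(P')|\,|h(P)-h(P')|\le 2(B_M+B_f)\,|h(P)-h(P')|$, whence $\Lip_{\overline d_{k,\alpha}}(g_k)\le 2(B_M+B_f)\,\Lip_{\overline d_{k,\alpha}}(h)$. By the triangle inequality for Lipschitz seminorms, $\Lip_{\overline d_{k,\alpha}}(h)\le\Lip_{\overline d_{k,\alpha}}(M_\theta)+\Lip_{\overline d_{k,\alpha}}(M_{\mathrm{Bayes}})$, so it remains to bound these two constants by $\Lambda_\alpha$ and $L$.

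The Bayes-predictor term is direct: the assumed $\alpha$-H\"older bound controls its increment by $L\,\tfrac1k\sum_i\|(\bm u_i,\bm c)-(\bm u'_i,\bm c')\|_2^\alpha$, and combining $\|(\bm u_i,\bm c)-(\bm u'_i,\bm c')\|_2\le\|\bm u_i-\bm u'_i\|_2+\|\bm c-\bm c'\|_2$ with subadditivity of $t\mapsto t^\alpha$ shows this is $\le L\,\overline d_{k,\alpha}\bigl((\bm u_{1:k},\bm c),(\bm u'_{1:k},\bm c')\bigr)$, so $\Lip_{\overline d_{k,\alpha}}(M_{\mathrm{Bayes}})\le L$. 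For the network I would use the representation $M_\theta(\bm u_{1:k},\bm c)=\rho_\theta\bigl(\tfrac1k\sum_i\phi_\theta(\bm u_i),\bm c\bigr)$: the uniform Lipschitz bound on $\rho_\theta$ and the contraction of averaging give $\bigl|M_\theta(\bm u_{1:k},\bm c)-M_\theta(\bm u'_{1:k},\bm c')\bigr|\le L_s\Lip(\phi_\theta)\,\tfrac1k\sum_i\|\bm u_i-\bm u'_i\|_2+L_c\|\bm c-\bm c'\|_2$, and the finite-diameter trick $t\le t^\alpha D^{1-\alpha}$ for $t\in[0,D]$ upgrades each $\ell_2$ term to its $\alpha$-power at the cost of the factors $\mathrm{diam}(\gU)^{1-\alpha}$ and $\mathrm{diam}(\gC)^{1-\alpha}$ respectively; dominating both of these by $\bigl(\mathrm{diam}(\gU)+\mathrm{diam}(\gC)\bigr)^{1-\alpha}$ and both coefficients by $L_s\Lip(\phi_\theta)+L_c$ collapses the bound to $\Lambda_\alpha\,\overline d_{k,\alpha}$, i.e.\ $\Lip_{\overline d_{k,\alpha}}(M_\theta)\le\Lambda_\alpha$. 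Assembling the three bounds, multiplying by $2(B_M+B_f)$, and averaging over $k$ gives the statement; the case $\alpha=1$ drops the diameter factors, leaving $\Lambda_1=L_s\Lip(\phi_\theta)+L_c$.

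The Kantorovich--Rubinstein step and the peeling of the square are routine. The delicate point is the last one: matching the network's intrinsic $\ell_2$-Lipschitz geometry (inherited via $\rho_\theta$ and $\phi_\theta$) to the $\alpha$-snowflake transport cost. There one must confirm that compactness of $\gU$ and $\gC$ is precisely what licenses the estimate $t\le t^\alpha D^{1-\alpha}$, that mean-pooling the simplex-valued features is a true contraction so the $1/k$ factor survives unchanged (which is also why the bound is free of any curse of dimensionality in $p$), and that the separate per-coordinate diameter factors can be uniformly dominated so they merge into the single constant $\Lambda_\alpha$ appearing in the statement.
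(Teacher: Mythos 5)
Your proposal is correct and follows essentially the same route as the paper's proof: per-$k$ Kantorovich--Rubinstein duality under the metric $\overline d_{k,\alpha}$, the $|a^2-b^2|\le 2(B_M+B_f)|a-b|$ peeling of the square, the $\alpha$-H\"older bound $\Lip_{\overline d_{k,\alpha}}(M_{\mathrm{Bayes}})\le L$, and the network bound $\Lip_{\overline d_{k,\alpha}}(M_\theta)\le\Lambda_\alpha$ via mean-pooling contraction and the snowflake estimate $t\le t^\alpha D^{1-\alpha}$ on the compact domains. No gaps.
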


\begin{proof}[Proof of Theorem~\ref{thm:ood-stability}]
Fix $k\in\{1,\dots,p\}$ and abbreviate $\bm{z}=(\bm{u}_{1:k},\bm{c})\in\gU^k\times\gC$,
$s(\bm{z}):=\tfrac1k\sum_{i=1}^k\phi_\theta(\bm{u}_i)\in\Delta^{m-1}$, and $M_\theta(\bm z):=\mathrm{clip}_{[-B_M,B_M]}\big(\rho_\theta(s(\bm{z}),\bm{c})\big)$.
Write the Bayes predictor as $M_{\mathrm{Bayes}}(\bm{z}):=M_{\mathrm{Bayes}}(\bm{u}_{1:k},\bm{c})$ and introduce
$g_k(\bm{z}) := \big(M_\theta(\bm{z})-M_{\mathrm{Bayes}}(\bm{z})\big)^2$ and $h_k(\bm{z}) := M_\theta(\bm{z})-M_{\mathrm{Bayes}}(\bm{z})$.

\smallskip\noindent \textbf{Step 1 (Lipschitz modulus of $M_\theta$ under $\overline d_{k,\alpha}$).}
By the network size assumption and the 1-Lipschitzness of clipping,
\[
\big|M_\theta(\bm z)-M_\theta(\bm z')\big|
 \le  L_s\big\|s(\bm z)-s(\bm z')\big\|_2 + L_c\|\bm c-\bm c'\|_2.
\]
Let $L_\phi:=\Lip(\phi_\theta)$ (for our encoder with $\mathrm{Renorm}_\tau$, $L_\phi\le \tfrac{2\sqrt{m}}{\tau}\,S(g_\theta)$). Since $\phi_\theta$ is $L_\phi$–Lipschitz,
\begin{align*}
    \|s(\bm z)-s(\bm z')\|_2
    &\le \frac1k\sum_{i=1}^k\|\phi_\theta(\bm{u}_i)-\phi_\theta(\bm{u}_i')\|_2 \\
    &\le  \frac{L_\phi}{k}\sum_{i=1}^k\|\bm{u}_i-\bm{u}_i'\|_2.    
\end{align*}
Let $D_U:=\mathrm{diam}(\gU)$ and $D_C:=\mathrm{diam}(\gC)$, and put $D:=D_U+D_C$.
For $0<\alpha\le1$ and any $t\in[0,D]$ one has $t\le D^{1-\alpha}t^\alpha$; hence
\begin{align*}
&\frac1k\sum_{i=1}^k\|\bm{u}_i-\bm{u}_i'\|_2 \le  D_U^{1-\alpha}\frac1k\sum_{i=1}^k\|\bm{u}_i-\bm{u}_i'\|_2^\alpha,\\
&\|\bm c-\bm c'\|_2 \le  D_C^{1-\alpha}\|\bm c-\bm c'\|_2^\alpha.
\end{align*}
Using the prompt-level metric
$
\overline d_{k,\alpha}(\bm z,\bm z')
=\frac1k\sum_{i=1}^k\|\bm{u}_i-\bm{u}_i'\|_2^\alpha+\|\bm c-\bm c'\|_2^\alpha
$,
we obtain
\begin{align*}
    \big|M_\theta(\bm z)-M_\theta(\bm z')\big|
    &\le  \big(L_s \Lip(\phi_\theta) + L_c\big)  D^{1-\alpha} \overline d_{k,\alpha}(\bm z,\bm z') \\
    &=  \Lambda_\alpha \overline d_{k,\alpha}(\bm z,\bm z').
\end{align*}

\smallskip\noindent \textbf{Step 2 (Lipschitz modulus of $h_k$ and $g_k$).}
Note that the H\"older condition in Theorem~\ref{thm:upper-bayes} implies
$|M_{\mathrm{Bayes}}(\bm z)-M_{\mathrm{Bayes}}(\bm z')|
\le L\,\overline d_{k,\alpha}(\bm z,\bm z')$, because, for $\alpha\in(0,1]$,
$\|(\bm u,\bm c)-(\bm u',\bm c')\|_2^\alpha
\le \|\bm u-\bm u'\|_2^\alpha+\|\bm c-\bm c'\|_2^\alpha$.
By this observation, $M_{\mathrm{Bayes}}$ is $\overline d_{k,\alpha}$--Lipschitz with constant $L$, hence
\begin{align*}
  |h_k(\bm z)-h_k(\bm z')|
  &\le (\Lambda_\alpha+L)\,\overline d_{k,\alpha}(\bm z,\bm z').
\end{align*}
Because $|M_\theta|\le B_M$ and $|M_{\mathrm{Bayes}}|\le B_f$, the range of $h_k$ is contained in $[-(B_M+B_f),B_M+B_f]$. Therefore, using $|a^2-b^2|=|(a-b)(a+b)|\le 2(B_M+B_f)|a-b|$,
\begin{align*}
 & \big|g_k(\bm z)-g_k(\bm z')\big|\\
 &\le  2(B_M+B_f) \big|h_k(\bm z)-h_k(\bm z')\big|\\
 &\le  2(B_M+B_f) (L+\Lambda_\alpha) \overline d_{k,\alpha}(\bm z,\bm z').
\end{align*}
Thus $g_k$ is $\overline d_{k,\alpha}$–Lipschitz with modulus $2(B_M+B_f)(L+\Lambda_\alpha)$.

\smallskip\noindent \textbf{Step 3 (Kantorovich–Rubinstein duality and averaging over $k$).}
Let $\gL_{\mathsf P}(P^k)$ and $\gL_{\mathsf Q}(P^k)$ denote the distributions of the length-$k$ prompts under the source and target domains, respectively. Kantorovich–Rubinstein
duality for $W_1(\cdot,\cdot;\overline d_{k,\alpha})$ implies, for any Lipschitz $g_k$,
\begin{align*}
 &\big|\E_Q g_k(P^k)-\E_P g_k(P^k)\big| \\
 &\le  \Lip_{\overline d_{k,\alpha}}(g_k) \mathsf W^{(k)}_\alpha \big(\gL_{\mathsf P}(P^k),\gL_{\mathsf Q}(P^k)\big),     
\end{align*}
where $\mathsf W^{(k)}_\alpha:=W_1(\cdot,\cdot;\overline d_{k,\alpha})$. 
Combining it with the Lipschitz bound from Step~2 yields
\begin{align*}
    &\Big| R_{\mathrm{BG}}^{(\mathsf Q)}(M_\theta) - R_{\mathrm{BG}}^{(\mathsf P)}(M_\theta) \Big| \\
    &\le  \frac{2(B_M+B_f)}{p}\sum_{k=1}^p \big(L+\Lambda_\alpha\big) \mathsf W^{(k)}_\alpha \big(\gL_{\mathsf P}(P^k),\gL_{\mathsf Q}(P^k)\big),    
\end{align*}
which is exactly the claimed inequality.
\end{proof}

The prompt $P^k=(\bm{U}_1,\dots,\bm{U}_k,\bm{C})$ contains dependent coordinates in general, because the context responses $\bm{U}_i=(\bm X_i,Y_i)$ share the latent task function $f$ within a prompt. Therefore, a direct product of coordinate-wise optimal couplings is not a valid coupling of the prompt distributions. The following conditional coupling fixes this.

\begin{remark}[Prompt-level Wasserstein via conditional coupling]
\label{rem:productW-corrected}
Let $S$ be a latent seed that is shared across domains and determines the task index and task function. For instance, one may take $S=(I,f)$.
Conditional on $S$, the prompt coordinates factorize as
$\gL_{\mathsf P}(P^k\mid S)
   =  \big(\gL_{\mathsf P}(U\mid S)\big)^{\otimes k} \times \gP_X$ and $\gL_{\mathsf Q}(P^k\mid S)
   =  \big(\gL_{\mathsf Q}(U\mid S)\big)^{\otimes k} \times \gQ_X,$
where $\bm U=(\bm X,Y)$ and $\gP_X,\gQ_X$ are the (source/target) input distributions.
In particular, conditional on $S$ the $k$ context pairs are i.i.d. under each domain. 
(If one prefers to carry a coupling of the additive noise across domains, introduce an exogenous noise seed that determines the noise distribution but not its realized sample path; this preserves conditional i.i.d.)
\end{remark}

\begin{lemma}[Conditional product-type upper bound for prompt-level Wasserstein]
\label{lem:product-wass}
Under the setting of Remark~\ref{rem:productW-corrected}, for every $k\ge 1$ and $0<\alpha\le 1$,
\begin{align*}
    &\mathsf W^{(k)}_\alpha \big(\gL_{\mathsf P}(P^k), \gL_{\mathsf Q}(P^k)\big) \\
    &\le \E_{S} \left[ \mathsf W_\alpha \big(\gL_{\mathsf P}(\bm U\mid S), \gL_{\mathsf Q}(\bm U\mid S)\big) \right]
    +  \mathsf W_\alpha(\gP_X, \gQ_X),
\end{align*}
where the prompt-level ground metric is
\[
    \overline d_{k,\alpha} \big((\bm u_{1:k},\bm c),(\bm u'_{1:k},\bm c')\big)
    := \frac1k\sum_{i=1}^k \|\bm u_i-\bm u'_i\|_2^\alpha
    + \|\bm c-\bm c'\|_2^\alpha,
\]
and for single pairs $\bm U=(\bm X,Y)$ we write
$\mathsf W_\alpha(\cdot,\cdot):=W_1(\cdot,\cdot;\|\cdot\|_2^\alpha)$.
\end{lemma}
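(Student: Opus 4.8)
The plan is to produce a single explicit coupling of $\gL_{P}(P^k)$ and $\gL_{Q}(P^k)$ whose expected $\overline d_{k,\alpha}$-cost equals the right-hand side, and then conclude via the definition of $\mathsf W^{(k)}_\alpha$ as an infimum over couplings. The structural input is Remark~\ref{rem:productW-corrected}: with the shared latent seed $S$, conditionally on $S$ the prompt factorizes into $k$ i.i.d.\ context pairs drawn from $\gL_{\bullet}(\bm{U}\mid S)$ together with an independent query drawn from the domain input law, for $\bullet\in\{P,Q\}$. Hence it suffices to couple $\gL_{P}(\bm{U}\mid S)$ with $\gL_{Q}(\bm{U}\mid S)$ (for each value of $S$) and, separately, $\gP_X$ with $\gQ_X$, and then glue these couplings independently across the $k$ context slots and the query slot, all conditionally on the common $S$.

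Concretely, for each value $s$ of $S$ fix an optimal transport plan $\gamma_s\in\Pi\big(\gL_{P}(\bm{U}\mid S=s),\gL_{Q}(\bm{U}\mid S=s)\big)$ for the cost $\|\cdot\|_2^\alpha$, so that $\int \|\bm{u}-\bm{u}'\|_2^\alpha\,\gamma_s(\dd\bm{u},\dd\bm{u}')=\mathsf W_\alpha\big(\gL_{P}(\bm{U}\mid S=s),\gL_{Q}(\bm{U}\mid S=s)\big)$; by standard measurable-selection theory for the Kantorovich problem on Polish spaces (the set of optimal plans is nonempty, weakly compact, and depends measurably on the marginals, so a Kuratowski--Ryll-Nardzewski selection applies), $s\mapsto\gamma_s$ may be taken measurable. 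Similarly fix an optimal plan $\gamma_X\in\Pi(\gP_X,\gQ_X)$ for $\|\cdot\|_2^\alpha$. Now draw $S$ from its (shared) law; given $S=s$, draw $(\bm{U}_i,\bm{U}_i')_{i=1}^k$ i.i.d.\ from $\gamma_s$ and an independent $(\bm{C},\bm{C}')\sim\gamma_X$, and set $P^k:=(\bm{U}_{1:k},\bm{C})$ and $(P^k)':=(\bm{U}'_{1:k},\bm{C}')$. Conditionally on $S=s$, the law of $P^k$ is $\big(\gL_{P}(\bm{U}\mid S=s)\big)^{\otimes k}\times\gP_X$, which by Remark~\ref{rem:productW-corrected} equals $\gL_{P}(P^k\mid S=s)$; integrating over $S$ gives marginal $\gL_{P}(P^k)$, and symmetrically $(P^k)'\sim\gL_{Q}(P^k)$. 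So $(P^k,(P^k)')$ is an admissible coupling.

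Its cost is computed by conditioning on $S$ and using linearity:
\[
\E\big[\overline d_{k,\alpha}(P^k,(P^k)')\big]
=\E_{S}\Big[\frac1k\sum_{i=1}^k\E\big[\|\bm{U}_i-\bm{U}_i'\|_2^\alpha\mid S\big]\Big]+\E\big[\|\bm{C}-\bm{C}'\|_2^\alpha\big].
\]
For each $i$, optimality of $\gamma_S$ gives $\E[\|\bm{U}_i-\bm{U}_i'\|_2^\alpha\mid S]=\mathsf W_\alpha\big(\gL_{P}(\bm{U}\mid S),\gL_{Q}(\bm{U}\mid S)\big)$, so the inner average collapses to this single quantity, and optimality of $\gamma_X$ gives $\E[\|\bm{C}-\bm{C}'\|_2^\alpha]=\mathsf W_\alpha(\gP_X,\gQ_X)$. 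Hence $\E[\overline d_{k,\alpha}(P^k,(P^k)')]=\E_{S}[\mathsf W_\alpha(\gL_{P}(\bm{U}\mid S),\gL_{Q}(\bm{U}\mid S))]+\mathsf W_\alpha(\gP_X,\gQ_X)$, and since $\mathsf W^{(k)}_\alpha(\gL_{P}(P^k),\gL_{Q}(P^k))$ is the infimum of $\E[\overline d_{k,\alpha}]$ over couplings, the claimed inequality follows.

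The only delicate point is the measurable dependence $s\mapsto\gamma_s$; if one prefers to avoid invoking a selection theorem, one can instead take $\varepsilon$-optimal plans depending measurably on $s$ (e.g.\ via a countable measurable partition of the $S$-space) and let $\varepsilon\downarrow0$, or first prove the bound for finitely supported approximations of all the laws and pass to the limit using weak convergence together with lower semicontinuity of $W_1$. Everything else---the gluing, the marginal identification via Remark~\ref{rem:productW-corrected}, and the cost computation---is routine. Note that no triangle inequality for $\mathsf W^{(k)}_\alpha$ is needed: the bound comes from one coupling, which is also why the query term enters additively with coefficient $1$, matching its coefficient in $\overline d_{k,\alpha}$.
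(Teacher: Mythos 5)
Your proposal is correct and is essentially the paper's own argument: the paper likewise draws, conditionally on the shared seed $S$, i.i.d.\ pairs from an optimal single-pair coupling of $\gL_{P}(\bm U\mid S)$ and $\gL_{Q}(\bm U\mid S)$ together with an independent optimal coupling of $(\gP_X,\gQ_X)$, computes the $\overline d_{k,\alpha}$-cost, and then passes to the unconditional laws (the paper phrases this last step as disintegration plus convexity of $W_1$, which is exactly what your single global coupling realizes). Your explicit treatment of the measurable dependence $s\mapsto\gamma_s$ is a welcome extra care that the paper leaves implicit, but it does not change the route.
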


\begin{proof}[Proof of Lemma~\ref{lem:product-wass}]

\smallskip
\noindent\textbf{Step 1 (conditional product coupling).}
Fix $S=s$.
By Remark~\ref{rem:productW-corrected}, under each domain the $k$ context coordinates are i.i.d.\ with common conditional distribution $\gL_{\mathsf P}(\bm U\mid s)$ (resp.\ $\gL_{\mathsf Q}(\bm U\mid s)$), while the query coordinate has distribution $\gP_X$ (resp.\ $\gQ_X$) independent of the context.
Let $\pi_U^s$ be an optimal coupling for $\mathsf W_\alpha \big(\gL_{\mathsf P}(\bm U\mid s), \gL_{\mathsf Q}(\bm U\mid s)\big)$ with ground metric $d_\alpha(\bm u,\bm u'):=\|\bm u-\bm u'\|_2^\alpha$, and let $\pi_C$ be an optimal coupling for $\mathsf W_\alpha(\gP_X,\gQ_X)$ with ground metric $d_\alpha(\bm c,\bm c'):=\|\bm c-\bm c'\|_2^\alpha$.
Construct a coupling $\Pi_s$ of
$\gL_{\mathsf P}(P^k\mid s)$ and $\gL_{\mathsf Q}(P^k\mid s)$
by drawing $(\bm{U}_i,\bm{U}'_i)\stackrel{\text{i.i.d.}}{\sim}\pi_U^s$ for $i=1,\dots,k$ and
$(\bm{C},\bm{C}')\sim\pi_C$, all independent across coordinates.
Then, by the definition of the prompt-level ground metric,
\begin{align*}
    &\E_{\Pi_s}\Big[\overline d_{k,\alpha}\big((\bm{U}_{1:k},\bm{C}),(\bm{U}'_{1:k},\bm{C}')\big)\Big]\\
    &= \frac1k\sum_{i=1}^k \E_{\pi_U^s}\big[d_\alpha(\bm{U}_i,\bm{U}'_i)\big]
    + \E_{\pi_C}\big[d_\alpha(\bm{C},\bm{C}')\big] \\
    &= \mathsf W_\alpha \big(\gL_{\mathsf P}(\bm U\mid s), \gL_{\mathsf Q}(\bm U\mid s)\big)
    + \mathsf W_\alpha(\gP_X,\gQ_X).
\end{align*}
Therefore,
\begin{align*}
    &\mathsf W^{(k)}_\alpha \big(\gL_{\mathsf P}(P^k\mid s), \gL_{\mathsf Q}(P^k\mid s)\big) \\
    &\le 
    \mathsf W_\alpha \big(\gL_{\mathsf P}(\bm U\mid s), \gL_{\mathsf Q}(\bm U\mid s)\big)
    + \mathsf W_\alpha(\gP_X,\gQ_X).
\end{align*}

\smallskip
\noindent\textbf{Step 2 (disintegration and convexity).}
Write the unconditional prompt distributions as mixtures over $S$:
$\gL_{\mathsf P}(P^k)=\int \gL_{\mathsf P}(P^k\mid s) \nu(\dd s)$ and
$\gL_{\mathsf Q}(P^k)=\int \gL_{\mathsf Q}(P^k\mid s) \nu(\dd s)$,
where $\nu$ is the (shared) distribution of $S$ under both domains
(task distribution and noise distribution are kept the same across domains).
By convexity of $W_1(\cdot,\cdot; \overline d_{k,\alpha})$ in each argument,
\begin{align*}
&\mathsf W^{(k)}_\alpha \big(\gL_{\mathsf P}(P^k), \gL_{\mathsf Q}(P^k)\big) \\
&\le \int
\mathsf W^{(k)}_\alpha \big(\gL_{\mathsf P}(P^k\mid s), \gL_{\mathsf Q}(P^k\mid s)\big)
 \nu(\dd s) \\
&\le \E_{S} \left[ 
\mathsf W_\alpha \big(\gL_{\mathsf P}(\bm U\mid S), \gL_{\mathsf Q}(\bm U\mid S)\big)
\right]
+ \mathsf W_\alpha(\gP_X,\gQ_X),
\end{align*}
which is the desired bound.
\end{proof}

\begin{corollary}[Input-only reduction under Lipschitz tasks]
\label{cor:input-only}
Assume $Y=f(X)+\varepsilon$ with a shared noise coupling across domains (possibly conditional on $S$) and a task family that is uniformly $L_f$-Lipschitz in $x$: $|f(\bm x)-f(\bm x')|\le L_f\|\bm x-\bm x'\|_2$ for all tasks. Then for every $k\ge1$ and $0<\alpha\le1$,
\[
    \mathsf W^{(k)}_\alpha\big(\gL_{\mathsf P}(P^k),\gL_{\mathsf Q}(P^k)\big)
    \ \le\ (2+L_f^\alpha) \mathsf W_\alpha(\gP_X,\gQ_X).
\]
\end{corollary}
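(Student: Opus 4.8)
The plan is to treat this as a short corollary of the prompt-level decoupling already done in Lemma~\ref{lem:product-wass}: that lemma reduces the prompt-level Wasserstein distance to (i) the Wasserstein distance between the query-input distributions $\gP_X,\gQ_X$ and (ii) an $S$-averaged Wasserstein distance between the conditional single-example laws $\gL_P(\bm U\mid S)$ and $\gL_Q(\bm U\mid S)$. The only work left is to bound term (ii) by $(1+L_f^\alpha)\,\mathsf W_\alpha(\gP_X,\gQ_X)$, and for that I would build an explicit coupling that moves only the input coordinate and leaves the noise untouched.

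First I would invoke Lemma~\ref{lem:product-wass}, which under the conditional-i.i.d.\ structure of Remark~\ref{rem:productW-corrected} gives, for every $k\ge1$ and $0<\alpha\le1$,
\[
\mathsf W^{(k)}_\alpha\big(\gL_P(P^k),\gL_Q(P^k)\big)\ \le\ \E_S\big[\mathsf W_\alpha(\gL_P(\bm U\mid S),\gL_Q(\bm U\mid S))\big]+\mathsf W_\alpha(\gP_X,\gQ_X).
\]
Then I would fix a seed value $S=s$, so that the task function $f=f_s$ is determined; conditional on this, $\bm U=(\bm X, f_s(\bm X)+\varepsilon)$ with $\bm X\sim\gP_X$ in the source domain and $\bm X\sim\gQ_X$ in the target domain, while $\varepsilon\sim\gP_\varepsilon$ is the same noise law in both. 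Let $\pi_X$ be an optimal coupling for $\mathsf W_\alpha(\gP_X,\gQ_X)$ with ground cost $\|\cdot\|_2^\alpha$; draw $(\bm X,\bm X')\sim\pi_X$ and couple the noise by the identity $\varepsilon'=\varepsilon$ (this is exactly the ``shared noise coupling'' hypothesis, and since $\gP_X,\gQ_X$ do not depend on $s$, the same $\pi_X$ serves for every $s$). Under this coupling $\bm U-\bm U'=(\bm X-\bm X',\,f_s(\bm X)-f_s(\bm X'))$, so by the triangle inequality $\sqrt{a^2+b^2}\le|a|+|b|$ in $\R^{d_{\mathrm{eff}}}$ and the uniform $L_f$-Lipschitz bound,
\[
\|\bm U-\bm U'\|_2\ \le\ \|\bm X-\bm X'\|_2+|f_s(\bm X)-f_s(\bm X')|\ \le\ (1+L_f)\,\|\bm X-\bm X'\|_2.
\]
Raising to the power $\alpha$ and using subadditivity of $t\mapsto t^\alpha$ (so $(1+L_f)^\alpha\le 1+L_f^\alpha$) gives $\|\bm U-\bm U'\|_2^\alpha\le(1+L_f^\alpha)\|\bm X-\bm X'\|_2^\alpha$; taking $\E_{\pi_X}$ yields $\mathsf W_\alpha(\gL_P(\bm U\mid s),\gL_Q(\bm U\mid s))\le(1+L_f^\alpha)\,\mathsf W_\alpha(\gP_X,\gQ_X)$ uniformly in $s$. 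Averaging over $S$ and substituting into the Lemma~\ref{lem:product-wass} bound produces $(1+L_f^\alpha)\,\mathsf W_\alpha(\gP_X,\gQ_X)+\mathsf W_\alpha(\gP_X,\gQ_X)=(2+L_f^\alpha)\,\mathsf W_\alpha(\gP_X,\gQ_X)$, which is the claim.

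I do not expect a genuine obstacle here; the statement is essentially bookkeeping built on Lemma~\ref{lem:product-wass}. The one point that needs care is the legitimacy of the identity noise coupling: it is an admissible coupling of the two conditional prompt laws precisely because the noise distribution is held fixed across domains, and if one wishes to be fully rigorous about this one can absorb the noise into the seed $S$ as indicated in Remark~\ref{rem:productW-corrected} so that ``$\varepsilon'=\varepsilon$'' is literal. The only other things to verify are the elementary facts $\sqrt{a^2+b^2}\le|a|+|b|$ and $(1+L_f)^\alpha\le1+L_f^\alpha$ for $\alpha\in(0,1]$, both immediate.
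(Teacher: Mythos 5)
Your proposal is correct and follows essentially the same route as the paper: invoke Lemma~\ref{lem:product-wass}, then bound the conditional single-example term via the shared-noise coupling, the uniform Lipschitz bound, and subadditivity of $t\mapsto t^{\alpha}$ to get the factor $1+L_f^{\alpha}$, and add the query term $\mathsf W_\alpha(\gP_X,\gQ_X)$. The only cosmetic difference is that you raise $(1+L_f)\|\bm X-\bm X'\|_2$ to the power $\alpha$ and then split $(1+L_f)^\alpha\le 1+L_f^\alpha$, whereas the paper splits $\|\bm U-\bm U'\|_2^\alpha$ coordinate-wise first; the two are equivalent.
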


\begin{proof}[Proof of Corollary~\ref{cor:input-only}]
Under the shared-noise coupling, by subadditivity of $t\mapsto t^\alpha$ for $\alpha\in(0,1]$, $\|\bm U-\bm U'\|_2^\alpha \le \|\bm X-\bm X'\|_2^\alpha + |f(\bm X)-f(\bm X')|^\alpha \le (1+L_f^\alpha) \|\bm X-\bm X'\|_2^\alpha$. Hence $\mathsf W_\alpha(\gL_{\mathsf P}(\bm U\mid S),\gL_{\mathsf Q}(\bm U\mid S)) \le (1+L_f^\alpha) \mathsf W_\alpha(\gP_X,\gQ_X)$ for every $S$. Plug this into Lemma~\ref{lem:product-wass} and add the $\mathsf W_\alpha(\gP_X,\gQ_X)$ term for the query coordinate $\bm{C}$.
\end{proof}

Define the domain-specific ICL risk
\[
  R^{(\mathsf R)}(M):=\frac1p\sum_{k=1}^p
  \E_{P^k\sim \gL_{\mathsf R}(P^k)}\!\left[(f(\bm{x}_{k+1})-M(P^k))^2\right].
\]
Combining Proposition~\ref{thm:main}, Theorem~\ref{thm:upper-bayes}, Theorem~\ref{thm:ood-stability} with either Lemma~\ref{lem:product-wass} or Corollary~\ref{cor:input-only}, and absorbing polylogarithms into $\tilde O(\cdot)$, yields the same end-to-end OOD risk bound as in the main text, with the prompt-level Wasserstein term.
The additional terms quantify distribution shift incurred during pretraining (via $\gL_{\mathsf P}(P^k)$ vs.\ $\gL_{\mathsf Q}(P^k)$); once $\theta$ is fixed, the inference-time predictor risk $R_{\mathrm{PV}}$ is evaluated under the target domain alone and does not carry extra estimation error from pretraining.
Putting everything together, for the target domain $\gQ_X$ we obtain
\begin{align*}
  &\E R^{(\mathsf Q)}(M_{\hat\theta}) \\
  &\le
  \underbrace{\frac{1}{p}\sum_{k=1}^p R_k^\star(F_{i^\star};\gQ_X)}_{\text{oracle risk under the true task type in the target domain}} \\
  &+ \tilde O\!\left(m^{-\frac{2\alpha}{d_{\mathrm{eff}}}} + \frac{m}{pN} + \frac{1}{N}\right) \\
  &\quad
  + \underbrace{\frac{2(B_M{+}B_f)}{p}\sum_{k=1}^p \big(L+\Lambda_\alpha\big)\,
  \mathsf W^{(k)}_{\alpha}\!\big(\gL_{\mathsf P}(P^k),\gL_{\mathsf Q}(P^k)\big)}_{\text{OOD penalty on the Bayes Gap}}\\
  &+ \underbrace{\frac{5B_f^2}{p}\!\left(
  \frac{\tfrac{1-\alpha_{i^\star}}{\alpha_{i^\star}}}{e^{D_{\min}/2}-1}
  + \frac{T-1}{e^{C}-1}\right)}_{\text{mixture identification remainder}},
\end{align*}
where $R_k^\star(F_{i^\star};\gQ_X)$ denotes the minimax risk for predicting a function from the true task class $F_{i^\star}$ under prompt distribution $\gQ_X$.

\section{On the H\"older Condition of the Bayes Predictor}\label{sec:holder-examples}

Theorem~\ref{thm:upper-bayes} assumes a H\"older condition of the Bayes predictor on a bounded set. Since our final bound is in expectation, it is sufficient that this H\"older condition holds on a high-probability event under the prompt distribution; the contribution of the complement event can be controlled by sub-Gaussian tails (cf.\ Step~0 in the proof of Theorem~\ref{thm:upper-bayes}). Therefore, the conditions below are meant as convenient high-probability sufficient (not necessary) conditions, rather than global ones holding for every possible context realization.

In addition to Assumptions~\ref{ass:bounded_f}-\ref{ass:bounded_x}, assume the noise is Gaussian for simplicity.
Under the additional regularity conditions below (holding on a high-probability event), the Bayes predictor $M_{\mathrm{Bayes}}$ is H\"older ($\alpha=1$), with the family-specific H\"older constants listed below:
\begin{itemize}[labelindent=2pt,leftmargin=*,itemsep=4pt, parsep=1pt, topsep=0pt, partopsep=0pt]
\item {Linear regression.} $f_{(w,b)}(\bm x)=\bm w^\top \bm x+b$ with $\|\bm w\|_2\le B_w$, $|b|\le B_b$ and feature map $\psi(\bm x)=[\bm x^\top,1]^\top$. Then $\bm x\mapsto f_{(w,b)}(\bm x)$ is $B_w$-Lipschitz.

\item {Finite-order series regression.} $f_a(\bm x)=\sum_{j=1}^R a_j g_j(\bm x)$ with $\|\bm a\|_1\le A$, basis functions satisfying $\|g_j\|_\infty\le 1$ and $\|\nabla g_j(\bm x)\|_2\le L_g$ uniformly; take $\psi(\bm x)=[g_1(\bm x),\ldots,g_R(\bm x)]^\top$. Then $\bm x\mapsto f_a(\bm x)$ is $A L_g$-Lipschitz.
\item {Finite convex dictionary.} $f_a=\sum_{j=1}^J a_j f^{(j)}$ with $\bm a\in\Delta^{J-1}$, each atom obeying $|f^{(j)}(\bm x)|\le B_f$ and $\|\nabla f^{(j)}(\bm x)\|_2\le L_f$ uniformly; take $\psi(\bm x)=[f^{(1)}(\bm x),\ldots,f^{(J)}(\bm x)]^\top$. Then $\bm x\mapsto f_a(\bm x)$ is $L_f$-Lipschitz. (An example of a distribution on $\Delta^{J-1}$ is the logistic-normal distribution~\citep{AitchisonShen1980}.)
\end{itemize}

We consider these three regression models. For these models, we assume the following conditions hold with high probability:
\begin{itemize}[labelindent=2pt,leftmargin=*,itemsep=4pt, parsep=1pt, topsep=0pt, partopsep=0pt]
    \item For task family $i$, there exist a dimension $d_i \in \mathbb{N}$ and a parameter space $\Theta_i \subset \R^{d_i}$ such that $f_{\bm{\theta}}:\gC \to \R$ for every $\bm{\theta} \in \Theta_i$. Moreover, the model is uniformly bounded and H\"older in the query: $\sup_{\bm{\theta}\in\Theta_i}\sup_{\bm{x}\in\gC} |f_{\bm{\theta}}(\bm{x})| \le B_f$ and $\sup_{\bm{\theta}\in\Theta_i}\sup_{\bm{x}\neq \bm{x}'} \frac{|f_{\bm{\theta}}(\bm{x})-f_{\bm{\theta}}(\bm{x}')|}{\|\bm{x}-\bm{x}'\|^{\alpha}_2} \le L_{f,i}$.  
    \item The distribution on $\bm{\theta}$ given $I=i$ has a density $\varpi_i(\bm{\theta}) \propto \exp\{-V(\bm{\theta})\}$ on $\Theta_i$, where $V$ is twice continuously differentiable and $\nabla^2 V(\bm{\theta}) \succeq \lambda_0 I_{d_i} \quad\text{for all }\bm{\theta}\in\Theta_i$, for some $\lambda_0>0$. In particular, a Gaussian distribution $\gN(0,\Lambda_0^{-1})$ satisfies this with $\lambda_0=\lambda_{\min}(\Lambda_0)$.  
    \item Let $u=(\bm{x},y)\in\gU$ and define the per-sample loss $\tilde{\ell}(\bm{\theta};u) := \frac{1}{2\sigma_\varepsilon^2}\bigl(f_{\bm{\theta}}(\bm{x})-y\bigr)^2$. There exists a constant $L_{\bm{\theta},i}<\infty$ such that, for all $\bm{\theta}\in\Theta_i$ and all $\bm u,\tilde{\bm u}\in\gU$, $\bigl\|\nabla_{\bm{\theta}} \tilde{\ell}(\bm{\theta};\bm u) - \nabla_{\bm{\theta}} \tilde{\ell}(\bm{\theta};\tilde{\bm u})\bigr\|_2 \;\le\; L_{\bm{\theta},i}\,\|\bm u-\tilde{\bm u}\|^{\alpha}_2 $.  
    \item There exists $b$ such that $\tfrac{1}{k} \sum_{t=1}^k \psi(\bm{x}_t)\psi^{\top}(\bm{x}_t)\succeq b I$ (as in \citet{bai2023transformers}).  
\end{itemize}

In the mixture setting, the Bayes predictor decomposes as
$M_{\mathrm{Bayes}}(P^k)=\sum_{i=1}^T \pi_i(D^k)\,\mu_i(D^k,\bm c)$ with $\mu_i(D^k,\bm c):=\E[f(\bm c)\mid D^k,I=i]$ and
$\pi_i(D^k)\propto \alpha_i\,m_i(D^k)$, $m_i(D^k):=\int \exp\!\big\{-(2\sigma_\varepsilon^2)^{-1}\sum_{r=1}^k (f_{\bm{\theta}}(\bm x_r)-y_r)^2\big\}\,\dd\gP_i(\bm{\theta})$. The above single-family arguments and the assumptions imply that each $\mu_i$ is $\alpha$-H\"older with a constant $L_{\mu,i}$ independent of $k$. However, the mixture weights $\pi_i(D^k)$ depend on the context through the marginal evidences $m_i(D^k)$. In identifiable mixtures (the regime of Theorem~\ref{thm:mixture-to-true}), the task posterior concentrates exponentially fast on the true index $i^\star$, which makes the gating increasingly stable. Concretely, under the log-likelihood-ratio conditions of Theorem~\ref{thm:mixture-to-true}, there exists an event $\mathcal E_k$ with $\Pr(\mathcal E_k)\ge 1-(T-1)e^{-Ck}$ such that on $\mathcal E_k$, $1-\pi_{i^\star}(D^k)=\sum_{i\neq i^\star}\pi_i(D^k) \le \frac{1-\alpha_{i^\star}}{\alpha_{i^\star}}\,e^{-D_{\min}k/2}$.
As a result, the mixture correction to the modulus of continuity of $M_{\mathrm{Bayes}}$ is exponentially suppressed, and the effective H\"older constant is independent of $k$ up to an exponentially small term. Also, a uniform margin assumption $\min_{i\ne j}\tfrac{1}{k}|\log m_i(D^k)-\log m_j(D^k)|\ge \gamma>0$ implies the same conclusion with $e^{-\gamma k}$.

\section{Details of Theorem~\ref{thm:mixture-to-true}}\label{sec:mixture-to-true}

This appendix is illustrative rather than required for Theorem~\ref{thm:mixture-to-true}: it specializes the theorem to a concrete linear-vs-series regression mixture and verifies the abstract sequential conditions under transparent sufficient conditions.

Before presenting concrete examples, we record two technical remarks for clarity and rigor.
\begin{remark}[Filtration for the log-likelihood ratio increments]
Let \(\mathcal G_{t-1}:=\sigma(D^{t-1})\) and \(\mathcal G'_{t-1}:=\sigma(D^{t-1},X_t)\). The increment \(Z_{j,t}\) depends on the fresh covariate \(X_t\). Since \(X_t\stackrel{\mathrm{i.i.d.}}{\sim}\gP_X\) is independent of \(D^{t-1}\) under our PGP, conditioning on \(\mathcal G_{t-1}\) averages over \(X_t\sim\gP_X\).
Moreover, for any integrable \(\varphi\),
\[
\mathbb E[\varphi(Z_{j,t})\mid \mathcal G_{t-1}]
=
\mathbb E\!\left[\mathbb E[\varphi(Z_{j,t})\mid \mathcal G'_{t-1}]\middle|\mathcal G_{t-1}\right].
\]
Hence, any drift or mgf bounds stated conditional on \(\mathcal G'_{t-1}\) imply the corresponding \(\mathcal G_{t-1}\)-conditional ones by the tower property; in particular, the drift condition in Theorem~\ref{thm:mixture-to-true} can be interpreted as an \(X_t\)-averaged separation.
\end{remark}

\begin{remark}[Dominating measure and definition of \(Z_{j,t}\)]
If the predictive distributions do not admit Lebesgue densities, assume they are all dominated by a common
\(\sigma\)-finite reference measure \(\mu\). Then the Radon--Nikodym derivatives
\(p_i(\cdot\mid x_t,D^{t-1})=\frac{dP_i(\cdot\mid x_t,D^{t-1})}{d\mu}\) exist (\(\mu\)-a.e.), and the
log-likelihood ratio increment \(Z_{j,t}:=\log\frac{p_j(Y_t\mid X_t,D^{t-1})}{p_{i^\star}(Y_t\mid X_t,D^{t-1})}\)
is well-defined.
\end{remark}

We concretely investigate Theorem~\ref{thm:mixture-to-true} for a pair of task families: \emph{linear regression}
versus a \emph{series (basis) regression} that excludes constant and linear terms.

\paragraph{Standing assumptions.}
\begin{itemize}[labelindent=2pt,leftmargin=*,itemsep=4pt, parsep=1pt, topsep=0pt, partopsep=0pt]
\item Inputs are bounded and i.i.d.: $\bm X\sim \gP_X$ with $\|\bm X\|_2\le B_X$ a.s.\ and $\E[\bm X]=0$. Let $\Sigma_X:=\E[\bm X\bm X^\top]$, which we assume is positive definite on $\R^{d_{\mathrm{feat}}}$ with $\lambda_{\min}(\Sigma_X)>0$.
\item Noise is Gaussian (a special case of sub-Gaussian): $\varepsilon\stackrel{\mathrm{i.i.d.}}{\sim}\gN(0,\sigma_\varepsilon^2)$ independent of $(f,\bm X)$.
\item Boundedness of tasks. For the linear class
\[
    F_{\mathrm{lin}}=\bigl\{f_{\bm{w},b}(\bm{x})=\bm{w}^\top \bm{x}+b:\ \|\bm{w}\|_2\le B_w,\ |b|\le B_b\bigr\},
\]
we have $|f_{w,b}(\bm{x})|\le B_w B_X+B_b=:B_f$ on the support of $\gP_X$.
For the series class
\[
    F_{\mathrm{ser}}=\Bigl\{f_a(\bm{x})=\sum_{r=r_0}^{R_{\mathrm{max}}} a_r g_r(\bm{x}):\ \|\bm{a}\|_2\le B_a\Bigr\},
\]
assume $r_0\ge2$ (so constant and linear terms are excluded), the basis $\{g_r\}_{r=r_0}^{R_{\mathrm{max}}}$ is orthonormal in $L^2(\gP_X)$, orthogonal to linear functions, and bounded pointwise, i.e.\ $\sup_x |g_r(\bm{x})|\le G_{\max}$. Then $|f_a(\bm{x})|\le \|\bm{a}\|_2\cdot \|\bigl(g_r(\bm{x})\bigr)_{r=r_0}^{R_{\mathrm{max}}}\|_2\le B_a\sqrt{{R_{\mathrm{max}}}-r_0+1}G_{\max}=:B_f$.
\item Within each family, we use a truncated Gaussian parameter distribution supported on the above bounded parameter sets (to respect $|f|\le B_f$) and otherwise conjugate: $\bm{\theta}_{\mathrm{lin}}:=(\bm{w},b)\sim \gN(0,\tau_{\mathrm{lin}}^2 \Id) \text{ truncated to } \{\|\bm{w}\|\le B_w,\ |b|\le B_b\}$ and
$\bm{\theta}_{\mathrm{ser}}:=\bm{a}\sim \gN(0,\tau_{\mathrm{ser}}^2 \Id) \text{ truncated to } \{\|\bm{a}\|_2\le B_a\}.
$
The truncation preserves boundedness; the standard Gaussian formulas below give upper bounds (hence valid constants) for the truncated case because the posterior covariances are $\preceq$ their untruncated analogues on the bounded domain.
\end{itemize}

\paragraph{A generic Gaussian–predictive bound for $Z_{j,t}$}

Fix a time $t$ and condition on $\gG_{t-1}$ and $\bm X_t=\bm{x}$. Under any task type (task family) $i$, the (posterior) predictive distribution is Gaussian
\[
    p_i(y\mid \bm{x},\gG_{t-1})=\gN\!\bigl(\mu_{i,t}(\bm{x}),\, s_{i,t}^2(\bm{x})\bigr),
\]
with mean $\mu_{i,t}(\bm{x})$ (the posterior mean of $f(\bm{x})$) and predictive variance
\[
    s_{i,t}^2(\bm{x})=\sigma_\varepsilon^2+\mathrm{Var}\!\bigl(f(\bm{x})\mid \gG_{t-1}, I=i\bigr).
\]
For our conjugate priors,
$\mu_{\mathrm{lin},t}(\bm{x})=\phi(\bm{x})^\top \bm m_{t-1},$ and $
    s_{\mathrm{lin},t}^2(\bm{x})=\sigma_\varepsilon^2+\phi(\bm{x})^\top \Sigma_{t-1}\,\phi(\bm{x}),\quad 
    \phi(\bm{x}):=\begin{bmatrix}\bm{x}\\ 1\end{bmatrix},$
and similarly
$\mu_{\mathrm{ser},t}(\bm{x})=\psi(\bm{x})^\top \tilde{\bm m}_{t-1},$ and $
    s_{\mathrm{ser},t}^2(\bm{x})=\sigma_\varepsilon^2+\psi(\bm{x})^\top \tilde\Sigma_{t-1}\,\psi(\bm{x}),\quad 
    \psi(\bm{x}):=\bigl(g_r(\bm{x})\bigr)_{r=r_0}^{R_{\mathrm{max}}}.$
Because $\|\phi(\bm{x})\|_2\le B_\phi:=\sqrt{B_X^2+1}$ and $\|\psi(\bm{x})\|_2\le B_\psi:=\sqrt{R_{\mathrm{max}}-r_0+1}\,G_{\max}$ and the posterior covariances are bounded by the prior covariances, we have a uniform variance upper bound
\begin{equation}
    s_{i,t}^2(\bm{x})\le \sigma_\varepsilon^2 + \bar{V},\qquad
    \bar{V}:=\max\{\tau_{\mathrm{lin}}^2 B_\phi^2,\ \tau_{\mathrm{ser}}^2 B_\psi^2\}.
\label{eq:varcap}
\end{equation}

For two types $i$ (true) and $j$ (wrong), define the log-predictive increment
\[
    Z_{j,t}:=\log\frac{p_j(Y_t\mid \bm X_t,\gG_{t-1})}{p_i(Y_t\mid \bm X_t,\gG_{t-1})}.
\]
A direct Gaussian calculation (writing $Y_t=\mu_{i,t}(\bm X_t)+s_{i,t}(\bm X_t)\,\varepsilon$ with $\varepsilon\sim\gN(0,1)$) yields
\begin{align}
    &\E\!\left[Z_{j,t}\mid \gG_{t-1}, \bm X_t\right] \\
    &= -\mathrm{KL}\!\Bigl(\gN(\mu_{i,t},s_{i,t}^2)\,\Big\|\,\gN(\mu_{j,t},s_{j,t}^2)\Bigr) \nonumber\\
    &= -\frac12\!\left\{\log\frac{s_{j,t}^2}{s_{i,t}^2} + \frac{s_{i,t}^2}{s_{j,t}^2}-1 +\frac{(\mu_{i,t}-\mu_{j,t})^2}{s_{j,t}^2}\right\}. \label{eq:drift}
\end{align}
Consequently, for every $(t,\bm{x})$,
\begin{align}
    &\E\!\left[Z_{j,t}\mid \gG_{t-1}, \bm X_t=\bm{x}\right] \\
    &\le -\frac{(\mu_{i,t}(\bm{x})-\mu_{j,t}(\bm{x}))^2}{2\,s_{j,t}^2(\bm{x})} \\
    &\le -\frac{(\mu_{i,t}(\bm{x})-\mu_{j,t}(\bm{x}))^2}{2(\sigma_\varepsilon^2+\bar{V})}.
\label{eq:driftLB}
\end{align}
Moreover, the centered increment $Z_{j,t}+D_{j,t}$ with $D_{j,t}:=-\E[Z_{j,t}\mid \gG_{t-1},\bm X_t]$ is a quadratic polynomial in a standard normal variable $Z_{j,t}+D_{j,t}= a_t\,\varepsilon + b_t\,(\varepsilon^2-1)$ with $a_t:= -\frac{(\mu_{i,t}-\mu_{j,t})\,s_{i,t}}{s_{j,t}^2}$ and $b_t:= -\frac12\!\left(\frac{s_{i,t}^2}{s_{j,t}^2}-1\right)$,
hence sub-exponential. Calculating the mgf
\begin{align*}
    &\E e^{\lambda(a_t\varepsilon + b_t(\varepsilon^2-1))} \\
    &=e^{-\lambda b_t}(1-2\lambda b_t)^{-1/2}
    \exp\!\left(\frac{\lambda^2 a_t^2}{2(1-2\lambda b_t)}\right),
\end{align*}
and the elementary bound $-\ln(1-u)-u\le u^2$ valid for $|u|\le 1/2$ (note that $|b_t|\le \bar{V}/(2\sigma_\varepsilon^2)$, so $u=2\lambda b_t\in[-1/2,1/2]$ whenever $|\lambda|\le 1/b_j$),
we obtain the uniform sub-exponential parameters $(\nu_j,b_j)$ in Theorem~\ref{thm:mixture-to-true} with
\[
    \nu_j^2  \le \frac{8 B_f^2\bigl(\sigma_\varepsilon^2+\bar{V}\bigr)}{\sigma_\varepsilon^4}
    + \frac{\bar{V}^2}{\sigma_\varepsilon^4},
    \qquad
    b_j := \frac{2\bar{V}}{\sigma_\varepsilon^2}.
\]

\paragraph{Pair A: true linear vs.\ wrong series (degree $\ge2$)}
Assume the data are generated by some $f^\star(\bm{x})=\bm{w}_\star^\top \bm{x}+b_\star\in F_{\mathrm{lin}}$ and the wrong family is $F_{\mathrm{ser}}$ with orthonormal $\{g_r\}_{r=r_0}^{R_{\mathrm{max}}}$, $r_0\ge 2$, orthogonal to $1$ and to all linear functionals of $\bm{X}$. Let $\Pi_{\mathrm{ser}}$ denote the $L^2(\gP_X)$–orthogonal projection onto $\mathrm{span}\{g_r\}$.

By orthogonality, $\Pi_{\mathrm{ser}} f^\star\equiv 0$, hence the $L^2$–gap between the true function and the wrong family is
\begin{align*}
    \Delta_{\mathrm{lin}\to\mathrm{ser}}^2
    &:=\ \bigl\|f^\star-\Pi_{\mathrm{ser}} f^\star\bigr\|_{L^2(\gP_X)}^2 \\
    &=\ \E\!\bigl[ (\bm{w}_\star^\top \bm X+b_\star)^2\bigr] \\
    &=\ \bm{w}_\star^\top\Sigma_X \bm{w}_\star + b_\star^2.
\end{align*}

For conjugate normal models with bounded regressors $\phi,\psi$ and positive definite design covariances, standard ridge-risk bounds in series regression~\citep[\S 3.4 in][]{vaart_wellner_weak_2023} give
$\|\mu_{\mathrm{lin},t}-f^\star\|_{L^2(\gP_X)}^2=O\!\left(\frac{d_{\mathrm{feat}}+1}{t}\right) $ and $\|\mu_{\mathrm{ser},t}-\Pi_{\mathrm{ser}} f^\star\|_{L^2(\gP_X)}^2=O\!\left(\frac{{R_{\mathrm{max}}}-r_0+1}{t}\right)$.
Thus, taking $t_0=\tilde O\left(\frac{d_{\mathrm{feat}}+{R_{\mathrm{max}}}-r_0+1}{\Delta_{\mathrm{lin}\to\mathrm{ser}}^2}\right)$, for all $t\ge t_0$, we have
\[
    \E_{X}\bigl[(\mu_{\mathrm{lin},t}(\bm X)-\mu_{\mathrm{ser},t}(\bm X))^2\bigr]
    \ge \frac12\,\Delta_{\mathrm{lin}\to\mathrm{ser}}^2.
\]
Combining with \eqref{eq:driftLB} and $s_{\mathrm{ser},t}^2\le \sigma_\varepsilon^2+\bar{V}$ gives the uniform negative drift (for all $t\ge t_0$)
\begin{equation}
    \E\!\left[Z_{j,t}\mid \gG_{t-1}\right]
    \le -\,D_j,
    \qquad
    D_j\ :=\ \frac{\Delta_{\mathrm{lin}\to\mathrm{ser}}^2}{4\bigl(\sigma_\varepsilon^2+\bar{V}\bigr)}.
\label{eq:Dj-lin-ser}
\end{equation}

From Theorem~\textup{\ref{thm:mixture-to-true}}, the posterior mass on the wrong family is
\[
    \frac{1-\alpha_{i^\star}}{\alpha_{i^\star}}\,\exp\!\left(-\frac{D_j}{2}\,k\right) + \exp(-C_j k) ,
\]
where $C_j\ :=\ \frac{D_j^2}{8\bigl(\nu_j^2+b_j D_j/2\bigr)}.$
Therefore, to make the mixture identification remainder $\le \eta$, it suffices (up to absolute constants and polylog factors) to take
\begin{align} 
    k = \tilde O \Big( &\frac{\sigma_\varepsilon^2+\bar{V}}{\Delta_{\mathrm{lin}\to\mathrm{ser}}^2}\,\log\frac{1}{\eta} \\
    &\quad  \vee
    \Big[\frac{(\sigma_\varepsilon^2+\bar{V})^2}{\Delta_{\mathrm{lin}\to\mathrm{ser}}^4\,\sigma_\varepsilon^4}\,\big(B_f^2(\sigma_\varepsilon^2+\bar{V})+\bar{V}^2\big) \\ 
    &\qquad \qquad +\frac{(\sigma_\varepsilon^2+\bar{V})\,\bar{V}}{\Delta_{\mathrm{lin}\to\mathrm{ser}}^2\,\sigma_\varepsilon^2}\Big]\,\log\frac{1}{\eta} \Big).
    \label{eq:k-order-lin-ser}
\end{align}
The first term is the dominant, interpretable signal-to-noise scaling:
\[
    k\ \asymp\ \frac{\sigma_\varepsilon^2+\bar{V}}{\bm{w}_\star^\top\Sigma_X \bm{w}_\star + b_\star^2}\,\log\frac{1}{\eta}.
\]

\paragraph{Pair B: true series (degree $\ge2$) vs.\ wrong linear}

Now the data come from $f^\star(\bm x)=\sum_{r=r_0}^{R_{\mathrm{max}}} a_r^\star g_r(\bm x)$ with $\|\bm{a}^\star\|_2\le B_a$ and the wrong family is linear. Orthogonality gives $\Pi_{\mathrm{lin}} f^\star\equiv 0$ (since $r_0\ge2$ and $\E[\bm X]=0$), hence
\begin{align*}
    \Delta_{\mathrm{ser}\to\mathrm{lin}}^2  
    &:=\ \bigl\|f^\star-\Pi_{\mathrm{lin}} f^\star\bigr\|_{L^2(\gP_X)}^2 \\ 
    &=\ \|f^\star\|_{L^2(\gP_X)}^2 \\ 
    &=\ \sum_{r=r_0}^{R_{\mathrm{max}}} (a_r^\star)^2.
\end{align*}
Exactly the same argument as above yields, for $t\ge t_0= \tilde O\!\left(\frac{d_{\mathrm{feat}}+{R_{\mathrm{max}}}-r_0+1}{\Delta_{\mathrm{ser}\to\mathrm{lin}}^2}\right)$,
$D_j\ =\ \frac{\Delta_{\mathrm{ser}\to\mathrm{lin}}^2}{4\bigl(\sigma_\varepsilon^2+\bar{V}\bigr)}$,
$\nu_j^2\ \le\ \frac{8 B_f^2(\sigma_\varepsilon^2+\bar{V})+\bar{V}^2}{\sigma_\varepsilon^4}$,
and $b_j\ =\ \frac{2\bar{V}}{\sigma_\varepsilon^2}$ and the same $k$–order as in \eqref{eq:k-order-lin-ser} with $\Delta_{\mathrm{lin}\to\mathrm{ser}}$ replaced by $\Delta_{\mathrm{ser}\to\mathrm{lin}}$.

\paragraph{Remarks and extensions}
All bounds above use only: (i) $|f|\le B_f$ on the support of $\gP_X$; (ii) the uniform predictive variance upper bound \eqref{eq:varcap}; and (iii) $L^2(\gP_X)$–orthogonality for the two families considered. Using truncated conjugate priors guarantees (i) and keeps (ii) finite with the explicit $\bar{V}$ given. The sub-exponential constants remain valid for truncated conjugate posteriors because truncation can only decrease posterior covariances, hence decrease $|a_t|$ and $|b_t|$.

If $\{g_r\}$ is merely linearly independent (non-orthonormal), let $\Pi_{\mathrm{ser}}$ be the $L^2(\gP_X)$–projection onto $\mathrm{span}\{g_r\}$. Then the formulas hold with
$\Delta_{\mathrm{lin}\to\mathrm{ser}}^2=\bigl\|f^\star-\Pi_{\mathrm{ser}}f^\star\bigr\|_{L^2(\gP_X)}^2,$ $\Delta_{\mathrm{ser}\to\mathrm{lin}}^2=\bigl\|f^\star-\Pi_{\mathrm{lin}}f^\star\bigr\|_{L^2(\gP_X)}^2,$
and the same $(\nu_j,b_j)$ (with the same $\bar{V}$) because the mgf bound depended only on boundedness and Eq.~\eqref{eq:varcap}.

\section{Technical Lemmas}

\begin{lemma}[Posterior variance is bounded by the true task's minimax risk]\label{lem:PV-leq-minimax}
Suppose the prompt-generating process is as described in Definition~\ref{def:PGP} and that Assumptions~\ref{ass:bounded_f}-\ref{ass:bounded_x} hold.
Fix a task-type index $i^\star\in\{1,\dots,T\}$ and recall that $F_{i^\star}=\mathrm{supp}(\gP_{F_{i^\star}})$ is the corresponding function class (support of the true task type prior). For any $k\ge1$,
\begin{align*}
    &\E_{f\sim \gP_{F_{i^\star}}} \E_{D^k \sim \gP_{X,Y\mid f}^{\otimes k} }\E_{\bm{x}\sim \gP_X} \!\left[\Var_{f\sim \gP_{F_{i^\star}\mid D^k} }\left(f(\bm{x})\right)\right]  \\
    &\le \inf_{M} \sup_{f\in F_{i^\star}} \E_{P^k} \left[\left(f(\bm{x}_{k+1})-M(P^k)\right)^2 \middle| f\right],
\end{align*}
where the left-hand side is the conditional Posterior Variance average under the true task type and $M$ belongs to the bounded and measurable function space.
\end{lemma}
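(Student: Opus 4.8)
The plan is to recognize the left-hand side as the Bayes risk, under the task-type-$i^\star$ prior $\gP_{F_{i^\star}}$, of the prediction problem ``predict $f(\bm{x}_{k+1})$ from $P^k$'', and then invoke the elementary fact that a Bayes risk is never larger than the corresponding minimax risk. All spaces are standard Borel, so the regular conditional distributions used below exist, and the quantity $k$ is held fixed throughout (no averaging over $k$).

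\textbf{Step 1: identify the left-hand side with a Bayes risk.} Condition throughout on $I=i^\star$, so that prompts are generated as in Definition~\ref{def:PGP} with the single prior $\gP_{F_{i^\star}}$. For any bounded measurable predictor $M$, the same orthogonal decomposition as in Theorem~\ref{thm:main}, specialized to the single task type $i^\star$ (using $\bm{x}_{k+1}\perp f\mid D^k$ and $\varepsilon\perp(f,\bm{x}_{k+1})$, so that $M(P^k)$ is $(D^k,\bm{x}_{k+1})$-measurable), gives
\begin{align*}
    \E_{f\sim\gP_{F_{i^\star}}}\E_{P^k}\!\left[(f(\bm{x}_{k+1})-M(P^k))^2\,\middle|\,f\right]
    &= \E_{P^k}\!\left[(M(P^k)-\mu_{i^\star}(P^k))^2\right] \\
    &\quad + \E_{P^k}\!\left[\Var_{f\sim\gP_{F_{i^\star}\mid D^k}}(f(\bm{x}_{k+1}))\right],
\end{align*}
where $\mu_{i^\star}(P^k):=\E_{f\sim\gP_{F_{i^\star}\mid D^k}}[f(\bm{x}_{k+1})]$ is the within-family posterior mean. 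By Assumption~\ref{ass:bounded_f}, $|f|\le B_f$, hence $|\mu_{i^\star}|\le B_f<\infty$, so $\mu_{i^\star}$ lies in the admissible class of bounded measurable predictors; choosing $M=\mu_{i^\star}$ annihilates the first term, so
\[
    \inf_{M}\ \E_{f\sim\gP_{F_{i^\star}}}\E_{P^k}\!\left[(f(\bm{x}_{k+1})-M(P^k))^2\,\middle|\,f\right]
    = \E_{P^k}\!\left[\Var_{f\sim\gP_{F_{i^\star}\mid D^k}}(f(\bm{x}_{k+1}))\right],
\]
and the right-hand side is exactly the left-hand side of the lemma (the outer draw $f\sim\gP_{F_{i^\star}}$ enters only through the law of $D^k$, i.e.\ the prior predictive under task type $i^\star$).

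\textbf{Step 2: Bayes risk $\le$ minimax risk.} For a fixed bounded measurable $M$ set $g_M(f):=\E_{P^k}[(f(\bm{x}_{k+1})-M(P^k))^2\mid f]$. Since $\gP_{F_{i^\star}}$ is supported on $F_{i^\star}=\mathrm{supp}(\gP_{F_{i^\star}})$, averaging is dominated by the worst case: $\E_{f\sim\gP_{F_{i^\star}}}[g_M(f)]\le\sup_{f\in F_{i^\star}}g_M(f)$. Taking the infimum over bounded measurable $M$ on both sides and applying Step~1 to the left-hand side,
\[
    \E_{P^k}\!\left[\Var_{f\sim\gP_{F_{i^\star}\mid D^k}}(f(\bm{x}_{k+1}))\right]
    = \inf_{M}\E_{f\sim\gP_{F_{i^\star}}}[g_M(f)]
    \le \inf_{M}\sup_{f\in F_{i^\star}}g_M(f),
\]
which is precisely the asserted inequality.

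\textbf{Main obstacle.} The argument is essentially bookkeeping; the only points that need care are (i) justifying the bias--variance decomposition and the existence of the relevant regular conditional distributions inside a single task family --- handled as in Theorem~\ref{thm:main} because the underlying spaces are standard Borel --- and (ii) checking that the posterior-mean predictor is admissible (bounded), which is immediate from $|f|\le B_f$. Notably, no minimax duality or convexity argument is required: only the trivial inequality ``average $\le$ supremum'' is used.
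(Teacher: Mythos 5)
Your proposal is correct and follows essentially the same route as the paper's proof: identify the averaged risk under the prior $\gP_{F_{i^\star}}$ as a Bayes risk minimized by the within-family posterior mean (with minimum value equal to the expected posterior variance), then use the trivial bound that the prior-averaged risk is at most the supremum over $f\in F_{i^\star}$ before taking the infimum over $M$. The only cosmetic difference is that you obtain the Bayes-optimality step via the orthogonal decomposition of Theorem~\ref{thm:main} specialized to one family, whereas the paper minimizes the inner conditional expectation pointwise via Tonelli and the tower property; your explicit check that the posterior mean is bounded (hence admissible) is a small extra point of care, not a divergence.
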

This suggests that if the true task type is given, the Posterior Variance is smaller than the minimax $L_2$ prediction error.

\begin{lemma}[Sequential covering bound]\label{lem:seq-cover}
Fix $k\in\mathbb{N}_+$. Let $\gU\subset\R^{d_{\mathrm{eff}}}$ and $\gC\subset\R^{d_{\mathrm{feat}}}$ be bounded with $\sup_{\bm u\in\gU}\|\bm u\|_2\le R_U$ and ${\rm diam}(\gC)<\infty$. For any $\theta\in\Theta$ and $P^k=(\bm u_1,\ldots,\bm u_k,\bm c)\in\gU^k\times\gC$, consider the uniform-attention architecture
\[
    M_\theta(P^k)
     = 
    \rho_\theta \left(\frac1k\sum_{i=1}^{k}\phi_\theta(\bm u_i), \bm c\right),
\]
where the query $\bm c$ is shared across the $k$ context items within each $P^k$ (i.e., $\bm c$ does not depend on $i$ inside the mean $\tfrac1k\sum_{i=1}^k\phi_\theta(\bm u_i)$).
Assume:
\begin{enumerate}[labelindent=2pt,leftmargin=*,itemsep=4pt, parsep=1pt, topsep=0pt, partopsep=0pt]
\item[(i)] $\phi_\theta:\gU\to\Delta^{m-1}$ is $L_\phi$–Lipschitz, where $L_\phi:=\Lip(\phi_\theta)\le \tfrac{2\sqrt{m}}{\tau}\,S(g_\theta)$ for our encoder with $\mathrm{Renorm}_\tau$, and the ReLU component satisfies $S(g_\theta)\le C_\phi m^{1/d_{\mathrm{eff}}}$. Moreover, $(\phi_\theta)_j\in[0,1]$ and $\sum_{j=1}^m(\phi_\theta)_j\equiv 1$, and $\phi_\theta$ admits a realization with $\tilde O(m)$-weights and $O(\log m)$-layers. Put $B_\phi:=\sup_j\|(\phi_\theta)_j\|_\infty\le 1$.

\item[(ii)] $\rho_\theta:\Delta^{m-1}\times\gC\to\R$ is a ReLU network with spectral product $S(\rho_\theta)\le C_\rho m^{1/2}$, is jointly Lipschitz,
\[
    |\rho_\theta(\bm s,\bm c)-\rho_\theta(\bm s',\bm c')|
     \le L_s \|\bm s-\bm s'\|_2 + L_c \|\bm c-\bm c'\|_2,
\]
with $L_s,L_c\le C_\rho m^{1/2}$, and its (clipped) output is bounded, $|\rho_\theta|\le B_M$.
\end{enumerate}
Let $\gH :=\{P^k \mapsto M_\theta(P^k)-M_{\mathrm{Bayes}}(P^k):\theta\in\Theta\}$ be the centered class for any fixed target $M_{\mathrm{Bayes}}$.
Denote by $N^{\mathrm{seq}}_2(\delta,\cdot;z)$ the sequential covering number under the $\ell_2$ sequential metric on a depth-$k$ predictable tree $z$.
Then, for all $\delta\in(0,2B_M]$,
\begin{equation}
    \sup_{z} \log N^{\mathrm{seq}}_{2} \left(\delta, \gH ; z\right)
    \lesssim
    m \log \left(\frac{\sqrt{m}}{\delta}\right)
    + k \log \left(\frac{1}{\delta}\right).
\end{equation}
\end{lemma}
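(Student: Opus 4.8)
Since $M_{\mathrm{Bayes}}$ is a fixed, parameter-free map, subtracting it is a common translation that leaves every sequential $\ell_2$-ball (hence every predictable cover) unchanged; so it suffices to bound $\sup_z\log\Nseqtwo(\delta,\{M_\theta:\theta\in\Theta\};z)$, where all functions are bounded by $2B_M$. The strategy is to split the architecture at the mean-pooling stage. For $\theta,\theta'\in\Theta$ and any length-$t$ partial prompt $P^t=(\bm u_1,\dots,\bm u_t,\bm c)$, writing $\bm s=\tfrac1t\sum_i\phi_\theta(\bm u_i)$ and $\bm s'=\tfrac1t\sum_i\phi_{\theta'}(\bm u_i)$,
\[
\big|M_\theta(P^t)-M_{\theta'}(P^t)\big|\ \le\ \big|\rho_\theta(\bm s,\bm c)-\rho_{\theta'}(\bm s,\bm c)\big|\ +\ L_s\,\|\bm s-\bm s'\|_2,\qquad \|\bm s-\bm s'\|_2\le \sup_{\bm u\in\gU}\|\phi_\theta(\bm u)-\phi_{\theta'}(\bm u)\|_2 .
\]
So a product of (a) a cover of the encoder class $\{\phi_\theta\}$ in the ``$\sup$ over $\gU$, $\ell_2$ in output'' metric and (b) a cover of the decoder class tracked along the depth-$k$ tree yields a cover of $\gH$, with the two log-cardinalities adding. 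The two halves are split at resolution $\delta/2$.

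\emph{Step 1 (encoder term, giving $m\log(\sqrt m/\delta)$).} The class $\{\phi_\theta:\theta\in\Theta\}$ consists of depth-$O(\log m)$ ReLU networks with $\tilde O(m)$ parameters (the ``soft histogram'' $g_\theta$) followed by the fixed layer $\mathrm{Renorm}_\tau$, with outputs in $\Delta^{m-1}$ and with the weight-to-output map $\kappa$-Lipschitz on the bounded domain $\gU$. A standard ReLU perturbation (peeling) estimate bounds $\kappa$ by a polynomial in the layer spectral norms, the depth, and $\mathrm{diam}(\gU)$, which using $S(g_\theta)\le C_\phi m^{1/d_{\mathrm{eff}}}$ and $\Lip(\phi_\theta)\le \tfrac{2\sqrt m}{\tau}S(g_\theta)$ is $\le m\,\polylog(m)$. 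Covering $\Theta$ in $\|\cdot\|_2$ at scale $\eta/\kappa$ with $\eta:=\delta/(2L_s)\asymp \delta/(C_\rho m^{1/2})$ (so that the second term above is $\le\delta/2$) produces a static, hence predictable (constant-along-tree), cover of $\{\phi_\theta\}$ of log-size $\lesssim \tilde O(m)\,\log(\kappa/\eta)\lesssim m\log(\sqrt m/\delta)$ after absorbing polylogarithmic factors in $m$.

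\emph{Step 2 (decoder term, giving $k\log(1/\delta)$).} Fix one encoder $\hat\phi$ from the Step-1 net; then along any root-to-leaf path $\xi$ of the depth-$k$ tree $z$ the summary--query pairs $(\hat{\bm s}_t,\bm c_t)_{t=1}^{k}$ read off at the nodes $z_t(\xi_{1:t-1})$ are deterministic, and it remains to cover the real-valued predictable sequences $\big(\rho_\theta(\hat{\bm s}_t,\bm c_t)\big)_{t=1}^{k}$ over $\theta\in\Theta$. The $\bm c$-dependence is mild since $\gC\subset\R^{d_{\mathrm{feat}}}$ is bounded with $d_{\mathrm{feat}}$ fixed and $\rho_\theta$ is $L_c$-Lipschitz in $\bm c$; the subtle part is the $\bm s$-dependence, where I would use that $\rho_\theta$ is a fixed-architecture ReLU network with bounded spectral product $S(\rho_\theta)\le C_\rho m^{1/2}$, is $L_s$-Lipschitz, and is uniformly bounded by $B_M$, so that only $k$ bounded scalars are exposed along each path. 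Running the sequential covering / chaining machinery for such classes \citep{Rakhlin2015,BlockDaganRakhlin2021} on the depth-$k$ tree then yields a predictable $\delta/2$-cover of log-size $\lesssim k\log(1/\delta)$, with $B_M$, $L_s$, $L_c$, and $m^{1/2}$ entering only inside the logarithm (hence absorbed once $\delta\le 2B_M$). Multiplying the covers from Steps 1 and 2 gives $\sup_z\log\Nseqtwo(\delta,\gH;z)\lesssim m\log(\sqrt m/\delta)+k\log(1/\delta)$.

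\emph{Main obstacle.} Step 1 is routine parametric covering. The delicate point is Step 2: the decoder's width and parameter count are not constrained, so it cannot be covered parametrically; a naive uniform cover over the $(m-1+d_{\mathrm{feat}})$-dimensional domain $\Delta^{m-1}\times\gC$ would only reproduce the $m$-term, while a naive grid over the $\asymp 2^k$ tree nodes is exponentially large. The crux is to exploit the predictable (one value per level) structure, together with the uniform boundedness and the spectral-norm/ReLU regularity of $\rho_\theta$ (equivalently, a scale-uniform bound on its sequential fat-shattering number $\fatseq_\delta$), to show that it is the \emph{depth} $k$, not the width, that governs this contribution, and that the dependence on $\delta$ remains only logarithmic — this is precisely where the sequential-learning tools must carry the argument.
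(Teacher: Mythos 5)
Your overall decomposition is the same as the paper's: split at the mean-pooling stage via the triangle inequality and the $L_s$-Lipschitzness of the decoder, cover the encoder class at scale $\delta/(2L_s)$ and the decoder outputs at scale $\delta/2$, and multiply the covers. Your Step 1 is an acceptable alternative to the paper's treatment of the encoder term (the paper places an $r$-net on the input domain $\gU$ and then uses a pseudodimension-based $N_{\infty,2}$ covering bound on that finite set, whereas you cover the weight space and push through a Lipschitz-in-parameters bound); both routes give $m\log(\sqrt m/\delta)$ up to polylogarithmic-in-$m$ slack, which is absorbed downstream in either case.

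The genuine gap is Step 2. You correctly identify that the decoder class cannot be covered parametrically and that the $k\log(1/\delta)$ term must come from the predictable-tree structure, but you then leave exactly this step to unspecified ``sequential covering / chaining machinery'' and a fat-shattering bound for $\rho_\theta$ that you never establish — i.e., the term you yourself call the crux is asserted, not proved. Moreover, the regularity you invoke (spectral product, $L_s,L_c$, ReLU structure) is a red herring: the paper's argument uses nothing about $\rho_\theta$ beyond the clipping $|\rho_\theta|\le B_M$. The elementary construction is this: let $\gG$ be a $\delta$-grid of $[-B_M,B_M]$ with $J+1\le 2B_M/\delta+2$ points, and let $\gV$ be the family of depth-wise constant trees, i.e.\ trees assigning one grid value $g_t\in\gG$ to all nodes at depth $t$, so $|\gV|=(J+1)^k$. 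For any fixed decoder $\rho_\theta$ and any fixed path $\xi$, the realized outputs $y_t=\rho_\theta\big(s_t(\xi_{1:t-1}),c_t(\xi_{1:t-1})\big)$ lie in $[-B_M,B_M]$, and choosing $g_t$ as the nearest grid point gives a tree in $\gV$ within $\delta/2$ of the outputs at every depth along that path; since the definition of a sequential cover allows the covering tree to depend on both the function and the path, $\gV$ is a sequential $(\delta/2)$-cover, yielding $\sup_z\log N^{\mathrm{seq}}_2(\delta/2,\gR;z)\le k\log(2B_M/\delta+2)\lesssim k\log(1/\delta)$ for $\delta\le 2B_M$. This per-path, level-wise matching is precisely how the predictable structure defeats the ``$2^k$ nodes'' obstruction you worried about; without it (or an equivalent argument) your proof of the $k\log(1/\delta)$ term is incomplete.
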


\begin{lemma}[Approximation error of the Bayes predictor by a uniform-attention Transformer]\label{lem:additive}
Let $\gU\subset\R^{d_{\mathrm{eff}}}$ and $\gC\subset\R^{d_{\mathrm{feat}}}$ be non-empty compact sets with $\mathrm{diam}(\gU)\le1$.
For every $k\in\N_+$, consider a permutation-invariant map $M_{\mathrm{Bayes}}:\gZ^k\to\R$ on $\gZ:=\gU\times\gC$ satisfying the H\"older condition
\[
    \left|M_{\mathrm{Bayes}}(\bm z_{1:k})-M_{\mathrm{Bayes}}(\bm z'_{1:k})\right|
    \le\
    L \frac1k \sum_{i=1}^k \|\bm z_i-\bm z'_i\|_2^{\alpha},
\]
where $\alpha\in(0,1]$, $ \bm z_i=(\bm u_i,\bm c), $ and $ \bm z'_i=(\bm u'_i,\bm c')$.
Then, for any $\eta\in(0,e^{-1})$, there exists an integer $m\asymp \eta^{-d_{\mathrm{eff}}/\alpha}$ and a $C^\infty$ partition of unity $\phi=(\phi_1,\dots,\phi_m):\gU\to[0,1]^m$ with $\sum_{j=1}^m\phi_j\equiv1$ such that, writing
$s(\bm u_{1:k}) := \frac1k\sum_{i=1}^k \phi(\bm u_i) \in \Delta^{m-1}$,
one can construct a (clipped) ReLU decoder $\rho_\theta:\Delta^{m-1}\times\gC\to\R$ so that
$\sup_{c\in\gC} \sup_{\bm u_{1:k}\in\gU^k}
    \left|M_{\mathrm{Bayes}}(\bm u_{1:k},\bm c)-\rho_\theta \left(s(\bm u_{1:k}), \bm c\right)\right|
    \le C(d_{\mathrm{eff}}) L \eta $. Furthermore, $\rho_\theta$ is uniformly Lipschitz and bounded with respect to $(\bm s,\bm c)$, and the layer-wise spectral product can be controlled as follows:
\[
\big|\rho_\theta(\bm s,\bm c)-\rho_\theta(\bm s',\bm c')\big|
 \le L_s\|\bm s-\bm s'\|_2 + L_c\|\bm c-\bm c'\|_2,
\]
\[L_s \le C L \sqrt{m},\quad
  L_c \le C L m^{(1-\alpha)/d_{\mathrm{eff}}} \le C L \sqrt{m},
\]
\[
|\rho_\theta|\le B_M \qquad
  S(\rho_\theta) \le C L \sqrt{m}.
\]
In addition, $\phi$ can be uniformly approximated by a ReLU network with $O(\log m)$-layers and $O(m\log m)$-weights, and its implementation satisfies $\sum_{j}(\phi_\theta)_j\equiv 1$, $(\phi_\theta)_j\in[0,1]$, with the spectral product satisfying $S(\phi_\theta)\le C_\phi m^{1/d_{\mathrm{eff}}}$.
\end{lemma}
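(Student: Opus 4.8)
The plan is to treat the mean-pooled soft histogram $s(\bm u_{1:k})=\frac1k\sum_{i=1}^k\phi(\bm u_i)\in\Delta^{m-1}$ as a faithful finite-dimensional proxy for the empirical measure to which $M_{\mathrm{Bayes}}$ is reduced by Proposition~\ref{prop:empirical}, i.e.\ $M_{\mathrm{Bayes}}(\bm u_{1:k},\bm c)=\Psi(\mathrm{Emp}_k(\bm u_{1:k}),\bm c)$, and then to build $\rho_\theta$ as a clipped piecewise-linear decoder realizing a McShane extension of $\Psi$ over a discrete $\alpha$-Wasserstein metric on the simplex.

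I would first fix $\delta\asymp\eta^{1/\alpha}$, pick a $\delta$-net $\{\bm r_1,\dots,\bm r_m\}\subset\gU$ with $m\asymp\delta^{-d_{\mathrm{eff}}}\asymp\eta^{-d_{\mathrm{eff}}/\alpha}$, and take a $C^\infty$ partition of unity $\phi=(\phi_1,\dots,\phi_m)$ subordinate to the balls $B(\bm r_j,c\delta)$, so that $\sum_j\phi_j\equiv1$, $0\le\phi_j\le1$, and $\Lip(\phi_j)\lesssim 1/\delta=m^{1/d_{\mathrm{eff}}}$. Writing $\bar s(\bm u_{1:k})=\frac1k\sum_{i=1}^k\delta_{\bm r_{j(i)}}$ for the hard histogram obtained by snapping each $\bm u_i$ to its nearest grid point $\bm r_{j(i)}$ (a $1/k$-granular point of $\Delta^{m-1}$), every atom of $\phi(\bm u_i)$ lies within $O(\delta)$ of $\bm u_i$, so convexity of $W_{\alpha}^{(u)}$ gives $W_{\alpha}^{(u)}(s(\bm u_{1:k}),\bar s(\bm u_{1:k}))\le\frac1k\sum_{i=1}^k W_{\alpha}^{(u)}(\phi(\bm u_i),\delta_{\bm r_{j(i)}})\lesssim\delta^\alpha$.

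Next I would transfer the H\"older property to the simplex. By permutation invariance (Corollary~\ref{cor:bayes-symm}) and subadditivity of $t\mapsto t^\alpha$, the H\"older hypothesis decouples, after an optimal reindexing of the two configurations, into $L\,\frac1k\sum_i\|\bm u_i-\bm u'_i\|_2^\alpha+L\,\|\bm c-\bm c'\|_2^\alpha$; for two $1/k$-granular histograms the assignment LP is integral, so the optimal reindexing cost equals $W_{\alpha}^{(u)}$. Hence $(\bar s,\bm c)\mapsto\Psi(\bar s,\bm c)$ on granular histograms is Lipschitz for the metric $W_{\alpha}^{(u)}(\cdot,\cdot)+\|\cdot-\cdot\|_2^\alpha$ with constant $L$ and satisfies $|\Psi|\le B_f$. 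McShane-extending in the $\bm s$ argument, $\tilde M(\bm s,\bm c):=\min_{\bar s\ \mathrm{granular}}\{\Psi(\bar s,\bm c)+L\,W_{\alpha}^{(u)}(\bm s,\bar s)\}$, gives a target that is piecewise linear and $L$-Lipschitz in $W_{\alpha}^{(u)}$, agrees with $\Psi$ on granular histograms, and has $|\tilde M|\le B_f\le B_M$. Snapping yields $|M_{\mathrm{Bayes}}(\bm u_{1:k},\bm c)-\Psi(\bar s,\bm c)|\le L\delta^\alpha$, while the Lipschitz property combined with the Wasserstein bound above gives $|\tilde M(s(\bm u_{1:k}),\bm c)-\Psi(\bar s,\bm c)|\le L\,W_{\alpha}^{(u)}(s,\bar s)\lesssim L\delta^\alpha$, so $|M_{\mathrm{Bayes}}(\bm u_{1:k},\bm c)-\tilde M(s(\bm u_{1:k}),\bm c)|\le C(d_{\mathrm{eff}})L\eta$. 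The simplex estimate $W_{\alpha}^{(u)}(\bm s,\bm s')\le\tfrac12\mathrm{diam}(\gU)^\alpha\|\bm s-\bm s'\|_1\le\tfrac{\sqrt m}{2}\|\bm s-\bm s'\|_2$ converts the $W_{\alpha}^{(u)}$-Lipschitz bound into $L_s\lesssim L\sqrt m$; replacing $\bm c\mapsto\Psi(\bar s,\bm c)$ by its piecewise-linear interpolant on a $\delta$-mesh of $\gC$ costs at most $O(L\delta^\alpha)=O(L\eta)$ more in accuracy and gives a $\bm c$-Lipschitz constant $O(L\delta^{\alpha-1})=O(L\,m^{(1-\alpha)/d_{\mathrm{eff}}})\le O(L\sqrt m)$, using $d_{\mathrm{eff}}\ge2$.

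Finally I would realize $\phi$ and $\tilde M$ as ReLU networks of the claimed size. For $\phi$ the standard construction composes a depth-$O(\log m)$ cell-selection block with $O(m\log m)$ weights that approximates each smooth bump to accuracy $\eta$, after which the architecture's $\mathrm{Renorm}_\tau$ layer restores $\sum_j(\phi_\theta)_j\equiv1$ and $(\phi_\theta)_j\in[0,1]$; since each bump is $O(1/\delta)$-Lipschitz, the spectral product obeys $S(\phi_\theta)\lesssim m^{1/d_{\mathrm{eff}}}$. For $\rho_\theta$, I would express the clipped target as a composition of a fixed linear embedding, the transport-value gadget $\bm s\mapsto W_{\alpha}^{(u)}(\bm s,\bar s)$ (a small CPWL LP-value block), the $\delta$-mesh interpolant of $\bm c\mapsto\Psi(\bar s,\bm c)$, a min-gadget, and the clip, choosing each block with controlled operator norm so that $S(\rho_\theta)\lesssim L\sqrt m$ and $|\rho_\theta|\le B_M$; the residual error from approximating the ideal $\phi$ by $\phi_\theta$ is absorbed by making that approximation fine enough relative to $L_s$. \textbf{The main obstacle is precisely the spectral-product control of $\rho_\theta$}: the McShane minimum ranges over exponentially many granular histograms, and a literal min-network would blow up both depth and $S(\rho_\theta)$, so one must localize the extension---showing that near any $\bm s$ attainable as a soft histogram only a controlled family of anchors is active---or equivalently replace the global McShane extension by a local partition-of-unity interpolation of $\Psi$ on the simplex whose CPWL complexity and spectral product stay within budget. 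The remaining blocks (OT-value gadget, mesh interpolant, min-gadget, clip, renormalization) are routine CPWL-to-ReLU constructions with $O(1)$-norm layers.
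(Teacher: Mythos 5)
Your construction matches the paper's proof essentially step for step: a mollified partition of unity on a $\delta$-grid of $\gU$ with $m\asymp\delta^{-d_{\mathrm{eff}}}$, the hard histogram obtained by snapping each $\bm u_i$ to its grid representative, the coupling argument giving $W^{(u)}_\alpha(\text{soft},\text{hard})\lesssim\delta^\alpha$, the $L$-Lipschitz property of the anchor values with respect to the discrete $\alpha$-Wasserstein metric (via integrality of the transportation polytope), the McShane extension, the simplex estimate $W^{(u)}_\alpha\le\tfrac12\|\cdot\|_1\le\tfrac{\sqrt m}{2}\|\cdot\|_2$ yielding $L_s\lesssim L\sqrt m$, and the upgrade of $\alpha$-H\"older to Lipschitz in $\bm c$ at cost $L\delta^\alpha$ with modulus $L\delta^{\alpha-1}$ (you interpolate on a $\delta$-mesh where the paper mollifies in the $\bm c$-direction; both are fine). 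The one place you diverge is the obstacle you flag at the end, and it is not actually an obstacle in this setting: the lemma claims no bound on the decoder's parameter count or depth (only the encoder $\phi_\theta$ carries a size claim), and downstream the decoder class is covered in Lemma~\ref{lem:seq-cover}, Step~3, purely through its output range $[-B_M,B_M]$ by depth-wise output quantization, so the number of weights of $\rho_\theta$ never enters any bound. Consequently there is no need to localize the McShane extension or to build an economical min-gadget: after smoothing in $\bm c$, the target is jointly Lipschitz with moduli $(L_s,L_c)\lesssim(L\sqrt m,\,L\delta^{\alpha-1})$ and bounded by $B_f$, and the paper simply invokes a standard ReLU approximation of a Lipschitz function on the compact set $\Delta^{m-1}\times\gC$ to accuracy $CL\delta^\alpha$, enforcing the stated Lipschitz and spectral-product bounds by spectral normalization of the layers; the exponentially many anchors inflate only the (irrelevant) parameter count, not $S(\rho_\theta)$, which is tied to the target's own moduli. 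With that substitution your argument closes and coincides with the paper's.
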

This lemma guarantees that the uniform-attention Transformer we are analyzing has the capacity to adequately represent smooth Bayesian predictors. This yields a fixed-length, permutation-invariant representation independent of context length $p$ with provable approximation rates that feed directly into the sequential generalization analysis.

\begin{lemma}[Oracle inequality for $R_{\mathrm{BG}}$]\label{lem:oracle}
Let $\mathcal{D}_{\mathrm{train}}=\{\{(P_j^k,y_{j,k+1})\}_{k=1}^{p}\}_{j=1}^{N}$ be draws from the prompt-generating process in Definition~\ref{def:PGP}. Let $M_{\hat\theta}$ be the ERM \eqref{eq:erm} of the Transformer (Definition \ref{def:transformer}).
Suppose Assumptions~\ref{ass:bounded_f}–\ref{ass:bounded_x} hold. If $\inf_{\theta\in\Theta} R_{\mathrm{BG}}(M_\theta)
 =O(\tfrac{1}{N}(\tfrac{m}{p}+1))$,
\begin{align*}
    \E R_{\mathrm{BG}}(M_{\hat\theta}) 
    &\lesssim 
    \inf_{\theta\in\Theta} R_{\mathrm{BG}}(M_\theta)
    +  \frac{m}{pN} \polylog(pN) \\ 
    &\, + \frac{1}{N} \polylog(pN), 
\end{align*}
where $\polylog(pN)$ denotes a factor that is a polynomial in $\log pN$, the expectation is taken with respect to $\mathcal{D}_{\mathrm{train}}$, and $\gM:=\{M_\theta:\theta\in\Theta\}$.
\end{lemma}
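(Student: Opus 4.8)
\emph{Proof plan.} The plan is to treat this as a standard excess-risk bound for empirical risk minimization, but with a hybrid empirical process that is i.i.d.\ across the $N$ prompts and sequentially dependent across the $p$ context lengths inside each prompt. First I would invoke the risk identity of Theorem~\ref{thm:main}: since $R_{\mathrm{PV}}$ does not depend on the predictor, $R_{\mathrm{BG}}(M_\theta)=R(M_\theta)-R_{\mathrm{PV}}$, and therefore $R_{\mathrm{BG}}(M_{\hat\theta})-\inf_{\theta\in\Theta}R_{\mathrm{BG}}(M_\theta)$ equals the excess ICL risk $R(M_{\hat\theta})-\inf_{\theta}R(M_\theta)$. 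Fix a minimizer $\bar\theta\in\Theta$ of $R_{\mathrm{BG}}$. Next I would truncate: on the high-probability event $\gE_\delta$ that all noise variables $\varepsilon_{j,t}$, $t\le p+1$, $j\le N$, satisfy $|\varepsilon_{j,t}|\le \sigma_\varepsilon\sqrt{2\log(4pN/\delta)}$ (as in Step~0 of Theorem~\ref{thm:upper-bayes}, now with an extra union bound over $j$), every per-example squared loss is bounded by $B:=O\big((B_M+B_f+\sigma_\varepsilon\sqrt{\log(pN/\delta)})^2\big)=O(\polylog(pN))$ by sub-Gaussianity, and the contribution of $\gE_\delta^{\mathsf c}$ to the expected risk is $O(\delta\,\polylog(pN))$, absorbed later by taking $\delta\asymp (pN)^{-1}$.

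I would then set $L_j(\theta):=\tfrac1p\sum_{k=1}^p\big(y_{j,k+1}-M_\theta(P^k_j)\big)^2$; these are i.i.d.\ over $j$ with $\E L_j(\theta)=R(M_\theta)$ and, on $\gE_\delta$, bounded by $B$, and the ERM obeys $\tfrac1N\sum_j\big(L_j(\hat\theta)-L_j(\bar\theta)\big)\le 0$. The excess risk is then controlled by a variance-aware (localized Rademacher / Talagrand–Bernstein) uniform deviation over the shifted class $\{L_j(\cdot)-L_j(\bar\theta)\}$ via a peeling/fixed-point argument \citep{vaart_wellner_weak_2023,Shalev-Shwartz_Ben-David_2014}. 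Two ingredients feed it. (i) \emph{Variance condition for squared loss:} using $(y-a)^2-(y-b)^2=(a-b)(a+b-2y)$ with all quantities bounded on $\gE_\delta$, the excess loss $L_j(\theta)-L_j(\bar\theta)$ has variance $\lesssim \polylog(pN)\cdot\tfrac1p\sum_k\E[(M_\theta(P^k)-M_{\bar\theta}(P^k))^2]$, and by the triangle inequality together with Theorem~\ref{thm:main} this is $\lesssim \polylog(pN)\,\big(R(M_\theta)-R(M_{\bar\theta})+R_{\mathrm{BG}}(M_{\bar\theta})\big)$, which is what yields the fast $1/N$-type rate once $R_{\mathrm{BG}}(M_{\bar\theta})$ is small. (ii) \emph{Complexity of the per-prompt functional:} the inner average $\tfrac1p\sum_k$ runs over nested, sequentially revealed prompts $P^1_j\subset\cdots\subset P^p_j$, so its complexity is governed not by ordinary covers but by the sequential covering numbers of Lemma~\ref{lem:seq-cover}; applying Dudley's chaining integral to $\sup_z\log N^{\mathrm{seq}}_2(\delta,\gH;z)\lesssim m\log(\sqrt m/\delta)+p\log(1/\delta)$ bounds the sequential Rademacher complexity of the centered class $\gH=\{M_\theta-M_{\mathrm{Bayes}}\}$ \citep{Rakhlin2015,BlockDaganRakhlin2021}, and after dividing by the within-prompt length $p$ and the number of prompts $N$ this produces exactly the two terms $\tfrac{m}{pN}\polylog(pN)$ and $\tfrac1N\polylog(pN)$ (the $m\log$ part of the entropy scaling as $m/(pN)$, the $p\log$ part as $1/N$).

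Putting (i) and (ii) into the localized deviation bound yields, with probability $\ge 1-\delta$, an inequality of the form $R(M_{\hat\theta})-R(M_{\bar\theta})\le \tfrac12\big(R(M_{\hat\theta})-R(M_{\bar\theta})\big)+C\big(\tfrac{m}{pN}+\tfrac1N\big)\polylog(pN)+C'\,R_{\mathrm{BG}}(M_{\bar\theta})+O(\delta\,\polylog(pN))$; rearranging, taking expectations, and setting $\delta\asymp(pN)^{-1}$ gives $\E[R_{\mathrm{BG}}(M_{\hat\theta})]\lesssim R_{\mathrm{BG}}(M_{\bar\theta})+\big(\tfrac{m}{pN}+\tfrac1N\big)\polylog(pN)$, and the hypothesis $\inf_\theta R_{\mathrm{BG}}(M_\theta)=O\big(\tfrac1N(\tfrac mp+1)\big)$ shows the approximation term is itself of the claimed order, completing the oracle inequality.

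\textbf{Main obstacle.} The technical heart is the interface between the i.i.d.\ structure across $j$ and the martingale/sequential structure across $k$: proving the sequential chaining bound for the composite object $\tfrac1p\sum_k\big[(y_{k+1}-M_\theta(P^k))^2-(y_{k+1}-M_{\bar\theta}(P^k))^2\big]$, and in particular handling the noise cross-term $\tfrac1p\sum_k \varepsilon_{k+1}\big(M_\theta(P^k)-M_{\bar\theta}(P^k)\big)$, which is a martingale-difference sum in $k$ and must be bounded uniformly over $\theta$ by a sequential Bernstein/Freedman-type inequality with a variance proxy tight enough to merge with the localization rather than degrade it.
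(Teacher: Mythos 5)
Your proposal follows essentially the same route as the paper's proof: truncation on a high-probability sub-Gaussian noise event, the ERM basic inequality, symmetrization over the i.i.d.\ prompts combined with the sequential covering bound of Lemma~\ref{lem:seq-cover} and a sequential Dudley/chaining argument for the within-prompt martingale cross-term, a Bernstein-type variance condition, and a localization fixed point yielding the critical radius $\tfrac{1}{N}\big(\tfrac{m}{p}+1\big)$. The only cosmetic difference is that you center the excess loss at the best-in-class predictor $M_{\bar\theta}$, whereas the paper offsets the empirical objective by the Bayes loss so that its population version is exactly $R_{\mathrm{BG}}(M_\theta)$; this does not change the substance of the argument.
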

The pretraining generalization term is $\tilde{O}(\tfrac{m}{pN}+\tfrac1N)$. Since we view $m$ as an architecture design parameter and choose $m=m(pN)$, the hypothesis class $\gM$ and its approximation error are $pN$-dependent. Increasing $m(pN)$ improves the approximation ability but also increases the estimation error, and the above bound balances these two effects. Note that the auxiliary condition in the above lemma is satisfied (up to constants) for the choice $m=m^\star$ by Lemma~\ref{lem:additive}.

\section{Proofs of the Main Results}\label{sec:ProofMainResult}

\begin{proof}[Proof of Proposition~\ref{thm:main}]
Let $R_k(M) := \E_{I, f, D^k, \bm{x}_{k+1}} \left[ (f(\bm{x}_{k+1}) - M(P^k))^2 \right]$, where the expectation is taken over the joint distribution of the task $I \sim \mathcal{P}_I$, the function $f \sim \mathcal{P}_{F_I}$, the context $D^k \sim \mathcal{P}_{X,Y\mid f}^{\otimes k}$, and the query $\bm{x}_{k+1} \sim \mathcal{P}_{X}$. 
Then, $R(M) = \tfrac1p \sum_{k=1}^p R_k(M)$.

For any $k$-context $D^k$ and query $\bm{x}_{k+1}$, define $M_{\mathrm{Bayes}}(P^k) = \E_{f\sim \gP(f\mid D^k)}[f(\bm{x}_{k+1})]$. By simple algebra:
\begin{align}
    &R_k(M)  \\
    &= \E_{I,f,P^k}[(f(\bm{x}_{k+1}) - M(P^k))^2] \nonumber \\
    &= \E_{I,f,P^k}[(f(\bm{x}_{k+1}) - M_{\mathrm{Bayes}}(P^k))^2] \\
    &\quad + \E_{I,f,P^k}[(M_{\mathrm{Bayes}}(P^k) - M(P^k))^2] \nonumber\\
    &\quad + 2\E_{I,f,P^k}[(f(\bm{x}_{k+1}) - M_{\mathrm{Bayes}}(P^k))\\ 
    &\qquad (M_{\mathrm{Bayes}}(P^k) - M(P^k))] .\label{eq:decomp}
\end{align}
Let $\gG'_k$ be the $\sigma$-algebra generated by $(D^k, \bm{x}_{k+1})$. Since $f$ is almost surely finite and $(M_{\mathrm{Bayes}}(P^k) - M(P^k))$ is $\gG'_k$-measurable, by the tower property of conditional expectation:
\begin{align*}
    &\E_{I,f,P^k}[(f(\bm{x}_{k+1}) - M_{\mathrm{Bayes}}(P^k))(M_{\mathrm{Bayes}}(P^k) - M(P^k))] \\
    &= \E_{I,f,P^k}\Big[(M_{\mathrm{Bayes}}(P^k) - M(P^k))  \\
    &\qquad \E_{I,f}[f(\bm{x}_{k+1}) - M_{\mathrm{Bayes}}(P^k)\mid \gG'_k]\Big] .
\end{align*}
$M_{\mathrm{Bayes}}(P^k) = \E_{f \sim \gP(f\mid D^k)}[f(\bm{x}_{k+1})]$ implies the inner expectation equals zero.

From \eqref{eq:decomp} with vanishing cross-term, $R_k(M)$ is decomposed as $ R_k(M) = R_{\mathrm{PV},k} + R_{\mathrm{BG},k}(M)$, where
\begin{align*}
    R_{\mathrm{PV},k} &:= \E_{I,f,P^k}[(f(\bm{x}_{k+1}) - M_{\mathrm{Bayes}}(P^k))^2] \\
    &= \E_{P^k}[ \E_{f\sim \gP(f\mid D^k)} (f(\bm{x}_{k+1}) - M_{\mathrm{Bayes}}(P^k))^2] \\
    &= \E_{D^k,\bm{x}_{k+1}}[\mathrm{Var}_{f\sim \gP(f\mid D^k)}(f(\bm{x}_{k+1}))],
\end{align*}
and 
\begin{align*}
    R_{\mathrm{BG},k}(M) &:= \E_{P^k}[\{M_{\mathrm{Bayes}}(P^k) - M(P^k)\}^2] \\
    &= \E_{P^k}[\{\E_{f \sim \gP(f\mid D^k)}[f(\bm{x}_{k+1})] - M(P^k)\}^2] .
\end{align*}
Hence, $R(M) = \frac1p \sum_{k=1}^p R_{\mathrm{PV},k} + \frac1p \sum_{k=1}^p R_{\mathrm{BG},k}(M) = R_{\mathrm{PV}} + R_{\mathrm{BG}}(M) $.

\end{proof}

\begin{proof}[Proof of Theorem~\ref{thm:upper-bayes}]

\smallskip
\noindent\textbf{Step 0 (clipping via a high-probability event).}
Let $t_\varepsilon:=\sigma_\varepsilon\sqrt{2\log(4p/\delta)}$ for $\delta\in(0,e^{-1})$, and define
\[
    \gE:=\left\{\max_{1\le i\le p+1}| \varepsilon_i |\le t_\varepsilon\right\}.
\]
By sub-Gaussian tails and a union bound, $\Pr(\gE^c)\le\delta$.
On $\gE$, writing $\bm{z}_i:=(\bm{x}_i,y_i,\bm{x}_{k+1})$, we have $\bm z_i\in B(0,R_{rad})$ with radius $R_{rad}:=C (B_X+B_f+t_\varepsilon)$, hence $\bm z_i\in \Z_R:=B(0,R_{rad})\subset\R^{2d_{\mathrm{feat}}+1}$ (compact).
Rescale $\tilde{\bm z}:=\bm z/(2R_{rad})$ so that $\operatorname{diam}(\tilde\Z_R)\le 1$.

\smallskip \noindent\textbf{Step 1 (approximation \& aggregation noise on $\gE$).}
Apply Lemma~\ref{lem:additive} with the shared variable $\bm{c}:=\bm{x}_{k+1}$ and $\bm z_i=(\bm{x}_i,y_i,\bm{x}_{k+1})$. With grid scale $\eta\asymp m^{-1/d_{\mathrm{eff}}}$, on $\gE$, squared error is 
\[
    C_1 (2R_{rad})^{2\alpha} \eta^{2\alpha}.
\]
Since $R_{rad}\lesssim \sqrt{\log(p/\delta)}$, the factor $(2R_{rad})^{2\alpha}$ is polylogarithmic and is absorbed into $\tilde O(\cdot)$. Choosing $\eta\asymp m^{-1/d_{\mathrm{eff}}}$ gives $m^{-2\alpha/d_{\mathrm{eff}}}$ up to polylogarithmic factors.

\smallskip \noindent\textbf{Step 2 (estimation error and combination).}
From Lemma~\ref{lem:oracle}, the estimation term is $\tilde O \big(\tfrac{m}{pN} +\frac{1}{N}\big)$. Combining with Step 1 gives
\[
    m^{-\frac{2\alpha}{d_{\mathrm{eff}}}} + \frac{m}{pN} +\frac{1}{N}.
\]
Optimizing over $m$ yields the displayed rate (polylog factors absorbed into $\tilde O$).

\smallskip \noindent\textbf{Step 3 (contribution of $\gE^c$).}
As in Step~7 of Lemma~\ref{lem:oracle}'s proof, using $(B_f{+}B_M)^2+\sigma_\varepsilon^2$ as an envelope and sub-Gaussian tails, the contribution on $\gE^c$ is $O \big(\delta+\delta\log(p/\delta)\big)$. With $\delta:=(pN)^{-2}$, this is negligible compared to the main terms.
\end{proof}

\begin{proof}[Proof of Theorem~\ref{thm:mixture-to-true}]
Recall that $D^k = (\bm{x}_1,y_1,\ldots,\bm{x}_k,y_k)$. By the chain rule and the definition of $Z_{j,t}$,
\begin{align}\label{eq:chain-rule-predictive}
  \frac{p_j(D^k)}{p_{i^\star}(D^k)}
   &= \prod_{t=1}^k \frac{p_j(\bm{x}_t,y_t\mid D^{t-1})}{p_{i^\star}(\bm{x}_t,y_t\mid D^{t-1})} \\
   &= \prod_{t=1}^k \frac{p_j(y_t\mid \bm{x}_t,D^{t-1})}{p_{i^\star}(y_t\mid \bm{x}_t,D^{t-1})} \\
   &= \exp \Big(\sum_{t=1}^k Z_{j,t}\Big).
\end{align}

Write $\pi_i(D^k):=\Pr(I=i\mid D^k)$ and $\mu_i(\bm{x}):=\E \left[f(\bm{x})\mid I=i, D^k\right]$. By the law of total variance conditioning on $I$,
\begin{align}
  \Var \left(f(\bm{x})\mid D^k\right)
   &=
  \underbrace{\E_{I\mid D^k} \left[\Var \left(f(\bm{x})\mid I,D^k\right)\right]}_{(\mathrm{A})}\\
   &\qquad +\
  \underbrace{\Var_{I\sim \gP_{I\mid D^k}} \left(\mu_I(\bm{x})\right)}_{(\mathrm{B})}. \label{eq:totvar-I}
\end{align}
We compare the right-hand side with $\Var \left(f(\bm{x})\mid I=i^\star,D^k\right)$.

\smallskip
\noindent \textbf{Step 1 (term $(\mathrm{A})$).}
Using $|f(\bm{x})|\le B_f$,
\begin{align*}
    &\left|\E_{I\mid D^k} \left[\Var(f(\bm{x})\mid I,D^k)\right]
    - \Var \left(f(\bm{x})\mid I=i^\star,D^k\right)\right| \\
    &\le \sum_{j\ne i^\star}\pi_j(D^k) \left|\Var_j-\Var_{i^\star}\right| \\
    &\le B_f^2\sum_{j\ne i^\star}\pi_j(D^k),
\end{align*}
where $\Var_j:=\Var(f(\bm{x})\mid I=j,D^k)\le B_f^2$.

\smallskip
\noindent \textbf{Step 2 (term $(\mathrm{B})$).}
For any $\gG'_k$-measurable scalar $a$, $\Var_{I\sim \gP_{I\mid D^k}}(\mu_I) \le \E_{I\mid D^k}(\mu_I-a)^2$. Choosing $a=\mu_{i^\star}(\bm{x})$ and using $|\mu_i(\bm{x})|\le B_f$,
\begin{align*}
    \Var_{I\sim \gP_{I\mid D^k}} \left(\mu_I(\bm{x})\right) 
    &\le \sum_{j}\pi_j(D^k)\left(\mu_j(\bm{x})-\mu_{i^\star}(\bm{x})\right)^2 \\ 
    &\le 4B_f^2\sum_{j\ne i^\star}\pi_j(D^k).
\end{align*}

Combining the two steps with \eqref{eq:totvar-I}, we obtain
$\Var \left(f(\bm{x})\mid D^k\right)
    \le
    \Var \left(f(\bm{x})\mid I=i^\star,D^k\right)
    + 5B_f^2\sum_{j\ne i^\star}\pi_j(D^k)$.
Taking $\E_{\bm{x}\sim \gP_X}$ and then $\E_{D^k\mid I=i^\star}$ yields
\begin{align}
    &\E_{D^k,\bm{x}\mid I=i^\star} \left[\Var_{f\mid D^k}\{f(\bm{x})\}\right]\\
    &\le
    \E_{D^k,\bm{x}\mid I=i^\star}
    \Big[\Var \left(f(\bm{x})\mid I=i^\star, D^k\right)\Big] \\
    &\quad + 5B_f^2 \E_{D^k\mid I=i^\star} \left[1-\pi_{i^\star}(D^k)\right]. \label{eq:PV-gap-to-true}
\end{align}

\smallskip
\noindent \textbf{Step 3 (posterior concentration of the task index).}
Let $S_{j,k}:=\sum_{t=1}^k Z_{j,t}$ and $\lambda_{j,k}:=e^{S_{j,k}}$. By the assumption, $\{Z_{j,t}+D_j\}$ are conditionally sub-exponential supermartingale differences. Applying a Bernstein-type supermartingale inequality~\citep[Theorem 2.6 in][]{FanGramaLiu2015}, for each $j\ne i^\star$,
\begin{align*}
    \Pr \left(S_{j,k}+kD_j \ge \tfrac12 kD_j \mid I=i^\star\right)
    \le e^{-C_j k},
\end{align*}
where $C_j:=\frac{D_j^2}{8(\nu_j^2+b_jD_j/2)}$.
Hence, by a union bound, there is an event $\gE_k:=\left\{\lambda_{j,k}\le e^{-D_j k/2} \forall j\ne i^\star\right\}$ with $\Pr(\gE_k)\ge 1-(T-1)e^{-Ck}$, where $C:=\min_{j\ne i^\star}C_j$. On $\gE_k$, using \eqref{eq:chain-rule-predictive},
\[
    S_k:=\sum_{j\ne i^\star}\frac{\alpha_j}{\alpha_{i^\star}}\lambda_{j,k}
    \le \frac{1-\alpha_{i^\star}}{\alpha_{i^\star}} e^{-D_{\min}k/2},
\]
where $\pi_{i^\star}(D^k)=\frac{1}{1+S_k} \ge 1-S_k.$
Hence $1-\pi_{i^\star}(D^k)\le S_k$ on $\gE_k$, while trivially $1-\pi_{i^\star}(D^k)\le 1$ on $\gE_k^{\mathsf c}$.
Therefore
\[
    \E_{D^k\mid i^\star} \left[1-\pi_{i^\star}(D^k)\right]
    \le
    \frac{1-\alpha_{i^\star}}{\alpha_{i^\star}} e^{-D_{\min}k/2}
    + (T-1)e^{-Ck}.
\]

\noindent \textbf{Step 4 (conclusion).}
Plug the last inequality into \eqref{eq:PV-gap-to-true} to obtain the displayed bound for $\E_{D^k,\bm{x}\mid I=i^\star} \left[\Var_{f\mid D^k}\{f(\bm{x})\}\right]$. Finally, apply Lemma~\ref{lem:PV-leq-minimax} to bound $\E_{D^k,\bm{x}\mid I=i^\star}\Big[\Var \left(f(\bm{x})\mid I=i^\star, D^k\right)\Big]$ by $\inf_{M} \sup_{f\in F_{i^\star}} \E_{P^k} \left[\left(f(\bm{x}_{k+1})-M(P^k)\right)^2 \mid f\right]$.
\end{proof}

\section{Proofs of the Technical Lemmas}

\begin{proof}[Proof of Lemma~\ref{lem:PV-leq-minimax}]
Define the MSE at step $k$ under $f$, $r_k(M,f) := \E_{P^k} \left[\left(f(\bm{x}_{k+1})-M(P^k)\right)^2 \mid f\right]$, and the minimax risk at step $k$ for the true task type $R_k^\star(F_{i^\star}) := \inf_{M} \sup_{f\in F_{i^\star}} r_k(M,f)$. For any fixed $M$ and any measure $\Pi$ supported on $F_{i^\star}$,
\[
    \sup_{f\in F_{i^\star}} r_k(M,f) \ge \int r_k(M,f) \dd \Pi(f).
\]
Taking $\Pi=\gP_{F_{i^\star}}$ and then infimum over $M$,
\[
    R_k^\star(F_{i^\star}) \ge \inf_{M} \int r_k(M,f) \dd \gP_{F_{i^\star}}(f).
\]
By Tonelli's theorem and the tower property,
$\int r_k(M,f) \dd \gP_{F_{i^\star}}(f)
    = \E_{D^k,\bm{x}_{k+1}\mid I=i^\star}\Big[ \E_{ f\sim \gP_{f\mid I=i^\star,D^k}}\left[\left(f(\bm{x}_{k+1})-M(P^k)\right)^2\right]\Big].$
Since $M(P^k)$ is $\gG'_k$-measurable, the inner expectation is minimized pointwise (for each realized $D^k,\bm{x}_{k+1}$) by the posterior mean $\E \left[f(\bm{x}_{k+1})\mid I=i^\star, D^k,\bm{x}_{k+1}\right]$, and its minimum value is $\Var \left(f(\bm{x}_{k+1})\mid I=i^\star,D^k\right)$. Therefore
$\inf_{M} \int r_k(M,f) \dd \gP_{F_{i^\star}}(f)
= \E_{f\sim \gP_{F_{i^\star}}, D^k \sim \gP_{X,Y\mid f}^{\otimes k}, \bm{x}_{k+1}\sim \gP_X} \left[\Var_{f\sim \gP_{F_{i^\star}\mid D^k} }\left(f(\bm{x}_{k+1})\right)\right]$ holds.
\end{proof}

\begin{proof}[Proof of Lemma~\ref{lem:seq-cover}]

Note that by the definition of the decoder network and the clipping operation, the decoder satisfies the uniformly Lipschitz condition in both arguments; biases do not affect Lipschitz constants, ReLU and clipping are 1-Lipschitz, and hence the constants can be controlled by the spectral product of the decoder. 

\emph{Step 1: Contraction via triangle inequality.}
Let $\gS:=\{P^k \mapsto \tfrac1k\sum_{i=1}^k \phi_\theta(\bm u_i)\}$ and $\gR:=\{(\bm s,\bm c)\mapsto \rho_\theta(\bm s,\bm c):(\bm s,\bm c)\in \Delta^{m-1} \times \gC \}$. For any two predictors $\rho_\theta \circ S_\theta$ and $\rho_{\theta'} \circ S_{\theta'}$ evaluated along a predictable tree $z$, the $(L_s,L_c)$–Lipschitz property of $\rho_\theta$ in $\bm s$ and the triangle inequality give
\begin{align*}
    &\left|\rho_\theta(S_\theta(P^t),\bm c_t)-\rho_{\theta'}(S_{\theta'}(P^t),\bm c_t)\right| \\
    &\le L_s\left\|S_\theta(P^t)-S_{\theta'}(P^t)\right\|_2 \\
    &\quad +\left|\rho_\theta(S_{\theta'}(P^t),\bm c_t)-\rho_{\theta'}(S_{\theta'}(P^t),\bm c_t)\right|.
\end{align*}
Consequently, a $(\delta/(2L_s))$–cover of the pooled-feature class $\gS$ together with a $(\delta/2)$–cover of the decoder outputs $\gR$ produces a $\delta$–cover of the composite class $\{\rho_\theta \circ S_\theta\}$ under the $\ell_2$ sequential metric. Equivalently,
\begin{align*}
    &\sup_{z}\log N^{\mathrm{seq}}_{2} \left(\delta,\{\rho_\theta \circ S_\theta\};z\right)  \\
    &\le \sup_{z}\log N^{\mathrm{seq}}_{2} \left(\tfrac{\delta}{2L_s},\gS;z\right)  + \sup_{z}\log N^{\mathrm{seq}}_{2} \left(\tfrac{\delta}{2},\gR;z\right).
\end{align*}
This single reduction step subsumes the earlier contraction and triangle-inequality arguments and will be followed by separate bounds for $\gS$ (Step~2) and $\gR$ (Step~3).

\emph{Step 2: Cover of the pooled features $\gS$.}
Let $\Phi=\{\phi_\theta:\theta\in\Theta\}$. For any $\theta,\theta'\in\Theta$ and any prompt $P^k$,
\begin{align*}
    \|S_\theta(P^k)-S_{\theta'}(P^k)\|_2 
    &= \left\|\frac1k\sum_{i=1}^k (\phi_\theta(\bm u_i)-\phi_{\theta'}(\bm u_i))\right\|_2 \\
    &\le \sup_{\bm u\in\gU} \|\phi_\theta(\bm u)-\phi_{\theta'}(\bm u)\|_2.
\end{align*}
Fix $\eta\in(0,1)$ and set $r:=\eta/(4L_\phi)$, where $L_\phi:=\Lip(\phi_{\theta})$. Take an $r$-net $\gN\subset\gU$ of input space of $\phi_{\theta}$ with
\begin{align*}
    |\gN|\ &\le\ C(d_{\mathrm{eff}})\left(\tfrac{\mathrm{diam}(\gU)}{r}\right)^{d_{\mathrm{eff}}} \\
    &= C(d_{\mathrm{eff}})\left(\tfrac{4L_\phi\, \mathrm{diam}(\gU)}{\eta}\right)^{d_{\mathrm{eff}}}.
\end{align*}
By triangle inequality and Lipschitzness, for every $\bm u\in\gU$, there exists $\bm u'\in\gN$ such that
\begin{align*}
    \|\phi_\theta(\bm u)-\phi_{\theta'}(\bm u)\|_2
    &\le \|\phi_\theta(\bm u')-\phi_{\theta'}(\bm u')\|_2 + 2L_\phi\, r \\
    &\le \|\phi_\theta(\bm u')-\phi_{\theta'}(\bm u')\|_2 + \eta/2.
\end{align*}
Hence a cover of $\{\phi_\theta(\cdot)\}$ on $\gN$ at scale $\eta/2$ yields a uniform cover on $\gU$ at scale $\eta$.

Note that 
\begin{align*}
&\log N_{\infty,2}(\eta,\Phi;\gN) \\
&\le \sum_{j=1}^m \log N_\infty \left(\tfrac{\eta}{\sqrt{m}},\Phi_j;\gN\right) \\ 
&\le \sum_{j=1}^m \mathrm{Pdim}(\Phi_j) \log \frac{C |\gN| \sqrt{m}}{\eta}.
\end{align*}
From \citet{AnthonyBartlett1999,Bartlett2019nearly}, using $\mathrm{Pdim}(\Phi)=\tilde O(m)$ for the coordinate-wise $[0,1]$-bounded ReLU features, the finite-set (size $|\gN|$) covering bound gives
\begin{align*}
    &\log N_{\infty,2} \left(\tfrac{\eta}{2},\Phi;\gN\right) \\
    &\lesssim \mathrm{Pdim}(\Phi)\left[\log \left(\tfrac{C \sqrt{m}}{\eta}\right)
    + d_{\mathrm{eff}}\log \left(\tfrac{C' L_\phi \mathrm{diam}(\gU)}{\eta}\right)\right] \\
    &\lesssim m\left[\log \left(\tfrac{C\sqrt{m}}{\eta}\right)
    + d_{\mathrm{eff}}\log \left(\tfrac{C' L_\phi \mathrm{diam}(\gU)}{\eta}\right)\right].
\end{align*}
Substituting $\eta=\delta/(2L_s)$ from Step~1 yields the sequential bound
\begin{align*}
    &\sup_{z}\log N^{\mathrm{seq}}_{2} \left(\tfrac{\delta}{2L_s},\gS;z\right) \\
    &\lesssim m\left[\log \left(\tfrac{\tilde C L_s\sqrt{m}}{\delta}\right)
    + d_{\mathrm{eff}}\log \left(\tfrac{\tilde C' L_\phi \mathrm{diam}(\gU) L_s}{\delta}\right)\right],
\end{align*}
uniformly in $z$. 

\emph{Step 3: Uniform cover of the decoder $\gR$.}
Fix a predictable input tree $z=\{(s_t(\xi_{1:t-1}),c_t(\xi_{1:t-1}))\}_{t\le k}$
with nodes in $\Delta^{m-1}\times\gC$. Fix $\delta\in(0,2B_M]$ and build a uniform grid on the output range $\gG := \left\{-B_M, -B_M+\delta, -B_M+2\delta, \dots, -B_M+J\delta\right\},
\, J:=\left\lceil\tfrac{2B_M}{\delta}\right\rceil$, so that for any $y\in[-B_M,B_M]$ there exists $q(y)\in\gG$ with $|y-q(y)|\le \delta/2$. Now consider the family $\gV$ of depth-wise constant predictable trees $v=\{v_t\}_{t\le k}$ defined by choosing, independently for each depth $t$, a grid value $g_t\in\gG$ and setting $v_t(\cdot)\equiv g_t$ (constant on all nodes at depth $t$). Then $|\gV|=|\gG|^k=(J+1)^k$.

Fix any decoder $\rho_\theta\in\gR$ and any path $\xi\in\{\pm1\}^k$. Along this path, we observe the length-$k$ sequence of decoder outputs $y_t := \rho_\theta\left(s_t(\xi_{1:t-1}),c_t(\xi_{1:t-1})\right)\in[-B_M,B_M]$.
Define the depth-wise grid sequence $g_t:=q(y_t)\in\gG$ and take the corresponding $v^\star\in\gV$ with $v^\star_t(\cdot)\equiv g_t$. Then, along the path $\xi$,
\[
    \frac1k\sum_{t=1}^k \left(v^\star_t(\xi_{1:t-1})-y_t\right)^2
    \le \frac1k\sum_{t=1}^k \left(\frac{\delta}{2}\right)^2
    = \left(\frac{\delta}{2}\right)^2,
\]
that is, $d_{2,\xi}\left(\rho_\theta\circ z, v^\star; z\right)\le \delta/2$.
Since this holds for every $\rho_\theta$ and every path $\xi$, the set $\gV$ is a sequential $(\delta/2)$–cover of $\gR$ on $z$. Therefore,
\[
    N^{\mathrm{seq}}_{2} \left(\tfrac{\delta}{2}, \gR; z\right)
    \le |\gV|
    = (J+1)^k
    \le \left(\tfrac{2B_M}{\delta}+2\right)^k.
\]
Taking logarithms yields 
\begin{align*}
    \sup_{z} \log N^{\mathrm{seq}}_{2} \left(\tfrac{\delta}{2}, \gR; z\right)
    &\le k \log \left(\tfrac{2B_M}{\delta}+2\right) \\
    &\lesssim k \log \left(\tfrac{C B_M}{\delta}\right).
\end{align*}
\end{proof}

\begin{proof}[Proof of Lemma~\ref{lem:additive}]
We will write $C,C(d),\dots$ for positive constants depending only on displayed arguments. Note that, w.r.t.\ $\ell_2$, the renormalization layer with parameter $\tau$ has Lipschitz constant $L_\text{renorm} \le \frac{2\sqrt{m}}{\tau}$. Since ReLU is $1$-Lipschitz and biases do not affect Lipschitz constants, the global Lipschitz modulus satisfies $\Lip\big(\gT_\theta\big)\le S(\gT_\theta)$ for ReLU network $\gT_\theta$.

\smallskip
\noindent\textbf{Step 1 (feature map: soft histogram).}
Fix
\[
    \delta:=\left(\frac{\eta}{8\sqrt{d_{\mathrm{eff}}}}\right)^{1/\alpha}\in(0,1),\qquad r:=\delta/4.
\]
Let $U\supset\gU$ be an axis-aligned cube with $\mathrm{dist}(\gU,\partial U)\ge r$, where $\mathrm{dist}(\gU,\partial U):=\inf\{\|\bm u-\bm u'\|:\bm u\in\gU, \bm u'\in\partial U\}$ denotes the Euclidean distance between $\gU$ and the boundary of $U$. Partition $U$ into a regular grid of closed cubes $\{Q_j\}_{j=1}^m$ of side length $\delta$, so that $m\asymp\delta^{-d_{\mathrm{eff}}}$; denote by $\bm q_j$ the center of $Q_j$ and set the representative point
\[
    \bm{r}_j\in\argmin_{\bm u\in\gU}\|\bm u - \bm q_j\|_2.
\]

Let $\kappa\in C_c^\infty(\R^{d_{\mathrm{eff}}})$ be a nonnegative and radially symmetric mollifier with $\int\kappa=1$ and $\mathrm{supp} \kappa\subset B(0,1)$. Put $ \kappa_r(\bm x):=r^{-d_{\mathrm{eff}}} \kappa(\bm x/r)$ and define
\[
 	\phi_j(\bm x):=(\1_{Q_j}* \kappa_r)(\bm x).
\]

Then $\mathrm{supp}\phi_j\subset Q_j^+:=\{\bm q:\mathrm{dist}(\bm q,Q_j)\le r\}$. 
Since the pairwise intersections of the grid cells have Lebesgue measure zero, we have $\sum_j \1_{Q_j}=\1_U$ almost everywhere, and because $B(\bm x,r)\subset U$ for all $\bm x\in\gU$, convolution with the unit-mass mollifier ignores these measure-zero discrepancies, yielding $\sum_j\phi_j(\bm x)=(\sum_j \1_{Q_j})* \kappa_r(\bm x)=\1_U* \kappa_r(\bm x)=1$ pointwise on $\gU$.
Also, by Young’s inequality, $\|\nabla\phi_j\|_\infty\le \|\mathbf 1_{Q_j}\|_\infty\|\nabla \kappa_r\|_1=\|\nabla \kappa\|_1\,r^{-1}$.
Since $r=\delta/4$, we get $\|\nabla\phi_j\|_\infty \le (4\|\nabla \kappa\|_1)\,\delta^{-1}=:C\delta^{-1}$, uniformly in $j$.
For $\bm{u}_{1:k}\in\gU^k$, define the soft histogram
\[
     s_j:=\frac1k\sum_{i=1}^k \phi_j(\bm u_i),\qquad \bm{s}=(s_1,\dots,s_m)\in\Delta^{m-1} 	.
\]

\smallskip
\noindent\textbf{Step 2 (decoder construction).}
For each fixed $\bm{c}$, define the ground cost on indices by
\[
    c^{(u)}(j,\ell):=\|\bm r_j - \bm r_\ell\|_2^\alpha,\qquad 0<\alpha\le1,
\]
and let $W_\alpha^{(u)}$ be the discrete 1-Wasserstein distance on the simplex $\Delta^{m-1}=\{\bm s\in[0,1]^m:\sum_js_j=1\}$ with cost $c^{(u)}$:
\[
\begin{aligned}
&W_\alpha^{(u)}(\bm{s},\bm{t})
:= \min_{\pi\ge0} \sum_{j,\ell} c^{(u)}(j,\ell)\,\pi_{j\ell} \\
&\quad \text{s.t.}\quad
\sum_\ell \pi_{j\ell}=s_j,\;
\sum_j \pi_{j\ell}=t_\ell .
\end{aligned}
\]
where $\bm{s},\bm{t}\in\Delta^{m-1}$. Note that $c^{(u)}$ is a metric since $0<\alpha\le1$. Let $\Delta_k:=\{\tfrac{\bm n}{k}:\bm n\in\{0,\dots,k\}^m, \sum_j n_j=k\}$. For $\bm{v}=\bm n/k\in\Delta_k$, define
\begin{align*}
    &\rho_{\bm{c}} \left(\bm{v}\right) \\
    &:= M_{\mathrm{Bayes}}(\underbrace{(\bm r_1,\bm{c}),\dots,(\bm r_1,\bm{c})}_{n_1},\dots,\underbrace{(\bm r_m,\bm{c}),\dots,(\bm r_m,\bm{c})}_{n_m}).
\end{align*}
This is well-defined by permutation invariance of $M_{\mathrm{Bayes}}$.

Let $\bm{s}=\bm{n}/k$ and $\bm t=\bm{n}'/k$ be points of $\Delta_k$. Construct an integer matrix $A=(A_{j\ell})$ with row sums $\bm{n}$ and column sums $\bm{n}'$ (e.g., by the Northwest corner rule \citep{peyré2020computational}), and set $\pi:=A/k$. Then $\pi\in\Pi(s,t)$ is a feasible transport plan. Enumerating the $k$ pairs so that $(\bm r_{j(i)},\bm r_{\ell(i)})$ appears exactly $A_{j\ell}$ times, the H\"older condition yields
\begin{align}
    \left|\rho_{\bm{c}}(\bm s)-\rho_{\bm{c}}(\bm t)\right|
    &\le \frac{L}{k}\sum_{i=1}^k \|\bm r_{j(i)}-\bm r_{\ell(i)}\|_2^{\alpha} \\
    &= L\sum_{j,\ell} c^{(u)}(j,\ell)\,\frac{A_{j\ell}}{k} \\
    &= L\sum_{j,\ell} c^{(u)}(j,\ell)\,\pi_{j\ell}, \label{eq:rho-lip-opt-trans}
\end{align}
where $c^{(u)}(j,\ell):=\|\bm r_j-\bm r_\ell\|_2^{\alpha}$. Since this bound holds for $\pi^*\in\Pi(\bm s,\bm t)$,
\begin{align}
    \left|\rho_{\bm{c}}(\bm s)-\rho_{\bm{c}}(\bm t)\right|
    \le L\,W^{(u)}_{\alpha}(\bm s,\bm t), 
     \label{eq:rho-lip-U}
\end{align}
which proves the $L$-Lipschitz property on $\Delta_k$.

Extend to all $\bm{s}\in\Delta^{m-1}$ by the McShane-type formula
\begin{align} \label{eq:McShane}
    \rho^\star_{\bm{c}}(\bm{s}):=\inf_{\bm{v}\in\Delta_k}\left\{\rho_{\bm{c}}(\bm{v})+L W_\alpha^{(u)}(\bm{s},\bm{v})\right\},
\end{align}
which satisfies $\rho^\star_{\bm{c}}(\bm{v})=\rho_{\bm{c}}(\bm{v})$ for $\bm{v}\in\Delta_k$ and, by the inequality \eqref{eq:rho-lip-U}, the Lipschitz property
\[
    |\rho^\star_{\bm{c}}(\bm{s})-\rho^\star_{\bm{c}}(\bm{t})| \le L W_\alpha^{(u)}(\bm{s},\bm{t})\qquad(\forall \bm{s},\bm{t}).
\]
By this construction, $\rho_{\bm{c}}^\star(\bm{v})=\rho_{\bm{c}}(\bm{v})$ holds. Indeed, for $\bm{v}\in\Delta_k$, taking $\bm{t}=\bm{v}$ in \eqref{eq:McShane} gives $\rho_{\bm{c}}^\star(\bm{v})\le\rho_{\bm{c}}(\bm{v})$. Conversely, the inequality \eqref{eq:rho-lip-U} implies $\rho_{\bm{c}}(\bm{v})\le \rho_{\bm{c}}(\bm{t})+L W_\alpha^{(u)}(\bm{t},\bm{v})$ for every $\bm{t}\in\Delta_k$, hence $\rho_{\bm{c}}(\bm{v})\le\inf_t \{\rho_{\bm{c}}(\bm{t})+L W_\alpha^{(u)}(\bm{v},\bm{t})\}=\rho_{\bm{c}}^\star(\bm{v})$. Therefore $\rho_{\bm{c}}^\star(\bm{v})=\rho_{\bm{c}}(\bm{v})$.

We next show its $L$-Lipschitzness. For any $\bm{s},\bm{t}$ and any $\bm{v}\in\Delta_k$, the triangle inequality yields $W_\alpha^{(u)}(\bm{s},\bm{v})\le W_\alpha^{(u)}(\bm{s},\bm{t})+W_\alpha^{(u)}(\bm{t},\bm{v})$. Taking infima over $\bm{v}$, $\rho_{\bm{c}}^\star(\bm{s})\le \rho_{\bm{c}}^\star(\bm{t})+L W_\alpha^{(u)}(\bm{s},\bm{t})$ and $\rho_{\bm{c}}^\star(\bm{t})\le \rho_{\bm{c}}^\star(\bm{s})+L W_\alpha^{(u)}(\bm{s},\bm{t})$, so $|\rho_{\bm{c}}^\star(\bm{s})-\rho_{\bm{c}}^\star(\bm{t})|\le L W_\alpha^{(u)}(\bm{s},\bm{t})$.

We also note its piecewise linearity. By the Kantorovich--Rubinstein dual~\citep{peyré2020computational} on a finite space,
\[
    W_\alpha^{(u)}(\bm{s},\bm{v})
    = \sup_{\bm{\varphi}\in\R^m: |\varphi_j-\varphi_\ell|\le c^{(u)}(j,\ell)} \langle \varphi,  \bm{s}-\bm{v}\rangle,
\]
so $\bm{s}\mapsto \rho^\star_{\bm{c}}(\bm{s})$ is the lower envelope of finitely many support functions and thus piecewise linear on $\Delta^{m-1}$.

\smallskip
\noindent\textbf{Step 3 (error decomposition and bounds).}
Adopt a half-open tie-breaking so that each $\bm{u}_i$ belongs to a unique cell $Q_{j(i)}$. Let the hard histogram be $\bm{h}:=\tfrac1k(n^{\mathrm{hard}}_1,\dots,n^{\mathrm{hard}}_m)$ with $n^{\mathrm{hard}}_j:=\#\{i:\bm u_i\in Q_j\}$. Then, with $\bm z_i=(\bm u_i,\bm{c})$ and using the H\"older condition while keeping $\bm{c}$ fixed,
\begin{align}
    &\left|M_{\mathrm{Bayes}}(\bm u_{1:k},\bm{c})-\rho_\theta(\bm{s},\bm{c})\right| \\
    &\le 
    \underbrace{\left|M_{\mathrm{Bayes}}(\bm u_{1:k},\bm{c})-M_{\mathrm{Bayes}}\left((\bm r_{j(1)},\bm{c}),\dots,(\bm r_{j(k)},\bm{c})\right)\right|}_{\text{quantization in $u$}} \nonumber \\
    &\quad +
    \underbrace{\left|\rho^\star_{\bm{c}}(\bm{h})-\rho^\star_{\bm{c}}(\bm{s})\right|}_{\text{hard-to-soft transport}}
    +
    \underbrace{\left|\rho^\star_{\bm{c}}(\bm{s})-\rho_\theta(\bm{s},\bm{c})\right|}_{\text{network approximation}}.
\end{align}

\emph{Quantization}: $\|\bm u_i-\bm r_{j(i)}\|_2\le \sqrt{d_{\mathrm{eff}}}\delta$, the H\"older condition gives
\begin{align*}
    &\left|M_{\mathrm{Bayes}}(\bm u_{1:k},\bm{c})-M_{\mathrm{Bayes}}\left((\bm r_{j(1)},\bm{c}),\dots,(\bm r_{j(k)},\bm{c})\right)\right| \\
    &\le \frac{L}{k} \sum_{i=1}^k \|\bm u_i - \bm r_{j(i)}\|_2^{\alpha} \\
    &\le C(d_{\mathrm{eff}})L\delta^\alpha.
\end{align*}
Moreover, $M_{\mathrm{Bayes}}\left((\bm r_{j(1)},\bm{c}),\dots,(\bm r_{j(k)},\bm{c})\right)=\rho_{\bm{c}}(\bm{h})=\rho^\star_{\bm{c}}(\bm{h})$.
 
\emph{Transport}: Define a coupling $\pi$ between $\bm{h}$ and $\bm{s}$ by moving, for each $i$, the mass $1/k$ placed at $\bm r_{j(i)}$ to the mixture $\sum_{j=1}^m \phi_j(\bm u_i) \delta_{\bm r_j}$:
\[
 	\pi_{j(i)\to j}^{(i)} := \frac1k \phi_j(\bm u_i),\qquad
 	\pi := \sum_{i=1}^k \sum_{j=1}^m \pi_{j(i)\to j}^{(i)}.
\]
Because $\sum_j\phi_j\equiv1$, $\pi$ has marginals $\bm{h}$ and $\bm{s}$, hence is feasible for $W_\alpha^{(u)}$. If $\phi_j(\bm u_i)>0$ then $\bm u_i\in Q_j^+$, and by the triangle inequality together with Step~1,
\[
 	\|\bm r_{j(i)}-\bm r_j\|_2
 	\le \|\bm r_{j(i)}-\bm u_i\|_2 + \|\bm u_i-\bm r_j\|_2
 	\le C(d_{\mathrm{eff}}) \delta.
\]
Therefore, with $W_\alpha^{(u)}$,
\begin{equation*}
 	W_\alpha^{(u)}(\bm{h},\bm{s})
 	\le \sum_{i=1}^k\sum_{j=1}^m \pi_{j(i)\to j}^{(i)} \|\bm r_{j(i)}-\bm r_j\|_2^\alpha
 	\le C(d_{\mathrm{eff}}) \delta^\alpha,
\end{equation*}
and since $\rho^\star_{\bm{c}}$ is $L$-Lipschitz w.r.t. $W_\alpha^{(u)}$,
\begin{equation*}
 	\left|\rho^\star_{\bm{c}}(\bm{h})-\rho^\star_{\bm{c}}(\bm{s})\right|
 	\le L W_\alpha^{(u)}(\bm{h},\bm{s})
 	\le C(d_{\mathrm{eff}}) L \delta^\alpha.
\end{equation*}

Combining the three bounds and using $\mathrm{diam}(\gU)\le1$ (so that $c^{(u)}(j,\ell)\le1$ and $W_\alpha^{(u)}\le \mathrm{TV}=\tfrac12\norm{\cdot}_1$), we obtain
\begin{align*}
    &\left|M_{\mathrm{Bayes}}(\bm u_{1:k},\bm{c})-\rho_\theta(\bm{s},\bm{c})\right| \\
    &\le C(d_{\mathrm{eff}})L \delta^\alpha  + \left|\rho^\star_{\bm{c}}(\bm{s})-\rho_\theta(\bm{s},\bm{c})\right|.    
\end{align*}
Finally choose $\rho_\theta$ so that $\sup_{(\bm s,\bm{c})}|\rho^\star_{\bm{c}}(\bm{s})-\rho_\theta(\bm s,\bm{c})|\le C L \delta^\alpha$ (Step~4(iii)). Then
\[
    \sup_{\bm{c}} \sup_{\bm u_{1:k}\in\gU^k}
    \left|M_{\mathrm{Bayes}}(\bm u_{1:k},\bm{c})-\rho_\theta(\bm{s},\bm{c})\right|
    \le C(d_{\mathrm{eff}})L \delta^{\alpha}.
\]
Choosing $\delta\asymp \eta^{1/\alpha}$ and $m\asymp\delta^{-d_{\mathrm{eff}}}$ yields the claimed bound $C(d_{\mathrm{eff}})L \eta$.

\smallskip
\noindent\textbf{Step 4 (Neural implementation).}
We first consider the joint regularity of $(\bm{s},\bm{c})\mapsto \rho^\star_{\bm{c}}(\bm{s})$ on the compact domain $\Delta^{m-1}\times\gC$.

\emph{(i) Joint Lipschitz in $(\bm{s},\bm{c})$.}
By Step~2, for each fixed $\bm{c}$ and all $\bm{s},\bm{s}'\in\Delta^{m-1}$,
\[
  |\rho^\star_{\bm{c}}(\bm{s})-\rho^\star_{\bm{c}}(\bm{s}')|
  \le L W_\alpha^{(u)}(\bm{s},\bm{s}').
\]
On the simplex, we have $W_\alpha^{(u)}(\bm{s},\bm{s}')\le \frac{\mathrm{diam}(\gU)^\alpha}{2}\|\bm{s}-\bm{s}'\|_1 \le \frac{\mathrm{diam}(\gU)^\alpha}{2}\sqrt{m}\|\bm{s}-\bm{s}'\|_2$: it follows from the trivial plan that transports the total variation mass across at most $\mathrm{diam}(\gU)^{\alpha}$.
Since $\mathrm{diam}(\gU)\le 1$,
\[
  |\rho^\star_{\bm{c}}(\bm{s})-\rho^\star_{\bm{c}}(\bm{s}')|
  \le C L \sqrt{m} \|\bm{s}-\bm{s}'\|_2.
\]
Next, fix $\bm{s}$ and vary $\bm{c},\bm{c}'$. From the H\"older assumption on $M_{\mathrm{Bayes}}$ applied to
$\bm z_i=(\bm r_{j(i)},\bm{c})$ and $\bm z'_i=(\bm r_{j(i)},\bm{c}')$ we obtain
$|\rho_{\bm{c}}(\bm v)-\rho_{\bm{c}'}(\bm v)|\le L \|\bm{c}-\bm{c}'\|_2^{\alpha}$ for all $\bm v\in\Delta_k$.
By the McShane envelope \eqref{eq:McShane}, $(\bm{s},\bm{c})\mapsto \rho^\star_{\bm{c}}(\bm{s})$ is
$\alpha$-H\"older in $\bm{c}$:$|\rho^\star_{\bm{c}}(\bm{s})-\rho^\star_{\bm{c}'}(\bm{s})|\le L \|\bm{c}-\bm{c}'\|_2^{\alpha}$.
To meet the size of networks in Definition~\ref{def:transformer}, we first apply a McShane-type $\alpha$-H\"older extension to the whole space $\R^{d_{\mathrm{feat}}}$, and then convolve only in the $\bm c$-direction with a standard mollifier~\citep[Appendix C.5 in][]{Evans2010PDE} $\eta_h$. This yields, for any $(\bm s,\bm c)$, $|\rho^\star_{\bm{c}}(\bm{s})-\rho^\sharp_{\bm{c}}(\bm{s})|\le \left|\int \big(\rho^\star_{\bm{c}}(\bm{s})-\rho^\star_{{\bm{c}}-h{\bm{z}}}(\bm{s})\big)\,\eta(\bm z)\,\dd \bm z\right| \le \int L\,\|h\bm z\|^\alpha\,\eta(\bm z)\,\dd \bm z \le C_\eta\,L\,h^\alpha $,
and $\Lip_c(\rho^\sharp) \lesssim h^{\alpha-1}$ uniformly in $(\bm s,\bm c)$. In what follows we approximate $\rho^\sharp$ by a ReLU network and keep the same notation $\rho_\theta$.

\emph{(ii) ReLU approximation of the feature map.}
As in the current proof, each $\bm u\mapsto\phi_j(\bm u)$ is $C^\infty$ on $[0,1]^{d_{\mathrm{eff}}}$ with $\|\nabla\phi_j\|_\infty\lesssim \delta^{-1}$, hence by ReLU approximation \citep{Yarotsky2017error} there exists a ReLU network of depth $O(\log(1/\eta_\phi))$ and size $O(m\log(1/\eta_\phi))$ that uniformly approximates $\phi=(\phi_1,\ldots,\phi_m)$ on $\gU$ with error $\eta_\phi\in (0,e^{-1})$. Additionally, we set spectral product
\[
    S(\phi_\theta) \asymp \delta^{-1} = m^{1/d_{\mathrm{eff}}},
\]
matching size of the Transformer in Definition~\ref{def:transformer}. After applying the fixed renormalization layer $\mathrm{Renorm}_\tau:\R^m \to \Delta^{m-1}$, the features are simplex-valued.

\emph{(iii) ReLU approximation of the decoder.}
On the compact set $\Delta^{m-1}\times\gC$, the map $(\bm{s},\bm{c})\mapsto \rho^\sharp_{\bm{c}}(\bm{s})$ is jointly Lipschitz with moduli $(L_s,L_c)$ from (i), and for each fixed $\bm{c}$ it is piecewise-linear in $\bm{s}$ (lower envelope of affine forms by the KR dual). Therefore, by standard approximation results for Lipschitz targets on a compact domain~\citep{Yarotsky2017error}, there exists a ReLU network $\rho_\theta:\Delta^{m-1}\times\gC\to\R$ such that
\[
  \sup_{(\bm{s},\bm{c})} |\rho^\sharp_{\bm{c}}(\bm{s})-\rho_\theta(\bm{s},\bm{c})| \le C L \delta^\alpha .
\]
Moreover, by spectral normalization of the linear layers, we can enforce $\Lip_s(\rho_\theta) \le c L_s = c C L\sqrt{m}$ and $\Lip_{\bm{c}}(\rho_\theta) \le c L_c = c L\delta^{\alpha-1}$, so the decoder's spectral product can be taken as
\[
  S(\rho_\theta) = O \big(L\sqrt{m}+L\delta^{\alpha-1}\big)
\]
under the $\ell_2$-metric used. Note that $\delta^{\alpha-1} = O(m^{(1-\alpha)/d_{\mathrm{eff}}})= O(\sqrt{m})$ as $d_{\mathrm{eff}}\ge 2$. Note that the number of parameters of the decoder does not affect the upper bound of the predictive risk in Theorem~\ref{thm:upper-bayes}. Instead, we evaluate the complexity regarding the decoder by counting the number of $\delta$-cubes to cover the space of length-$k$ sequences (see proof of Lemma~\ref{lem:seq-cover}, Step 3).

Finally, combining (ii)–(iii) with Step~3 and taking $\eta_\phi=1/m$, we obtain
\begin{align*}
 &\sup_{\bm{c}} \sup_{\bm u_{1:k}\in\gU^k}
 \Big|
  M_{\mathrm{Bayes}}(\bm u_{1:k},\bm{c}) - \rho_\theta \Big(\tfrac1k\sum_{i=1}^k \phi(\bm u_i),\bm{c}\Big)
 \Big| \\
  &\le C(d_{\mathrm{eff}}) L \delta^\alpha .    
\end{align*}
Choosing $\delta\asymp \eta^{1/\alpha}$ and $m\asymp\delta^{-d_{\mathrm{eff}}}$ yields the lemma.
\end{proof}

\begin{proof}[Proof of Lemma~\ref{lem:oracle}]
Recall that we work on standard Borel measurable spaces (Borel $\sigma$-fields of Polish spaces), so regular conditional probabilities exist; see, e.g., \citet{Durrett_2019}. Consequently, there exists a (measurable) version of the posterior probability kernel $D^k\mapsto \Pr(f\in\cdot\mid D^k)$, unique $\Pr$-a.s.; we fix one such version (extending it arbitrarily on a $\Pr$-null set) once and for all. All conditioning statements below, including $\E[f(\bm x_{k+1})\mid D^k]$ and $\Var(f(\bm x_{k+1})\mid D^k)$, are taken with respect to this fixed version.

A technical point concerns the measurability of suprema over the parameter space $\Theta$, which is required for expectations to be well-defined. Note that under our assumptions, the parameter space $\Theta$ is separable and, for any fixed sample, $\theta\mapsto(y-M_\theta(P))^2$ is continuous, so the relevant random suprema are measurable.

\smallskip
\noindent\textbf{Step 1 (Reduction via a centered, Bayes-offset objective).}
For each block $j$, write
\begin{align*}
    \Lambda_j(\theta)
    &:=\frac1p\sum_{k=1}^p\left(y_{j,k+1}-M_\theta(P_j^k)\right)^2 \\
    &=A_j(\theta)+B_j(\theta)+C_j,
\end{align*}
where 
\begin{align*}
    A_j(\theta):=\frac1p & \sum_{k=1}^p  \left(M_{\mathrm{Bayes}}(P_j^k)-M_\theta(P_j^k)\right)^2  , \\ 
    B_j(\theta):=\frac{2}{p} & \sum_{k=1}^p  \left(y_{j,k+1}-M_{\mathrm{Bayes}}(P_j^k)\right) \\ 
    &\quad \times \left(M_{\mathrm{Bayes}}(P_j^k)-M_\theta(P_j^k)\right) ,
\end{align*}
and $C_j:=\frac1p\sum_{k=1}^p\left(y_{j,k+1}-M_{\mathrm{Bayes}}(P_j^k)\right)^2$, which does not depend on $\theta$.
Define the centered (Bayes-offset) empirical objective
\[
\widehat{\gR}(\theta):=\frac{1}{N}\sum_{j=1}^N\widetilde{\Lambda}_j(\theta),
\quad
\widetilde{\Lambda}_j(\theta):=A_j(\theta)+B_j(\theta).
\]
Then $\arg\min_\theta \frac1N\sum_j\Lambda_j(\theta)=\arg\min_\theta \widehat{\gR}(\theta)$, i.e., the ERM $\hat\theta$ is unchanged by the offset. Define the population counterpart $\gR(\theta):=\E[\widetilde{\Lambda}_j(\theta)]$; using $\E[y-M_{\mathrm{Bayes}}(P)\mid P]=0$,
\[
\gR(\theta)=\E\big[(M_{\mathrm{Bayes}}(P)-M_\theta(P))^2\big]=R_{\mathrm{BG}}(M_\theta).
\]
Let $\theta^\star\in\arg\min_\theta \gR(\theta)$. Then
\begin{align}
&R_{\mathrm{BG}}(M_{\hat\theta})-R_{\mathrm{BG}}(M_{\theta^\star}) \\
&=\gR(\hat\theta)-\gR(\theta^\star) \nonumber \\
&= \gR(\hat\theta) - \hat{\gR}(\hat\theta) + \hat{\gR}(\hat\theta) - \hat{\gR}(\theta^\star) + \hat{\gR}(\theta^\star) -\gR(\theta^\star) \nonumber \\
&\le \gR(\hat\theta) - \hat{\gR}(\hat\theta) + \hat{\gR}(\theta^\star) -\gR(\theta^\star)   \label{eq:basic}
\end{align}
and hence $\E[R_{\mathrm{BG}}(M_{\hat\theta})-R_{\mathrm{BG}}(M_{\theta^\star})] \le \E[\gR(\hat\theta) - \hat{\gR}(\hat\theta)]   \le \E\left[\sup_{\theta} |\gR(\theta) - \hat{\gR}(\theta)|\right]  $. 

\smallskip
\noindent\textbf{Step 2 (Localization at worst–path sequential radius).}
Let $h_\theta:=M_\theta-M_{\mathrm{Bayes}}$. Fix a $\gZ$–valued predictable tree $Z=(Z_k)_{k=1}^p$ of depth $p$ that is decoupled tangent to the prompt process, in the sense that for each depth $k$ and each past $\xi_{1:k-1}\in\{\pm1\}^{k-1}$, the conditional distribution of $Z_k(\xi_{1:k-1})$ equals the conditional distribution of $P^k$ given $D^{k-1}$ \citep[][]{pena1999decoupling,Rakhlin2015}; namely $Z_k(\xi_{1:k-1})\mid D^{k-1} \stackrel{d}{=} P^k\mid D^{k-1}$.
Conditioning on a realization $Z=z$, we refer to such $z$ as a data-containing tangent tree.
For any realization $z$ of $Z$, define the worst–path sequential $\ell_2$ radius
on $z$ by
\[
    \|h\|_{\mathrm{seq},2;z}
    :=\left\{ \sup_{\xi\in\{\pm1\}^p} \frac1p\sum_{k=1}^p  h\left(z_k(\xi_{1:k-1})\right)^2 \right\}^{1/2}.
\]
For $r>0$, we localize by the uniform worst–path radius
\[
    \gH(r):=\left\{h_\theta=M_\theta-M_{\mathrm{Bayes}} : \sup_z \|h_\theta\|_{\mathrm{seq},2;z}\le r \right\}.
\]
Then, for any $\theta$ such that $h_\theta\in\gH(r)$, since $h_\theta$ is a bounded measurable function,
$R_{\mathrm{BG}}(M_\theta)
    = \frac1p\sum_{k=1}^p \E_{P^k}\left[ h_\theta\left(P^k\right)^2\right]
    = \E_{Z,\xi}\left[\frac1p\sum_{k=1}^p h_\theta\big(Z_k(\xi_{1:k-1})\big)^2\right]
    \le \sup_{z}\|h_\theta\|_{\mathrm{seq},2;z}^2
    \le r^2.
$
Hence, $h_\theta\in\gH(r)$ implies $R_{\mathrm{BG}}(M_\theta)\le r^2$.

\smallskip
\noindent\textbf{Step 3 (High–probability envelope for the squared loss).}
Let $\delta:=(pN)^{-2}$ and define the event
\[
  \gE
  :=\left\{\max_{j\in[N],k\in[p]}|\varepsilon_{j,k+1}|
  \le t_\delta\right\},
\]
where $t_\delta:=\sigma_\varepsilon\sqrt{2\log\Big(\frac{2pN}{\delta}\Big)}$.
By the sub-Gaussian tail bound and a union bound, $\Pr(\gE^c)\le\delta$.
On $\gE$, for every $(j,k)$ and every $\theta\in\Theta$ we have $\big|y_{j,k+1}-M_\theta(P_j^k)\big|
 \le |f(\bm{x}_{j,k+1})|+|\varepsilon_{j,k+1}|+|M_\theta(P_j^k)|
 \le B_f+t_\delta+B_M
 =: \widetilde B,$
hence, using $\delta=(pN)^{-2}$,
\begin{align*}
    \widetilde B &= B_f+B_M+\sigma_\varepsilon\sqrt{2\log\Big(\frac{2pN}{\delta}\Big)} \\
    &\le B_f+B_M+\sigma_\varepsilon\sqrt{6\log(2pN)}.    
\end{align*}
We first carry out the analysis on $\gE$ (where the above envelope holds) and add a negligible $O(\delta)$ contribution to expectations in Step~7.

\smallskip
\noindent\textbf{Step 4 (Block symmetrization for the centered objective).}
We work directly with the centered blocks $\widetilde{\Lambda}_j(\theta)=A_j(\theta)+B_j(\theta)$ and their mean:
\[
    \sup_{\theta\in\Theta}\left|(\widehat{\gR}-\gR)(\theta)\right|
    =\sup_{\theta\in\Theta}\left|\frac{1}{N}\sum_{j=1}^N\widetilde{\Lambda}_j(\theta)-\E\widetilde{\Lambda}_j(\theta)\right|.
\]
Since $\widetilde{\Lambda}_1,\dots,\widetilde{\Lambda}_N$ are i.i.d., standard symmetrization with Rademacher variables $(\epsilon_j)_{j=1}^N$, Cauchy–Schwarz inequality and Jensen inequality give $\E[\sup_{\theta}|(\widehat{\gR}-\gR)(\theta)| ]\le \frac{C}{\sqrt{N}}(\E\sup_{\theta}\widetilde{\Lambda}_1(\theta)^2)^{1/2}$.

We decompose $\widetilde{\Lambda}_1(\theta)=A_1(\theta)+B_1(\theta)$.
From the definition of $\gH(r)$, $\E[\sup_{\theta\in \gH(r)} A_1^2(\theta)]^{1/2}\le r^2$. We then analyze $B_1^2(\theta) = \{\frac2p\sum_{k=1}^p (y_{1,k+1} - M_{\mathrm{Bayes}}(P_1^k) )(M_{\mathrm{Bayes}}(P_1^k) -M_\theta(P_1^k) )\}^2$. Note that $|M_{\mathrm{Bayes}}(P_1^k) -M_\theta(P_1^k) | \le B_f+B_M$ and $B_1$ is constructed by a martingale difference sequence with filtration $\gG'_k$. Since $\E[X^2] = 2\int_{0}^{\infty} t \Pr(|X|>t) \dd t \le 2\int_{t_0}^{\infty} t \Pr(|X|>t) \dd t + 2\int_{0}^{t_0} t \dd t$, evaluation of the tail probability from Lemma~8 in \cite{Rakhlin2015} yields
\[
    \E \left[ \sup_{\theta} \left| B_1(\theta)\right|^2 \right]^{1/2} \lesssim \widetilde{B} \log^{3}p   \mathfrak{R}_p^{\mathrm{seq}}\left(\gH(r)\right),
\]
with the depth-$p$ sequential Rademacher complexity $\mathfrak{R}_p^{\mathrm{seq}}(\gF)\!:= \!\sup_z\! \E_{\xi}\![\sup_{f\in\gF}\frac1p\sum_{t=1}^p \xi_t f(z_t(\xi_{1:t-1}))]$. Therefore
\[
    \E\sup_{\theta\in\gH(r)}\left|(\widehat{\gR}-\gR)(\theta)\right|
    \! \lesssim\! \frac{1}{\sqrt{N}}\left\{\log^{3}p\,\mathfrak{R}_p^{\mathrm{seq}}(\gH(r)) + r^2\right\}.
\]

\smallskip
\noindent\textbf{Step 5 (Sequential Dudley bound).}
The sequential Dudley integral bound~\citep[Corollary~10]{BlockDaganRakhlin2021} gives, for an absolute constant $C>0$,
\begin{align}
    &\mathfrak{R}^{\mathrm{seq}}_{p}\left(\gH(r)\right) \\
    &\le C \inf_{\alpha>0} \left\{\alpha + \frac{1}{\sqrt p}\int_{\alpha}^{D_r}\sup_{z}
   \sqrt{\log N'\left(\delta,\gH(r);z\right)} \dd \delta\right\},   \label{eq:seq-dudley-worst}
\end{align}
where $D_r$ denotes $\mathrm{diam}(\gH(r))$ and $N'$ denotes the fractional covering number~\citep{BlockDaganRakhlin2021}. Note that since every $h\in\gH(r)$ satisfies $\|h\|_{\mathrm{seq},2;z}\le r$, the diameter under the path $\ell_2$ metric is at most $2r$, so the upper limit can be replaced by $2r$, from Lemma~7 in \citet{BlockDaganRakhlin2021},
\begin{align}\label{eq:seq-dudley-worst-r}
    &\mathfrak{R}^{\mathrm{seq}}_{p}\left(\gH(r)\right) \\
    &\le
    C \inf_{\alpha>0}
    \left\{\alpha + \frac{1}{\sqrt p}\int_{\alpha}^{2r} \sup_{z}
    \sqrt{\log N^{\mathrm{seq}}_2\left(\delta,\gH(r);z\right)} \dd \delta\right\}.
\end{align}
From Lemma~\ref{lem:seq-cover}, for universal constants $C_0,C_1>0$ and all $\delta\in(0,2r]$,
$
    \sup_{z} \log N^{\mathrm{seq}}_2\left(\delta,\gH(r);z\right)
    \le
    C_0 m \log \left(\tfrac{\sqrt{m}}{\delta}\right)
    + C_1 p \log \left(\tfrac{1}{\delta}\right).
$
Plugging this into \eqref{eq:seq-dudley-worst-r} and optimizing over $\alpha$ absorbs polylogarithmic factors to give the succinct bound
\begin{equation}\label{eq:seq-rad-final}
    \mathfrak{R}^{\mathrm{seq}}_{p}\left(\gH(r)\right)
    \lesssim
     r \frac{\sqrt{m+p}}{\sqrt p}\sqrt{\log\left(\frac{m}{r}\right)}.
\end{equation}

\smallskip
\noindent\textbf{Step 6 (Self–bounding fixed point).}

Let $\Delta_\theta(P^k):=M_\theta(P^k)-M_{\mathrm{Bayes}}(P^k)$, $\ell_\theta(P^k,y_{k+1}):=\{y_{k+1}-M_\theta(P^k)\}^2$, $\ell^{\mathrm{Bayes}}(P^k,y_{k+1}):=\{y_{k+1}-M_{\mathrm{Bayes}}(P^k)\}^2$. Then
$R_{\mathrm{BG}}(M_\theta)=\frac1p\sum_{k=1}^p \E[\Delta_\theta(P^k)^2]$ and
\[
\ell_\theta-\ell^{\mathrm{Bayes}}
=\Delta_\theta^2-2\Delta_\theta\{y-M_{\mathrm{Bayes}}(P^k)\}.
\]
Hence $\E[\ell_\theta-\ell^{\mathrm{Bayes}}\mid P^k]=\Delta_\theta^2$, and using
$|\Delta_\theta|\le B_M+B_f$ and $\E\{y-M_{\mathrm{Bayes}}(P^k)\}^2\le C(B_f,B_M,\sigma_\varepsilon)$,
\begin{equation}\label{eq:bernstein}
\E\big[(\ell_\theta-\ell^{\mathrm{Bayes}})^2\big]
 \le  C_0\,\E[\Delta_\theta^2]
 =  C_0\,R_{\mathrm{BG}}(M_\theta),
\end{equation}
that is, a Bernstein condition with exponent~$1$ for the excess loss holds.

For $r>0$, set
\[
\Theta(r):=\left\{\theta\in\Theta:\ R_{\mathrm{BG}}(M_\theta)-\inf_{\vartheta\in\Theta}R_{\mathrm{BG}}(M_\vartheta)\le r\right\}.
\]
By the standard symmetrization, we have $\E\big[\sup_{\theta\in\Theta(r)}
\big|(\widehat{\mathcal R}-\mathcal R)(\theta)-(\widehat{\mathcal R}-\mathcal R)(\theta^\star)\big|\big]\lesssim
\frac{1}{\sqrt{N}}\{
\mathfrak R^{\mathrm{seq}}_{p}\big(\mathcal H(r)\big)+\sqrt{\,r+R_{\mathrm{BG}}(M_{\theta^\star})\,}\}$. Then, from Lemma~4 and 
Corollary~10~in \citet{BlockDaganRakhlin2021}, there exists a constant $c>0$ (range rescaling absorbed into $c$) such that $
\E\big[\sup_{\theta\in\Theta(r)}\!\big|(\widehat{\mathcal R}-\mathcal R)(\theta)-(\widehat{\mathcal R}-\mathcal R)(\theta^\star)\big|\big]
\lesssim\sqrt{\frac{r+R_{\mathrm{BG}}(M_{\theta^\star})}{N}}\big(1+\sqrt{\log\frac{c_1}{r+R_{\mathrm{BG}}(M_{\theta^\star})}}+\sqrt{\frac{m}{p}}\;\sqrt{\log\frac{c_2\sqrt m}{r+R_{\mathrm{BG}}(M_{\theta^\star})}}\big).$

By the basic inequality in \eqref{eq:basic}, if $\E[\sup_{\theta\in\Theta(r)}
|\,(\widehat \gR-\gR)(\theta)-(\widehat \gR-\gR)(\theta^\star)\,|]
\le  \frac{r}{8} + cR_{\mathrm{BG}}(M_{\theta^\star})$ with $c=o(1)$, then the ERM satisfies $R_{\mathrm{BG}}(M_{\hat\theta})-(1+c)R_{\mathrm{BG}}(M_{\theta^\star})\le r/2$. Let the critical radius $r_\star$ be the smallest $r>0$ solving $\tfrac{r}{8} \asymp\ \sqrt{\tfrac{r}{N}}(\sqrt{\tfrac{m}{p}}+1)$. Hence $r_\star\ \asymp\ \frac{1}{N}(\tfrac{m}{p}+1)$.
Then, the ERM obeys $\E[R_{\mathrm{BG}}(M_{\hat\theta})] \lesssim\inf_{\theta\in\Theta} R_{\mathrm{BG}}(M_\theta)+   \frac{1}{N}(\tfrac{m}{p}+1)$.

\smallskip
\noindent\textbf{Step 7 (Control on $\gE^c$).}
Recall $\gE:=\{\max_{j,k}|\varepsilon_{j,k+1}|
 \le t_\delta\}$ with $t_\delta:=\sigma_\varepsilon\sqrt{2\log(2pN/\delta)}$.
By sub-Gaussian tails and a union bound,
\begin{align*}
   \Pr(\gE^c)
   &= \Pr\Big(\exists(j,k):|\varepsilon_{j,k+1}|>t_\delta\Big) \\ 
   &\le 2pN\exp \Big(-\frac{t_\delta^2}{2\sigma_\varepsilon^2}\Big) \\ 
   &\le \delta.
\end{align*}

Let $\tilde{T} := \sup_{\theta\in\Theta}\left|(\widehat{\gR}-\gR)(\theta)\right|$ and note that, by the identity $(y-M_\theta)^2-(y-M_{\mathrm{Bayes}})^2=(M_{\mathrm{Bayes}}-M_\theta)\{2y-M_\theta-M_{\mathrm{Bayes}}\}$,
Assumptions~\ref{ass:bounded_f} and \ref{ass:bounded_x} imply a quadratic envelope of the form
\[
\tilde{T}
\ \le\
C\Big\{
(B_f{+}B_M)^2
+\frac{1}{pN}\sum_{j,k}\varepsilon_{j,k+1}^2
+\E\varepsilon^2
\Big\}.
\]
for some constant $C>0$ (where $C, C', \ldots$ below are universal constants).
Thus,
\begin{equation}\label{eq:tilde-T}
    \tilde{T} \le C\Bigg\{(B_f{+}B_M)^2+\frac{1}{pN}\sum_{j,k}\varepsilon_{j,k+1}^2+\sigma_\varepsilon^2\Bigg\}.
\end{equation}
To bound the expectation of $\tilde{T}$ on $\gE^c$, we bound the tail of the second moment of each $\varepsilon$.
For any $t>0$, from the sub-Gaussian ($\psi_2$) tail probability,
\begin{align*}
    \E \left[\varepsilon^2\1_{\{|\varepsilon|>t\}}\right]
    &= \int_{t}^{\infty}2x \Pr(|\varepsilon|>x) \dd x \\ 
    &\le 2\int_{t}^{\infty}2x\exp \Big(-\frac{x^2}{2\sigma_\varepsilon^2}\Big) \dd x \\
    &\le 4\sigma_\varepsilon^2\exp \Big(-\frac{t^2}{2\sigma_\varepsilon^2}\Big).    
\end{align*}
Substituting $t=t_\delta$ yields $\E[\varepsilon^2\1_{\{|\varepsilon|>t_\delta\}}]\le 2\sigma_\varepsilon^2 \delta/(pN)$.
Furthermore, by the decomposition $\varepsilon_{j,k+1}^2 \1_{\gE^c}
 \le \varepsilon_{j,k+1}^2 \1_{\{|\varepsilon_{j,k+1}|>t_\delta\}}
 + t_\delta^2 \1_{\gE^c}$, it follows that
\begin{align}
    &\frac{1}{pN}\sum_{j,k}\E \left[\varepsilon_{j,k+1}^2 \1_{\gE^c}\right] \\
    &\le\ \underbrace{\frac{1}{pN}\sum_{j,k}\E \left[\varepsilon_{j,k+1}^2 \1_{\{|\varepsilon_{j,k+1}|>t_\delta\}}\right]}_{\le 2\sigma_\varepsilon^2 \delta/(pN) }+ \ t_\delta^2 \Pr(\gE^c). \label{eq:eps-square}
\end{align}
Combining \eqref{eq:tilde-T} and \eqref{eq:eps-square}, we get
$
\E\left[\tilde{T} \1_{\gE^c}\right]
 \le\
C\left\{(B_f{+}B_M)^2+\sigma_\varepsilon^2\right\}\Pr(\gE^c)
 +\
C'\left\{\sigma_\varepsilon^2 \delta + t_\delta^2 \Pr(\gE^c)\right\}.
$
Substituting $t_\delta^2=2\sigma_\varepsilon^2\log \frac{2pN}{\delta}$ and
$\Pr(\gE^c)\le\delta$, we have
$
\E\left[\tilde{T} \1_{\gE^c}\right]
 \le\
C (B_f{+}B_M)^2 \delta
 +\
C' \sigma_\varepsilon^2 \delta\log \frac{2pN}{\delta}
 +\
C'' \sigma_\varepsilon^2 \delta.
$
Finally, by using $\delta=(pN)^{-2}$, the right-hand side becomes
$O \big(\sigma_\varepsilon^2 (\log pN)/(pN)^2\big)$,
which is negligible compared to the main term from Step~5. 
\end{proof}



\end{document}